\newtheorem{proposition}{Proposition}
\newtheorem{theorem}{Theorem}[section]
\newtheorem{definition}{Definition}[section]
\newtheorem{claim}[theorem]{Claim}
\newcommand{\E}[2]{\mathbb{E}_{#1}\left[    #2   \right]}
\renewcommand{\O}[1]{O \left(  #1  \right)}
\newcommand{\knorm}[1]{\left | \left | #1 \right | \right |}
\newcommand{\floor}[1]{\lfloor #1 \rfloor}
\newcommand{\cD}{{\mathcal D}}
\newcommand{\cB}{{\mathcal B}}
\newcommand{\cY}{{\mathcal Y}}
\newcommand{\cX}{{\mathcal X}}
\newcommand{\cH}{{\mathcal H}}
\newcommand{\cW}{{\mathcal W}}
\DeclareMathOperator*{\argmin}{arg\,min}
\newcommand\bV{\boldsymbol V}
\newcommand\bx{\boldsymbol x}
\newcommand\bX{\boldsymbol X}
\newcommand\RR{\mathbb R}
\newcommand\EE{\mathbb E}
\newcommand\lgl{\langle}
\newcommand\rgl{\rangle}
\newcommand{\cZ}{\mathcal{Z}}
\newcommand{\bU}{{\mathbf{U}}}
\newcommand{\bM}{{\mathbf{M}}}
\newcommand{\bw}{{\mathbf{w}}}
\newcommand{\by}{{\mathbf{y}}}
\newcommand{\beeta}{{\boldsymbol{\beta}}}
\newcommand{\bnu}{{\boldsymbol{\nu}}}
\theoremstyle{plain}
\theoremstyle{plain}
\theoremstyle{plain}
\theoremstyle{remark}
\theoremstyle{definition}
\DeclarePairedDelimiter\abs{\lvert}{\rvert}%
\DeclarePairedDelimiter\norm{\lVert}{\rVert}%
\begin{document}

\title[Applying statistical learning theory for deep learning]{Applying statistical learning theory to deep learning}

\author{C\'edric Gerbelot$^{1}$, Avetik Karagulyan$^{2}$, Stefani Karp$^{3}$, Kavya Ravichandran$^{4}$, Menachem Stern$^{5}$ and Nathan Srebro$^{4}$}

\address{$^{1}$ Courant Institute of Mathematical Sciences, New York, NY 10012, USA \\
$^{2}$ King Abdullah University of Science and Technology, Thuwal 23955, Saudi Arabia \\
$^{3}$ Carnegie Mellon University, Pittsburgh, PA, and Google Research, NY, USA \\
$^{4}$ Toyota Technological Institute at Chicago, Chicago, Illinois 60637, USA \\
$^{5}$ Department of Physics \& Astronomy, University of Pennsylvania, Philadelphia, PA 19104-6396, USA}
\ead{cedric.gerbelot@cims.nyu.edu, avetik.karagulyan@kaust.edu.sa, shkarp@cs.cmu.edu, kavya@ttic.edu, nachi@sas.upenn.edu, nati@ttic.edu}
\vspace{10pt}
\begin{indented}
\item[]
\end{indented}

\begin{abstract}
    Although statistical learning theory provides a robust framework to understand supervised learning, many theoretical aspects of deep learning 
    remain unclear, in particular how different architectures may lead to inductive bias when trained using gradient based methods. The goal of these lectures is to provide an 
    overview of some of the main questions that arise when attempting to understand deep learning from a learning theory perspective. After a brief reminder on 
    statistical learning theory and stochastic optimization, we discuss implicit bias in the context of benign overfitting. We then move to a general description 
    of the mirror descent algorithm, showing how we may go back and forth between a parameter space and the corresponding function space for a given learning problem, as well as how the geometry of the learning problem may be represented by a metric tensor. Building on this framework, 
    we provide a detailed study of the implicit bias of gradient descent on linear diagonal networks for various regression tasks, showing how the loss function, scale of parameters at initialization and depth of the network may lead 
    to various forms of implicit bias, in particular transitioning between kernel and feature learning regimes.
\end{abstract}
\newpage
\tableofcontents
\newpage
\section{Lecture 1: Applying statistical learning theory to deep learning}

\subsection{Preamble}

The purpose of this first lecture is to provide a mathematical framework that allows 
to introduce the notion of inductive (or implicit) bias in a clear way. We will see that 
this notion is already present in the usual formulation of supervised learning, as it implicitly appears when preferring certain 
classes of model over others in empirical risk minimization procedures. While the statistical aspect of supervised 
learning can be understood by measuring the complexity of classes of predictors and introducing an associated notion of inductive bias, the computational aspect 
also requires an assumption of implementability. This is especially relevant in the context of deep learning, as the expressivity of neural networks 
can seemingly lead to the representation and learning of arbitrary functions. This will naturally lead us 
to the notion of computationally efficient learning rule, and to turn our attention to the implicit bias associated with the optimization aspect of 
supervised learning. 

\subsection{Inductive bias in supervised learning}

The main idea of supervised learning is to find a predictor, i.e. a mapping $h$ belonging to some function space $\mathcal{H}$, from inputs or instances $\mathcal{X}$ (e.g. images, sentences) to labels $\mathcal{Y}$ (e.g. classes) in order to predict the labels of new instances. In the simplest setting we'll just think of $y\in\{\pm 1\}$. To do so, we introduce a loss function $loss : \mathcal{Y} \times \mathcal{Y} \to \mathbb{R}_{+}$ which quantifies the error 
made by our predictor for a given labeled example $(x,y) \in \mathcal{X} \times \mathcal{Y}$. The goal is to find a predictor that has a small generalization loss $L(h)$ defined as the expected value of our loss function $loss$ with respect to a source distribution $\mathcal{D}$. Intuitively, this amounts to find $h$ such that our predictions tend to reproduce the hidden joint distribution. This is 
formalized in the following definition :

\begin{align} 
\mbox{\textul{Supervised learning}:\quad } &\mbox{find } h: \mathcal{X}\rightarrow \mathcal{Y} \mbox{ with small \textit{generalization error}, defined as},  \nonumber
\\
&L(h) = \mathbb{E} _{(x,y)\sim\mathcal{D}} [ loss(h(x);y) ] \nonumber
\end{align}

A central aspect of machine learning is to design this predictor $h$, not based on knowledge of the population $\mathcal{D}$, but rather based on an IID sample $\mathcal{S}$ (to which we have access to through an experiment, a dataset, ...) from that population. We attempt to find a good \textit{learning rule}, i.e. a mapping $A : \mathcal{S} \to \mathcal{H}$, that produces a predictor with small error for any population. 

\begin{align} 
	\mbox{\textul{Learning rule}: } &\mbox{(based on sample $S$)} \nonumber
	\\
 &A:\ S\rightarrow h\quad \mbox{(i.e. $A:(\mathcal{X}\times\mathcal{Y}) \rightarrow \mathcal{Y}^\mathcal{X}$)} \nonumber
\end{align}

Unfortunately, this is impossible. The `no free lunch' theorem , a terminology originally introduced in \cite{wolpert1997no}, tells us that small generalization error requires knowledge about the population. For any learning rule, there exists some distribution $\mathcal{D}$ (that is, some reality) for which the learning rule yields an expected error that is tantamount to randomly guessing the answer (e.g. $1/2$ for binary classification). More formally, for any learning rule $A$ based on an IID sample $\mathcal{S}$ of size $m$, there exists a distribution $\mathcal{D}$ such that there exists a predictor $h^{*}$ verifying $L(h^{*}) = 0$, but

$$
\mathbb{E}_{S\sim\mathcal{D}^m}[L(A(S))]\geq \frac{1}{2}-\frac{m}{2|\mathcal{X}|},
$$
where $\vert \mathcal{X} \vert$ designates the cardinality of the input space $\mathcal{X}$, which may be infinite.
This is true not only for independent $x,y$, but also if there exists a deterministic relation $y(x)$, so that a predictor does exist. The supposed improvement over $1/2$ is proportional to the size of the dataset $m/|\mathcal{X}|$ and is due to memorization of that dataset, and vanishes when the population size is large.
  
Thus, learning is impossible without assuming anything. This is where inductive bias becomes an essential part of learning. We assume that some realities (populations $\mathcal{D}$) are unlikely, and design the learning rule $A$ to work for the more likely realities, e.g. by preferring certain models $h(x)$ over others. More practically, we assume that the reality $\mathcal{D}$ has a certain property which ensures the learning rule $A$ will have a good generalization error. Typically, we assume that there exist models $h(x)$ in some class $\mathcal{H}$, or with low ``complexity'', denoted $c(h)$, such that it has low generalization error $L(h)$. An example are models where the output $y$ changes smoothly with the input $x$, where the complexity of the model can be captured by its total variation or an appropriate Sobolev norm. Another example is ridge regression, that prefers linear models, in which case the complexity will be measured with the norm of the weights of the predictor.

A flat inductive bias embodies the assumption that some realities are possible and others are not : $\exists {h^*} \in \mathcal{H}$ with low $L(h^*)$. If we make this assumption, we know what is the best learning rule for supervised learning, which is \textit{empirical risk minimization}:

$$
ERM_\mathcal{H} (S) = \hat{h} = \arg \min_{h\in\mathcal{H}} L_S(h),
$$

with $L_S$ the \emph{empirical} loss, or \textit{training} loss over our sample $S$ of size $m$:

$$
L_S(h) = \frac{1}{m}\sum_{i=1}^m loss(h(x_i);y_i).
$$

For this learning rule, we can guarantee an upper bound on the generalization error based on the capacity of the hypothesis class $\mathcal{H}$, which quantifies the complexity of our model. If the best model in our class is $h^*$, the error of the predictor achieved by the ERM learning rule is

$$
L(ERM_\mathcal{H}(S))\leq L(h^*) + \mathcal{R}_m(\mathcal{H}) \approx L(h^*) + \sqrt{\frac{\O{\text{capacity}(\mathcal{H})}}{m}}\quad (*)
$$
where the quantity $\mathcal{R}_{m}(\mathcal{H})$ represents the Rademacher complexity of the class $\mathcal{H}$ and will be studied more carefully in Lecture 3. For now, let us simply note that 
the error is larger than the best error in our class $\mathcal{H}$ by a term that scales with a measure of its capacity (colloquially, the `amount' of models in the class), divided by the number of samples. In other words, the number of training examples required to learn $h^*$ scales with the capacity of the class $\mathcal{H}$.

Let us study this notion of capacity more carefully. For binary classification, the capacity is the Vapnik-Chervonenkis (VC) \cite{vapnik1974theory,vapnik1999nature} dimension of the class, $capacity(\mathcal{H}) = VCdim(\mathcal{H})$. The VC dimension is the largest number of points $D$ that can be labeled by models $h\in \mathcal{H}$ in every possible way. Thus it quantifies the ability of the models in our hypothesis class $\mathcal{H}$ to fit an arbitrarily labeled dataset. This is quite natural. A model class with high VC dimension (i.e. that contains predictors allowing \textit{any} possible labelling of the set), does not have any inductive bias, so that learning is impossible (no free lunch). Learning becomes possible when the model class can be falsified, and the number of samples needed for learning is the number required to falsify this assumption on the model class $\mathcal{H}$. For linear classifiers over $d$ features, $VCdim(\mathcal{H})=d$. In fact, if the model class $\mathcal{H}$ can be parameterized with $d$ parameters, the VC dimension is usually $VCdim(\mathcal{H})=\tilde{O}(d)$. It is always true that the VC dimension is bounded by the logarithm of the cardinality of $\mathcal{H}$:

$$
VCdim(\mathcal{H}) \leq \log{|\mathcal{H}|}\leq \# \text{bits} = \# \text{params} \cdot \frac{\# \text{bits}}{\# \text{params}}.
$$

Thus we expect that if we encode the parameters of our model with a fixed number of bits, the VC dimension of the model scales with the number of parameters. Another way to produce model classes with finite capacity is to employ regularizers in our learning rule, explicitly penalizing models with high complexity. For example, it can be shown that for linear predictors with norm $\vert\vert w \vert\vert_2 \leq B$ (with logistic loss and normalized data), $capacity(\mathcal{H})=B^2$. More detail on VC dimension 
and related notions can be found in e.g. \cite{vapnik1999nature,bousquet2003introduction}.

Looking back at $(*)$, we see that learning, and machine learning in particular, requires model (hypothesis) classes $\mathcal{H}$ that are expressive enough to approximate reality well (contain $h^*$ with low generalization error), but also have a small enough capacity to allow for good generalization. The approximation error is defined by the error of the best model in our class $h^*$, and the estimation error is the excess over it, as the learning rule can choose a different, worse model given the empirical data.

Usually however, our learning protocols do not represent a flat inductive bias over some model class. We often think in terms of a complexity measure $c:\ \mathcal{Y}^\mathcal{X} \rightarrow [0,\infty]$, which is formally any ordering of predictors $h$. Some measures of complexity include for instance the degree of polynomials is our model class, the cardinality of active weights of the predictor (i.e. an assumption of sparsity) or a low given norm of the weights of our predictor $\vert \vert w \vert \vert$. The associated inductive bias is that $\exists h^*$ with low complexity $c(h^*)$ and low error $L(h^*)$. This inductive bias suggests another learning rule, \textit{structural risk minimization}:

$$
SRM_\mathcal{H} (S) = \arg \min_{h\in\mathcal{H}} L_S(h), c(h).
$$

This learning rule attempts to minimize two functions, which naturally introduces a trade-off between them. At best, the learning rule achieves a predictor that sits on the Pareto frontier that trades off generalization error and complexity. Any predictor on that line cannot improve either the error or complexity without worsening the other. It is possible to achieve this frontier by considering a regularization path, i.e. minimizing $L_S(h)+\lambda c(h)$, and varying the regularization amplitude $\lambda$ in the range $[0,\infty]$. Equivalently, one can attempt to minimize the error $L_S$ such that $c(h)\leq B$ (in which case the aforementioned $\lambda$ can be understood as a Lagrange multiplier). Note that this learning rule retrieves a multitude of candidates along the Pareto frontier. We can choose the best of them, for instance, according to their performance on a cross validation sample.

For the SRM learning rule, we get a similar guarantee to $(*)$ on the generalization error, although our hypothesis class $\mathcal{H}$ is now reduced to 
functions with low complexity, and in particular for the optimal predictor $h^{*}$, we have the bound

$$
L(SRM_\mathcal{H}(S))\leq L(h^*) + \sqrt{\frac{\O{capacity\left(\mathcal{H}_{c(h^*)}\right)}}{m}}.
$$

Another way to think of this complexity measure as opposed to a flat hypothesis class is that it gives rise to a hierarchy of hypothesis classes which are sub-level sets of the complexity measure. The guarantee for SRM gives us an upper bound on the loss based on the best model in the class $h^*$, and its complexity measure in that class; better predictors are obtained if $h^*$ lives in a small level set of the complexity measure. A good model class $\mathcal{H}$ in this approach not only contains a model that approximates reality $h^*$, but does so at lower level-set of a complexity measure.

\subsection{Inductive bias in deep learning}

\begin{figure}
\centering
\includegraphics[width=0.7\linewidth]{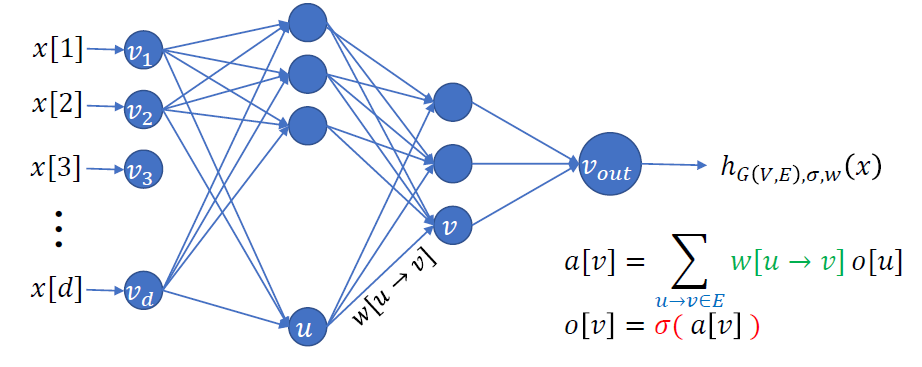}
\caption{Feed-forward neural network.}
\label{fig:FFNN}
\end{figure}

Deep learning is learning with a particular inductive bias, a flat hypothesis class in the form of a feed-forward neural network. For more information on the history of neural networks and their applications, the interested reader may consult the textbook \cite{lecun2015deep}. A feed-forward neural network (Fig.~\ref{fig:FFNN}) is described by a directed graph $G(V,E)$ with nodes (neurons) indexed by vertices $V$. These nodes are subdivided into three types: 
\begin{itemize}
\item \textit{Input nodes} $v_1,...,v_d\in V$ with no incoming edges, whose is $o[v_i]=x_i$.
\item \textit{Output node} $v_{out}\in V$, whose output is the model function $h_w(x)=o[v_{out}]$.
\item \textit{Hidden nodes} are all the rest of the nodes, which receive inputs from incoming edges (from a previous layer) and produce outputs to outgoing edges (to the next layer).
\end{itemize}

The network also has an \textit{activation function} $\sigma:\mathbb{R}\rightarrow\mathbb{R}$ that describes the non-linearity of the neural network; a popular choice is the rectified linear unit (ReLU), $\sigma_{ReLU}(z)=[z]_+ = \max(z,0)$. Finally, the edges of the network, each connecting $2$ nodes $u,v$, are called weights $w[u\rightarrow v]$ for each edge $u\rightarrow v \in E$. A choice of architecture, weights and activation function uniquely describe a predictor function $h_w(x)$. These models were historically developed by McCulloch and Pitts to describe logical calculus related to nervous activity \cite{mcculloch1943logical}. They were able to show that these models can perform complex computations, with the complexity 
measure directly related to the cardinality of the net.

In deep learning, we fix the architecture and activation function $\sigma$, and learn the weights from data. Thus the model class is given by $\mathcal{H}_{G(V,E),\sigma}=\{f_w(x) = $outputs of a network with weights $w\}$. We want to understand the capacity of these models, as well as their \textit{expressivity} (how well they represent reality). 

As noted before, the capacity is roughly given by the number of learned parameters, which here is the number of edges $\vert E \vert$. The VC dimension of these networks with threshold activation (ReLu) is $VCdim(\mathcal{H}_{G(V,E),sign})=\O{\vert E\vert \log \vert E\vert}$. However, for other activation functions, it is actually possible to get a capacity which is much higher than the number of parameters. For example, the VC dimension of these networks with sine activation is infinite, even for a single hidden node \cite{gaynier1995sinusoidal}. See also the more recent \cite{bartlett2019nearly} for complexity bounds on neural networks with piecewise lienar activations. We do not use these kind of activation functions. More useful activation functions are, for example the sigmoid function $\sigma(z)=(1+\exp(z))^{-1}$, whose VC dimensions is bounded by $VCdim(\mathcal{H}_{G(V,E),sigmoid})\leq\O{\vert E\vert^4}$, or ReLU function, for which $VCdim(\mathcal{H}_{G(V,E),ReLU})\leq\O{\vert E\vert \log \vert E\vert l}$, with $l$ the network depth. One can limit the capacity by discretizing the weights, e.g. if $w\in[-B,...,B]$, $VCdim\leq 2\vert E\vert B$. As we've seen, the fact that these network models can have a large capacity is not necessarily good, because capacity comes at the expense of inductive bias. 

What about expressivity? Feed-forward neural networks can represent any logical gate, this can be seen as a consequence of \cite{mcculloch1943logical}, and thus any function over $\mathcal{X}=\{\pm 1\}^d$ (as proved by Turing \cite{turing1939systems}, see also \cite{furst1984parity}). Define the class $CIRCUIT_n [depth,size]$ as all functions $f:\{\pm 1\}^n\rightarrow \{0,1\}$ that can be implemented with at most $size$ AND, OR and NOT gates, and longest path from input to output at most $depth$. We know that circuits can represent any function, see e.g. \cite{arora2009computational}, but only if we are allowed to select an appropriate gate architecture. In neural networks, we keep the architecture fixed (number of nodes, activations, edges) and only vary the weights. 

\begin{claim} 
A neural network with fixed architecture can learn the function of any circuit:
$$
CIRCUIT_n [depth,size] \subseteq \mathcal{H}_{G(V,E), l=depth, k=size, \sigma=sign}
$$

Where we use a fully connected neural net with $l=depth$ layers and $k=size$ nodes in every layer.
\end{claim}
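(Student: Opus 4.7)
The plan is to construct, for any given circuit $C \in CIRCUIT_n[\text{depth}, \text{size}]$, an explicit assignment of weights to the fully connected architecture that realizes the same input-output function. The argument has three ingredients: simulating individual Boolean gates by single threshold neurons, layering the circuit so that each gate sits in a well-defined layer, and padding with identity neurons and zero weights to fit the fully connected template.

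First, I would show that each elementary gate is expressible as a single sign-activation neuron. For Boolean inputs encoded in $\{\pm 1\}$, one checks that $\text{AND}(x_1,x_2)=\sign(x_1+x_2-1)$, $\text{OR}(x_1,x_2)=\sign(x_1+x_2+1)$, $\text{NOT}(x)=\sign(-x)$, and an \emph{identity} gate $\text{ID}(x)=\sign(x)$ (well-defined since $x\in\{\pm 1\}$). This lets every node of the circuit, as well as a ``wire that does nothing this step,'' be realized by a single neuron whose activation is $\sign$. The NOT gate (or any appropriate relabeling) also handles the conversion between the $\{\pm 1\}$ convention used inside the network and the $\{0,1\}$ output convention of the circuit.

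Next, I would convert $C$ into a \emph{layered} circuit in which every gate is assigned to a specific level $\ell\in\{1,\dots,\text{depth}\}$ and only takes inputs from level $\ell-1$ (with level $0$ being the inputs $x_1,\dots,x_n$). To handle a gate whose original input came from some earlier level $\ell'<\ell-1$, I insert a chain of identity gates at levels $\ell'+1,\ell'+2,\dots,\ell-1$ that simply copy the value forward. Since there are at most $\text{size}$ gate outputs that ever need to be carried at any moment, and the total number of ``live'' signals at any layer is at most $\text{size}$, this layered circuit has at most $\text{size}$ nodes per level and exactly $\text{depth}$ levels, matching the capacity of the target architecture.

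Finally, I would embed the layered circuit into the fully connected network $\mathcal{H}_{G(V,E),l=\text{depth},k=\text{size},\sigma=\sign}$ by identifying each of the $k=\text{size}$ neurons in layer $\ell$ with a gate or identity neuron from level $\ell$ of the layered circuit (padding with dummy identity-on-constant neurons if fewer than $\text{size}$ are needed), and setting the weight $w[u\to v]$ to the coefficient prescribed by the single-neuron gate formula of the corresponding gate, with all unused weights set to $0$. Verifying by induction on $\ell$ that the output of each neuron equals the value of the corresponding gate in $C$ then gives $h_w(x)=C(x)$, proving the containment. The only real obstacle is the layering/bookkeeping step: one must carefully check that the number of identity pass-through neurons needed at any single level never exceeds $\text{size}$, which follows because at each moment the number of distinct gate outputs still needed downstream is bounded by the number of gates in $C$, namely $\text{size}$.
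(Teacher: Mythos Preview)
Your proposal is correct and follows essentially the same construction as the paper: simulate each gate by a single threshold neuron and set unused weights to zero. The paper's proof is a one-liner --- it absorbs NOT gates into edge signs (weight $-1$ rather than a separate neuron) and gives the bias for arbitrary fan-in as $\text{fan\_in}-1$ for AND and $1-\text{fan\_in}$ for OR, whereas you write out the $2$-input case and treat NOT separately; both are equivalent. Your version is actually more careful than the paper's, since you explicitly address the layering step (inserting identity pass-through neurons so that every wire goes exactly one layer forward), which the paper's sketch omits entirely. One small caveat in your bookkeeping: the bound ``at most \emph{size} live signals per layer'' covers gate outputs but not raw inputs that are consumed by a deep gate, so strictly you need $k\ge\max(\text{size},n)$ or to fold the inputs into the count --- but this is a cosmetic issue that the paper does not address either.
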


This can be done easily, if we choose the weights of each edge to be $\pm 1$ if the edge is connected in the circuit (with /o without a NOT gate in between), $0$ otherwise. The bias terms are chosen as $fan\_in-1$ for AND gates, $1-fan\_in$ for OR gates, for a given value of $fan\_in \in [0,1]$. The weights essentially describe which wires exist in the circuit. Thus neural networks can represent any binary function.

More generally, we have a \textit{universal representation theorem} \cite{cybenko1989approximation,hornik1989multilayer,barron1993universal,pinkus1999approximation}: Any continuous function $f:[0,1]^d\rightarrow \mathbb{R}$ can be approximated to within $\epsilon$ by a feed-forward network with sigmoidal (or almost any other) activation function and a single hidden layer. This shows that as a model class, feed-forward neural nets are extremely expressive and can represent any reality. However, representing functions may require huge networks, e.g. with layer widths exponential in $d$. The relevant question is not what a network of arbitrary size can represent, but what small networks can represent.

Small networks can represent intersections of half-spaces (using single hidden layer, each neuron corresponds to a half-space and the output neuron performs AND) and unions of intersections of half-spaces (with two hidden layers: half-spaces$\rightarrow$OR$\rightarrow$AND). However, the main compelling reason to use them is \textit{feature learning}: linear predictors over (small number of) features, in turn represented as linear predictors over more basic features, that in turn are also represented as linear predictors. In essence, the network builds up a hierarchy of predictors that progressively manage more abstract features of the data. In the case of image data, this is typically presented as early layers learning simple features (edges in images), and later layers building up on the simpler features to represent higher-level, semantic ideas (cars, birds, etc.). 
Feature learning is at the heart of the success of deep neural networks, see e.g. \cite{bengio2013representation} for a description of the benefits of these architectures for representation learning.

Interestingly, a feed-forward neural network can represent any time $T$ computable function with network of size $\tilde{O}(T)$. This is true since anything computable in time $T$ is also computable by a logical circuit of size $\tilde{O}(T)$. This realization has broad implications for machine learning.

Machine learning is an engineering paradigm (of being lazy): use data and examples, instead of expert knowledge and tedious programming, to automatically create efficient systems that solve complex tasks. Therefore, we only care about a model (predictor) $h$ if it can be implemented efficiently. A good learned model only needs to compete with a programmer, producing results that are at least as good as a programmed model in a competitive (in terms of model evaluation) time. 

In this case we have a free lunch: the model class $TIME_T$ - all functions computable by at most time $T$, has capacity $\O{T}$, and hence learnable with $\O{T}$ parameters (e.g. using ERM). Even better: the model class $PROG_T$, all functions programmable up to length of code $T$, also has capacity $\O{T}$. This is relatively clear, because the length $T$ bounds the number of bits needed to represent all these functions. Unfortunately, ERM with respect to $PROG_T$ is uncomputable. Modified ERM for $TIME_T$ (truncating exec. time) can be computed, but is NP-complete. If $P=NP$, we can have universal learning (free lunch). If $P\ne NP$, i.e. if there exist one-way functions that are easy to compute but hard to learn, there is no poly-time algorithm for ERM over $TIME_T$. For a more detailed discussion of the 
representation of functions with circuits, classes of programmable functions and related complexity measures, see e.g. \cite{arora2009computational}.

We thus unfortunately conclude that the free lunch is only possible if $P=NP$. This realization gives rise to the computational no free lunch theorem: For every \textbf{computationally efficient} learning rule $A$, there is some reality $\mathcal{D}$ such that there is some computationally efficient (poly-time) $h^*$ with $L_S(h^*)=0$, but $\mathbb{E}[L(A(S))]\approx 1/2$. In other words, our learning rule $A$ can find an efficient $h^*$, but there are no guarantees on its generalization. 

This leads us to revise our requirements of inductive bias; we have to assume that not only that reality $\mathcal{D}$ supports good generalization, but also that the learning algorithm $A$ runs efficiently. The capacity of $\mathcal{H}$ or the complexity measure $h(c)$ are not sufficient inductive bias if ERM / SRM are not efficiently implementable, or if implementation does not always work (i.e. runs quickly but does not achieve ERM / SRM). Note that we switched from discussing learning rules (arbitrary mappings from sample to model), to talking about \textit{learning algorithms}, an actual implementable process that chooses such a model. 

Going back to neural networks, we completely understand them from a statistical perspective (in terms of capacity and expressivity, see the previous discussion on VC dimension for neural netowrks and universal approximation). The problem with them relates to computation; computing the ERM for feed-forward neural nets is a non-convex optimization problem, and no known algorithm is guaranteed to work. We know that learning in neural nets, even in the simplest cases ($2$-hidden units in one hidden layer), is NP-hard \cite{blum1988training}. Even if reality is well-approximated by a small neural net, and we tried optimizing a larger neural net (which has more degrees of freedom), optimization is easy but ERM is still NP-hard. Unfortunately, there is nothing one can do to efficiently solve this computational problem, which is essential to neural nets precisely because of their expressive power : this is the computational no free lunch. Even if a function is exactly representable with single hidden layer and $\Theta (\log d)$ nodes, even with no noise, and even if we take a much larger network or use any other method when learning: no poly-time algorithm can ensure better-than-chance prediction, see e.g. \cite{kearns1994cryptographic, daniely2014average}.

And nevertheless, deep learning does work! We have seen that from a statistical and computational perspectives, performing ERM on short programs (or short runtime programs) and learning with deep networks is equivalent. Both approaches are universal and approximate reality with reasonable sample complexity. They are both NP-hard, and provably hard to learn with any learning rule (subject to cryptographic assumptions). However there is no practical way to optimize over short programs, as e.g. there is no practical local search over programs. In contrast, deep neural nets are often easy to optimize; they are continuous models, amenable to local search (gradient descent, SGD), and enjoy much empirical success. In the worst case, deep learning is provably impossible, and yet, we are constantly reminded that deep learning is possible. There is a certain property of reality that makes feed-forward neural networks work, especially from the optimization viewpoint, and we have just started to scratch the surface of what that property is. However, we know for sure what it isn't: it is not the property that reality is well approximated by neural networks.

\subsection{Deep learning in practice}

As we have seen, deep neural networks can represent any function, and indeed have been shown to fit random data with perfect (training) accuracy \cite{zhang2021understanding}. However, when trained on real data, these networks do successfully generalize, even when over-parameterized.

We thus have a learning rule $A(S)$ that is able to achieve perfect training accuracy for any data set, even with random labels $L_S(A(S))=0$. On the other hand, it is able to generalize for real data $S\sim\mathcal{D}^m$ sampled from a reasonable reality $\mathcal{D}$, achieving low $L(A(S))$.

Perhaps we should not be surprised about this, as other learning algorithms do show similar behavior. A 1-Nearest Neighbor classifier, if realizable by a continuous $h^*$ (i.e. $L_S(h^*)$), then for an infinite sample size ($\vert S\vert\rightarrow \infty$), it is consistent with zero generalization error $L(1-\text{NN}(S))\rightarrow 0$. Similarly, a Hard Margin Support Vector Machine (SVM) with a Gaussian kernel (or some other universal kernel), or more generally, minimization of a norm for consistent solutions, also tend to generalize despite having vanishing training error: $\arg\min\vert\vert h\vert\vert_K$ such that $L_S(h)=0$. Let us consider a linear case where 
$$
w=\arg\min\vert\vert w\vert\vert_2\quad s.t.\quad \langle w,\phi(x_i)\rangle = y_i
$$

In this case, our SVM model does not have a flat inductive bias, but the norm of the weights $w$ adapt to the level of complexity inherent in the data. If reality is represented by a solution with small norm, then the learning rule will achieve a solution with low complexity measure and therefore generalize. However, if we try to fit random labels, we can only fit a model with a high norm (high complexity measure), and it will fail to generalize. We can always train SVMs with zero training error $L_S(h)=0$. If $\exists h^*$ with zero generalization error $L_\mathcal{D}(h^*)=0$, it will be achieved with sample complexity $\vert S\vert=\O{\vert\vert h \vert\vert_K^2}$. Another example for this generalization is found in Minimum Description Length (MDL): A program optimized for its length ($\arg\min\vert \text{program}\vert $) with $L_S(\text{program})=0$, is able to achieve a generalization error $L(MDL(S))\leq \O{\frac{\vert \text{program}\vert}{\vert S\vert}}$. That is, a short program only requires a sample complexity proportional to its length.

These examples and the ability of deep nets to generalize implies that the size of the network is not a good measure of model complexity. This is not a new idea; as it was already realized in the 1990s that kernel regression works for infinitely many features, because we rely on norm for complexity control (assuming the hypothesis class is a ball with fixed radius in the corresponding Reproducing Kernel Hilbert Space (RKHS)) rather than the dimensionality. It was shown in 1996 \cite{bartlett1996valid} that the complexity of a neural network is not controlled by the number of weights but by their magnitude. In fact, neural networks have many solutions for weights $w$ verifying $L_S(w)=0$, many of which have high generalization errors. These solutions tend to have high $w$-norms. However, the solutions found in practice for neural networks using gradient descent do generalize well, and tend to have small norms, even without explicitly regularizing for low norm solutions.

Where is this implicit regularization coming from? We will try to understand this in the simplest model possible - linear regression - in the next lecture. Consider an under-constrained least squares problem with $(n>m)$:

$$
\min_{w\in\mathbb{R}^n} \vert\vert Aw - b \vert\vert^2 \quad,\quad A\in \mathbb{R}^{m\times n}.
$$

In under-constrained cases there are many choices of $w$ for which the sum of squares vanishes. Imagine solving this problem with gradient descent, initialized at $w=0$. Gradient descent will definitely succeed, as this is a convex problem, and find a vanishing error solution, but which solution?

\begin{claim} 
Gradient descent (or SGD, conjugate gradient descent, BFGS) will converge to the least $\ell_{2}$-norm solution $\min_{Aw=b} \vert\vert w\vert\vert_2$. The proof follows from the iterates always being spanned by the rows of $A$ (more details will be given in the next lecture).
\end{claim}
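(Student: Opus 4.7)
The plan is to exploit the \emph{row-span invariance} hinted at in the statement: every gradient descent iterate on $f(w) = \|Aw - b\|_2^2$ starting at $w_0 = 0$ lies in the row space of $A$, namely $\mathrm{range}(A^T)$. This follows by induction: the gradient is $\nabla f(w) = 2 A^T(Aw - b)$, a vector of the form $A^T u$ for some $u \in \mathbb{R}^m$, so each update $w_{t+1} = w_t - \eta_t \nabla f(w_t)$ adds an element of $\mathrm{range}(A^T)$ to $w_t$. Starting from $w_0 = 0 \in \mathrm{range}(A^T)$, all iterates remain in this subspace.

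Next I would invoke the standard convergence theorem for gradient descent on a convex quadratic: with step size bounded by $2/\|A^T A\|_{\op}$, the sequence $w_t$ converges to some $w^\star$ satisfying $\nabla f(w^\star) = 0$, equivalently $A w^\star = b$ in the consistent case (here $n > m$ and we assume $b \in \mathrm{range}(A)$). Combined with the invariance, $w^\star \in \mathrm{range}(A^T) \cap \{w : Aw = b\}$. To finish, I would characterize this intersection as the minimum norm solution: any feasible $w$ decomposes orthogonally as $w = w_\parallel + w_\perp$ with $w_\parallel \in \mathrm{range}(A^T)$ and $w_\perp \in \ker(A) = \mathrm{range}(A^T)^\perp$, so Pythagoras yields $\|w\|_2^2 = \|w_\parallel\|_2^2 + \|w_\perp\|_2^2$. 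The minimum norm feasible point therefore has $w_\perp = 0$ and lives in $\mathrm{range}(A^T)$, and uniqueness within that subspace follows from $\mathrm{range}(A^T) \cap \ker(A) = \{0\}$. Hence $w^\star = \argmin_{Aw=b} \|w\|_2$.

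For SGD, each stochastic gradient on an example $(A_i, b_i)$ has the form $A_i^T(A_i w - b_i) \in \mathrm{range}(A^T)$, so the same invariance and argument apply. For conjugate gradient and BFGS initialized at the origin, the iterates live in the Krylov subspace $\mathrm{span}\{A^T b, (A^T A) A^T b, \ldots\} \subseteq \mathrm{range}(A^T)$, and the conclusion transfers. The main delicate point is the convergence step itself: because $A^T A$ is singular in the underdetermined regime, the Hessian is not strictly positive definite, so convergence of the iterates (not merely of the loss values) needs to use that they are confined to $\mathrm{range}(A^T)$, where the quadratic \emph{is} strictly convex. This confinement is precisely what the row-span invariance provides, so the argument closes up cleanly; the takeaway is that the implicit bias of these algorithms is nothing more than the orthogonal projection of the initialization onto the solution set.
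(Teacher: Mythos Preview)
Your argument is correct and complete. The row-span invariance step matches the paper exactly, but the way you extract the minimum-norm conclusion differs: you use the orthogonal decomposition $w = w_\parallel + w_\perp$ with $w_\parallel \in \mathrm{range}(A^T)$, $w_\perp \in \ker(A)$, and Pythagoras to conclude that the unique feasible point in $\mathrm{range}(A^T)$ is the minimum-norm solution. The paper instead writes down the constrained problem $\min \tfrac{1}{2}\|w\|_2^2$ subject to $Aw = b$, forms the Lagrangian $L(w,\nu) = \tfrac{1}{2}\|w\|_2^2 + \nu^T(b - Aw)$, and checks that the two KKT conditions---stationarity $w = A^T\nu$ and primal feasibility $Aw = b$---are precisely ``$w$ lies in the row span'' and ``$w$ interpolates,'' so the gradient-descent limit is KKT-optimal. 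Your route is more self-contained and elementary; the paper's route pays off later, since the same KKT template is reused verbatim to identify the implicit bias of mirror descent (replace $\tfrac{1}{2}\|w\|_2^2$ by $D_\Psi(w\|w_0)$ and the stationarity condition becomes $\nabla\Psi(w) - \nabla\Psi(w_0) = A^T\nu$). You also handle convergence more carefully than the paper does: the paper simply \emph{assumes} the iterates reach a global minimum, whereas you note that strict convexity on $\mathrm{range}(A^T)$ closes the gap left by the singular Hessian.
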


While we did not explicitly design the algorithm to prefer solutions with small norm, it does in fact find the solution that minimizes this norm. This implicit regularization comes directly through the optimization process. In general we find that optimization algorithms minimize some norm or complexity measure, but which complexity measure?

\newpage



\section{Lecture 2 : Implicit bias and benign overfitting}

\subsection{Choosing the right complexity measure}

The discussion iniated in the previous lecture leads to the following main questions :
\begin{enumerate}
    \item How much of what we've seen so far fits within our classic understanding of statistical learning?
    \item What questions do we need to answer to put it within our standard understanding?
    \item And what goes \emph{beyond} our standard understanding?
\end{enumerate}  

One thing we've seen is that huge models (so large that they \textit{could} fit even random labels) can still generalize well. This does not contradict the classical understanding of supervised learning; as it's the same type of behavior obtained with something like a hard-margin SVM or a minimum-norm predictor. The reason for this is that what actually governs the ability of the model to generalize is not actually its \emph{capacity}. The real \textit{measure of complexity} is some kind of norm.

What is this norm? First of all, we are not using the word ``norm'' in an appropriate mathematical sense; what we \emph{really} mean is some \emph{measure of scale} that might not satisfy the rigorous definition of a norm. We can get implicit complexity control just from the optimization algorithm. For example, when we optimize an underdetermined least squares problem using gradient descent, we get the minimum-norm solution; that just comes from the optimization algorithm. So, we need to ask: what \textit{complexity measure} is being minimized, and how does gradient descent minimize it?

We ended the previous lecture by saying that if we change our optimization algorithm without changing the objective function, we're actually implicitly minimizing some \textit{other complexity measure}, which will change the inductive bias and thus the generalization properties of our model. As some examples, we can compare the test performance of SGD and Path-SGD as in \cite{path_sgd_2015} and SGD with the Adam optimizer as in \cite{marginal_value_adaptive_grad_2017}. In all cases, they reach the same final training error, but they have different final test performance values. In other words, they're reaching different global minima of the training objective. Thus what we are observing is that the inductive bias is determined by the bias of the optimization algorithm. In other words, if we have a training loss landscape with \textit{many global minima}, and we start optimizing on some ``hill'', \textit{different} optimization algorithms will move down the ``hill'' \textit{differently} and reach \textit{different} ``beaches'' (0 loss). 

An illustrative example of different optimization algorithms inducing different regularizers can be seen by studying gradient descent vs. coordinate descent. Gradient descent will get to the minimum $\ell_2$ norm, whereas coordinate descent will get to an approximately minimum $\ell_1$ norm solution. In a high-dimensional system, these two norms are extremely different; $\ell_1$ induces sparsity, and $\ell_2$ does not. This difference is significant, especially when we think about deep learning in terms of feature learning. \textit{Feature selection} (i.e., from a long list of features, select the relevant ones) is just sparsity, and $\ell_1$ regularization can achieve it. In deep learning, we want to do feature \emph{learning}, not just feature selection. But what \emph{is} finding new features? We can think of a continuous set of possible features, and we want to select good features from that infinite feature set. So as long as we're not too worried about infinities, there's not much difference between feature selection and feature learning. This can be quantified by establishing how a givne algorithm implicitly minimized an $\ell_1$ norm that will induce sparsity; as opposed to an $\ell_2$ norm that does not. Thus, this can be a huge difference, and all the inductive bias is coming from the algorithm here. In fact, here is the perspective on deep learning we shall take here: 

Deep networks can approximate any function. We somewhat dismissed our universal approximation results in the previous lecture because they require huge networks with seemingly unrealistic capacity. But perhaps we actually \emph{are} using networks that are large enough to capture all functions (since we are, at least on the available data, able to capture all functions). So maybe we \emph{are} essentially optimizing over the space of all functions. In that case, minimizing empirical error with respect to all functions doesn't make any sense; it's really easy to optimize the empirical error with respect to all functions, since we can just memorize the training examples and not do anything anywhere else in the domain. But in deep learning, we optimize over all functions \emph{with particular search dynamics}, and although we do get to a function that has 0 training error, we don't get to just \emph{any} 0 error solution. \emph{How} we optimize over the space of all functions determines which directions we like to take. Roughly speaking, we'll get to a 0-error solution that's ``close" to the initialization point in some sense with respect to our geometry.

\subsection{Examples}

With this perspective in mind, let us now discuss a few examples in which we are seemingly optimizing over the entire function class of a given problem but where the choice of optimization dynamics 
implicitly imposes a form of regularity on the solution, leading to good generalization.

\subsubsection{Matrix completion}

In matrix completion, we have some observations from a matrix. We want to complete the matrix and uncover the remaining entries. You should think of the matrix as having some structure (e.g., some low-rank structure). Formally, the matrix completion problem is:
\begin{equation}
\min_{X \in \mathbb{R}^{n \times n}}\|\text{observed}(X)-y\|_2^2.
\end{equation}

In some sense, it's very easy to solve this optimization problem. We can complete the observed entries and put 0 everywhere else, but it won't help us recover the unobserved entries. The problem is underdetermined. So what do we do? We can run gradient descent directly on $X$, or - alternatively - we can replace $X$ with $UV^\top$ ($U,V$ full rank, therefore no rank constraint on $X=UV^\top$) and run gradient descent on $U,V$. Figure~\ref{fig:matrix_completion} compares the results of these two procedures when the ground-truth matrix $X^*$ has rank 2. We also see how slight variations in gradient descent on $U,V$ change which solution we converge to. But the bigger effect is coming from the reparameterization (from $X$ to $UV^\top$).

So how can the implicit bias be understood here? In this case, we have a good understanding, though not complete. The initial proposal in \cite{gwbns_2017} was that gradient descent is converging to a low nuclear norm solution - i.e., nuclear norm is the relevant complexity measure that gradient descent is minimizing. We know that minimizing the nuclear norm can give you good generalization when you have an approximate low-rank matrix. Minimization of the nuclear norm is proved rigorously in some cases (e.g., under restricted isometry property (RIP) in \cite{overp_matrix_sensing_li_ma_zhang_2018}) and turns out to not always be the case (e.g., see counterexample in Example 5.9 in \cite{implicit_bias_gd_matrix_factorization_2021}). 

\begin{figure}
\begin{minipage}{0.5\textwidth}
\center
\includegraphics[scale = 0.19]{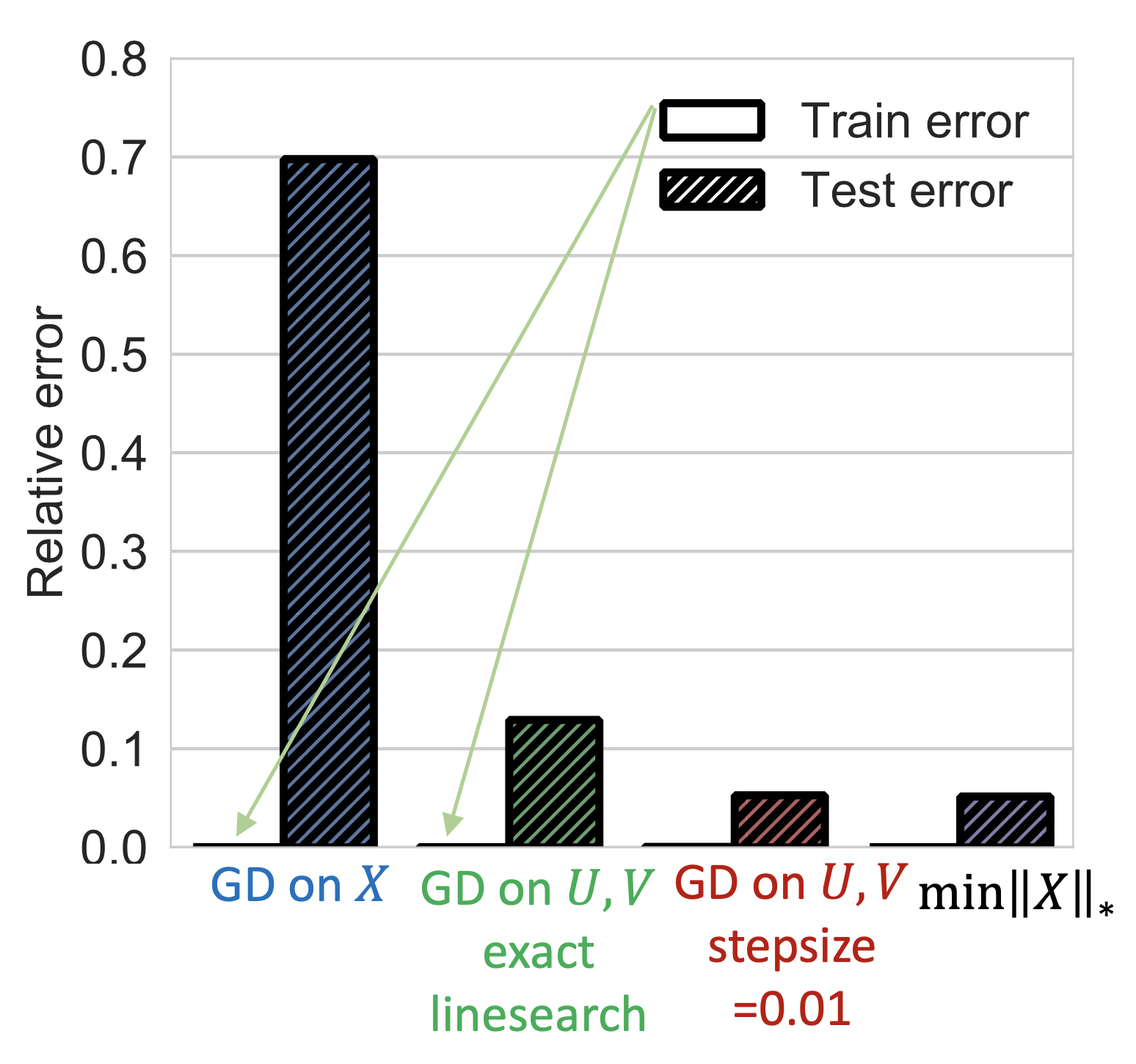}
\caption{Matrix completion. Relative error is the squared error compared to the squared error of the null predictor.}
\label{fig:matrix_completion}
\end{minipage}
\begin{minipage}{0.5\textwidth}
\center
\includegraphics[scale = 0.25]{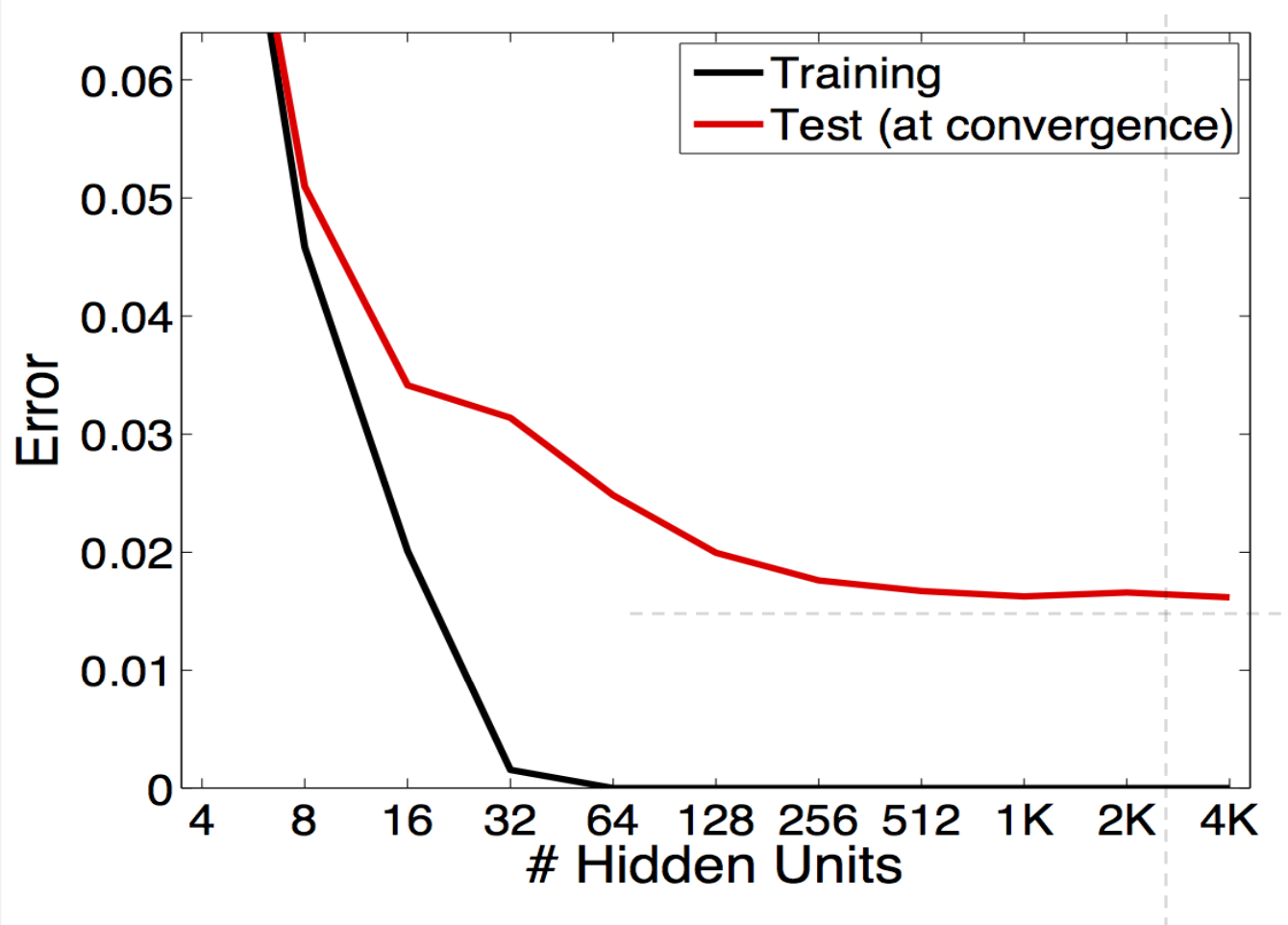}
\caption{Error vs. number of hidden units.}
\label{fig:hidden_units_plot}
\end{minipage}
\end{figure}

\subsubsection{Single overparameterized linear unit: $\mathbb{R}^d \to \mathbb{R}$}

If we train a single unit with gradient descent using the logistic (``cross entropy'') loss, we converge to the max-margin separator (the hard-margin SVM predictor) \cite{soudry2018implicit}, which involves an implicit $\ell_2$ norm regularization :
\begin{equation}
w(\infty) \propto \arg\min \|w\|_2 ~ \text{ s.t. } ~ \forall i ~~ y_i \langle w, x_i \rangle \ge 1.
\end{equation}
This holds regardless of initialization. We will go over this result in detail later on.

\subsubsection{Deep linear network: $\mathbb{R}^d \to \mathbb{R}$}
Now, let us consider what happens in a deeper network (with only linear activations). Let $w$ denote the weights of all the layers. Then our deep linear network implements the same linear mapping as above:
\begin{equation}
f_w(x) = \langle \beta_w, x \rangle.
\end{equation}

When we run gradient descent on $w$ (as opposed to $\beta$, as we did above), one might think that this reparameterization could affect the search geometry. However, in this case, the inductive bias is actually the same as above:
\begin{equation}
\beta_{w(\infty)} \propto \arg\min \|\beta\|_2 ~ \text{ s.t. } ~ \forall i ~~ y_i \langle \beta, x_i \rangle \ge 1.
\end{equation}


\subsubsection{Linear convolutional networks}

Things get more interesting in the linear convolutional case, though. Let's consider the following linear convolutional network:
\begin{equation}
h_l[d] = \sum_{k=0}^{D-1} w_l[k] h_{l-1}[d + k \operatorname{mod} D], \qquad h_\text{out} = \langle w_L, h_{L-1} \rangle,
\end{equation}
with $0 \leq d \leq D-1$, which is still just a reparameterization of our original linear function from $\mathbb{R}^d \to \mathbb{R}$. Now we can ask what happens when we train this model using gradient descent.

\paragraph{Single layer ($L=2$).} With a single hidden layer, training the weights with gradient descent implicitly minimizes the $\ell_1$ norm in the frequency domain:
\begin{equation}
\arg\min \|\operatorname{DFT}(\beta)\|_1 ~ \text{ s.t. } ~ \forall i ~~ y_i \langle \beta, x_i \rangle \ge 1,
\end{equation}
where DFT denotes the discrete Fourier transform. In other words, we obtain sparsity in the frequency domain.

\paragraph{Multiple layers.} With $L$ layers, training the weights with gradient descent converges to a critical point of 
\begin{equation}
\|\operatorname{DFT}(\beta)\|_{2/L} ~ \text{ s.t. } ~ \forall i ~~ y_i \langle \beta, x_i \rangle \ge 1,
\end{equation}
where $\|\cdot\|_{2/L}$ denotes the $2/L$ quasinorm. It is not technically a norm, but it \emph{is} formally defined, and it's even more sparsity-inducing than $\ell_1$. Thus, increasing the depth induces more and more sparsity in the frequency domain. See \cite{implicit_bias_gd_conv_2018} for more details.

\subsubsection{Infinite-width ReLU network}

Now let us look at all functions (not just linear) from $\mathbb{R}^d \to \mathbb{R}$. In order to represent them with a neural network, we have to introduce nonlinearities (e.g., ReLU). If we let a single-hidden-layer ReLU network be wide enough, we can approximate all functions. So let us learn using infinite-width ReLU networks.

\paragraph{Functions $h$ from $\mathbb{R} \to \mathbb{R}$.}

Gradient descent on the weights implicitly minimizes
\begin{equation}
\max\left( \int |h''|dx, |h'(-\infty) + h'(+\infty)| \right).
\end{equation}
This would be a very sensible penalty to choose, since it is kind of a smoothness-inducing penalty. The interesting thing is that we didn't explicitly choose it; it appeared solely from gradient descent on this parameterization.

\paragraph{Functions $h$ from $\mathbb{R}^d \to \mathbb{R}$.}

Gradient descent on the weights implicitly minimizes
\begin{equation}
\int | \partial_b^{d+1} \operatorname{Radon}(h) |,
\end{equation}
where $Radon(h)$ designates the Radon transform of $h$ with parameter $b$.
Once again, we obtain a form of sensible smoothness penalty. This result is rigorous for logistic loss (doesn't depend on initialization or learning rate). For squared loss, we don't know how to analyze it exactly, although we expect something similar. See \cite{inf_width_in_function_space_2019}, \cite{function_space_bounded_norm_multivariate_2020}, \cite{implicit_bias_gd_wide_two_layer_2020} for more details.

\subsubsection{Takeaways from these examples}

\paragraph{Main contributors to implicit bias} The game here is that we want to understand what happens in the space of functions. The inductive bias in parameter space is relatively simple ($\ell_2$ or something similar, often). But what we really care about is what happens in function space, which can be very rich. A large part of the optimization problem is dictated by the architecture. The classical view is that the architecture is important because it limits what functions may be obtained; however, that's not the case here. The architecture is important because it determines the mapping from parameter space to function space and is the \emph{biggest} contributor to the geometry by which we are searching in function space.

The next most significant thing that affects the inductive bias is the geometry of the local search in parameter space (e.g., $\ell_2$ vs. $\ell_1$ in parameter space).

And the least significant thing (though still relevant) that affects the inductive bias is the set of optimization choices (e.g., initialization, batch size, step size, etc.).

\paragraph{Does gradient descent always minimize the $\ell_2$ norm in parameter space?}

In all of these examples, we can get the same thing as gradient descent's implicit regularization using a minimum $\ell_2$ norm on the weights (subject to fitting the data). In all of these examples, the complexity control in parameter space is very simple (it is just $\ell_2$), and everything is coming from the parameterization. So is all this discussion of the implicit bias of gradient descent just $\ell_2$ in parameter space, with everything coming just from the parameterization?

The answer is both yes and no. It is possible to obtain some asymptotic results showing that, for some restricted class of models with the logistic loss, everything boils down to an $\ell_2$ norm. However, we'll also see that in many cases, this is not true, and we'll obtain something very different (e.g., under squared loss, or even under logistic loss non-asymptotically).

\subsection{Beyond mitigating overfitting : better generaliztion through reparameterization} 

We now understand that, in Figure~\ref{fig:hidden_units_plot}, we have complexity control coming from the algorithm, and this is what stops the model from overfitting. The main question this led us to ask is: what is this complexity control? (which we just studied, in several examples.)

But there is \emph{another} thing that is going on here. What we have studied so far explains why we might be able to generalize well even at a large number of hidden units (i.e., we have complexity control coming from the algorithm, even though we're optimizing in the space of all functions). However, we haven't yet explained why the red curve in Figure \ref{fig:hidden_units_plot} (test error) actually goes \emph{down} as the number of hidden units increases. Recall, the $x$-axis is not complexity (complexity is something else - some norm-based complexity, probably).

\paragraph{Gaussian kernel.}

We can actually see similar behavior even with kernel methods. Let us consider the Gaussian kernel, which corresponds to an infinite-dimensional feature space, the RKHS corresponding to this kernel, denoted $\mathcal{H}$. Let us think of what happens if we use a finite approximation to the Gaussian kernel. Concretely, we have the Gaussian kernel $\langle \phi_\infty(x), \phi_\infty(x')\rangle = e^{-\|x-x'\|^2}$ and the finite-dimensional random feature mapping \[\phi_d(x)[i] = \frac{1}{\sqrt{d}}\cos(\langle \omega_i,x\rangle+\theta_i)\] 
approximating $\mathcal{H}$ \cite{rahimi2007random}. According to Bochner's theorem, see e.g. \cite{berg1984harmonic}, the Fourier transform $p(\omega)$ of any translation invariant kernel 
is a probability distribution. The features $\phi_{d}(x)[i]$ then correspond to the discretization of the representation of the Gaussian kernel as the inverse Fourier 
transform of $p(\omega)$. Thus the parameters $\omega_{i},\theta_{i}$ are not learned but drawn randomly. We refer the reader to \cite{rahimi2007random} and references therein for details of implementation 
of this method and, to e.g. \cite{rudi2017generalization} for related theoretical guarantees. 
For a given empirical loss $L_{S}$ based on an i.i.d. sample $S$, the algorithm returns 
\begin{align*}
&A(S) = \arg\min \|w\|_{\mathcal{H}}\\
&\mbox{s.t.} \quad L_S\left(x \mapsto \langle w, \phi_d(x)\rangle\right) = 0\\
&\mbox{i.e.} \quad \forall (x_i,y_i) \in S, \quad y_i = \langle w, \phi_d(x_i)\rangle.
\end{align*}
As $d \to \infty$, we approach the Gaussian kernel. Once we have more features than data points, we can already get 0 training error. But we are not doing it with the Gaussian kernel yet; we are doing it with an approximation of the Gaussian kernel obtained from the finite-dimensional features. If the RKHS norm induced by the Gaussian kernel is our ``correct'' complexity measure, then as $d \to \infty$, we are approximating it better and better. So we are minimizing a complexity measure that's a better and better approximation of the complexity measure we want. So as the dimensionality increases, the test error improves.
\paragraph{Matrix completion.} We can also see this in matrix completion, a finite dimensional problem. Suppose $X$ is $n \times n$, we observe $nk$ entries, and we parameterize $X$ as $UV^\top$, where $U,V \in \mathbb{R}^{n \times d}$ (thereby adding a rank-$d$ constraint). As above, $d$ captures the quality of our approximation. Suppose the ``right'' complexity measure is nuclear norm. Letting $\hat{L}$ be the empirical loss, we have two different regimes:\\
- If $d < k$, our algorithm returns $\arg\min \hat{L}(X)$ s.t. $\operatorname{rank}(X) \le d$.\\
- If $d > k$, our algorithm returns $\arg\min \|X\|_*$ s.t. $\hat{L}(X) = 0, ~ \operatorname{rank}(X) \le d$.\\
So as we increase the rank constraint, the test error becomes better and better. As we increase the rank, we're getting closer to what we really want, which is a full-rank low nuclear norm matrix. So $d$ is not our complexity measure; rather, it's the dimensionality of the approximation to the full-rank system.

\paragraph{Remark.}
We claim that this is what's happening in neural networks too. The real object we should be learning with is an infinite-size network. But we cannot really represent an infinite-size network? Instead, we represent a truncated, finite-dimensional representation of it. As our truncation becomes finer and finer, our representation becomes better and better. We want the size to be so big that we essentially have a good approximation to the infinite-size model. Our methods should not rely on the size of the approximation being small. So we should start by understanding networks of infinite size and then worry about the question: ``how large do we need our model to be in order for it to be considered infinite?''.

\subsection{A more recent form of implicit bias : benign overfitting}

There is still one case that doesn't really fit the classical understanding of supervised learning, which was recently pointed out in \cite{kernel_learning_belkin_2018,double_descent_belkin_2019}. The observation is the following : we are getting good generalization even though we are insisting on a 0-training error solution in \emph{noisy} situations (situations where the approximation error is nonzero). In particular, in Figure~\ref{fig:c_vs_L_plot}, we want to balance complexity and training error, so we know we want to be somewhere on this regularization frontier. But the solutions we are finding are the minimum-complexity 0-error solutions - an extreme point of the frontier, and we are seeing this even in fairly noisy cases, where we would expect to be somewhere on the frontier that strikes more of a balance between complexity and training error.

To understand this a bit better, let us return to fitting noisy data with polynomials, where complexity is degree of the polynomial. As we increase the degree of the polynomial, we can decrease the training error. In this case, as seen in Figure~\ref{fig:underfitting_vs_overfitting}, we can get 0 training error with a degree-9 polynomial. But we are fitting the noise and getting bad generalization. This is captured by the classic U-shaped red curve in Figure~\ref{fig:underfitting_vs_overfitting}. At some point, we start overfitting - which we will define as fitting the noise. At that point, the test error starts to become worse because the estimation error begins to dominate. The conventional wisdom is that we should never insist on 0 training error, because it will fit the noise and generalize poorly.

\paragraph{Connection with the so-called \emph{double descent}.} Arguably, the first paper to discuss the above phenomenon was \cite{double_descent_belkin_2019}, which introduced the notion of ``double descent'' (Figure~\ref{fig:double_descent_belkin}). At this point, we probably understand about 95\% of double descent, which has little to do with the question we just asked about fitting the noise. So let us briefly discuss the double descent phenomenon, before reaching the remaining 5\% that we do not yet understand.

Why are we getting double descent? Let us think of a least squares problem in dimensionality $d$ and a fixed number of training examples. The $x$-axis is the dimensionality. The $y$-axis is the error of the ERM solution - but not just any ERM solution: if the problem is overdetermined, find the solution that minimizes the reconstruction error; if the problem is underdetermined, find the minimum Euclidean norm 0-training-error solution. Until the interpolation threshold, the $x$-axis really is complexity control. After the interpolation threshold, the $x$-axis is no longer complexity control; rather, it is the degree of approximation. As we just discussed, it is not surprising that the test error improves as the approximation becomes better.

The fact that we get an increase followed by a decrease is not surprising. What \emph{is} surprising is that we get good generalization even though we insist on 0 training error (a solution that interpolates the training set) in a fairly noisy situation. The main experiment showcasing this behaviour from \cite{kernel_learning_belkin_2018}, in which high levels of noise were added to synthetic data. When perfectly fitting this noisy data using different methods (all methods get 0 training error), the test error is substantially below the null risk. In particular, the phenomenon we are seeing is that we \emph{are} perfectly fitting the noise, but this overfitting is not harmful, as in Figure~\ref{fig:underfitting_vs_overfitting}. Rather, the overfitting is \emph{benign}. We are fitting the noise in a way that has a kind of measure-0 effect; fitting the noise does not ruin the fit in other places. We are maybe not \emph{gaining} anything from fitting this noise, but it doesn't hurt us either (it's benign).

\paragraph{Open questions regarding benign overfitting} In what situations is overfitting harmful, and in what situations is it benign? In least squares, we can get both kinds of behaviors in a now-predictable way (the result of research in the past 2-3 years, see e.g. \cite{hastie2022surprises,bartlett2020benign}). We know that it relates to a measure of effective rank at the level of the covariance matrix of the design matrix. Characterizing more generally (beyond least squares) when overfitting is harmful or benign is a big challenge that is yet to be solved, and is fairly different from how we used to think about overfitting.


In some sense, the most practical implication of benign overfitting is that we don't have to worry too much about selecting the right value of the regularization parameter $\lambda$; we have a whole regime of good values of $\lambda$, even in noisy cases. In many cases in practice, though, we see something in between benign and harmful overfitting, although much closer to benign. This matches what we see empirically: that adding a bit of regularization \emph{can} be helpful by a bit, but we can still have good performance without explicit regularization.

\begin{figure}
\begin{minipage}{0.5\textwidth}
\center
\includegraphics[scale = 0.17]{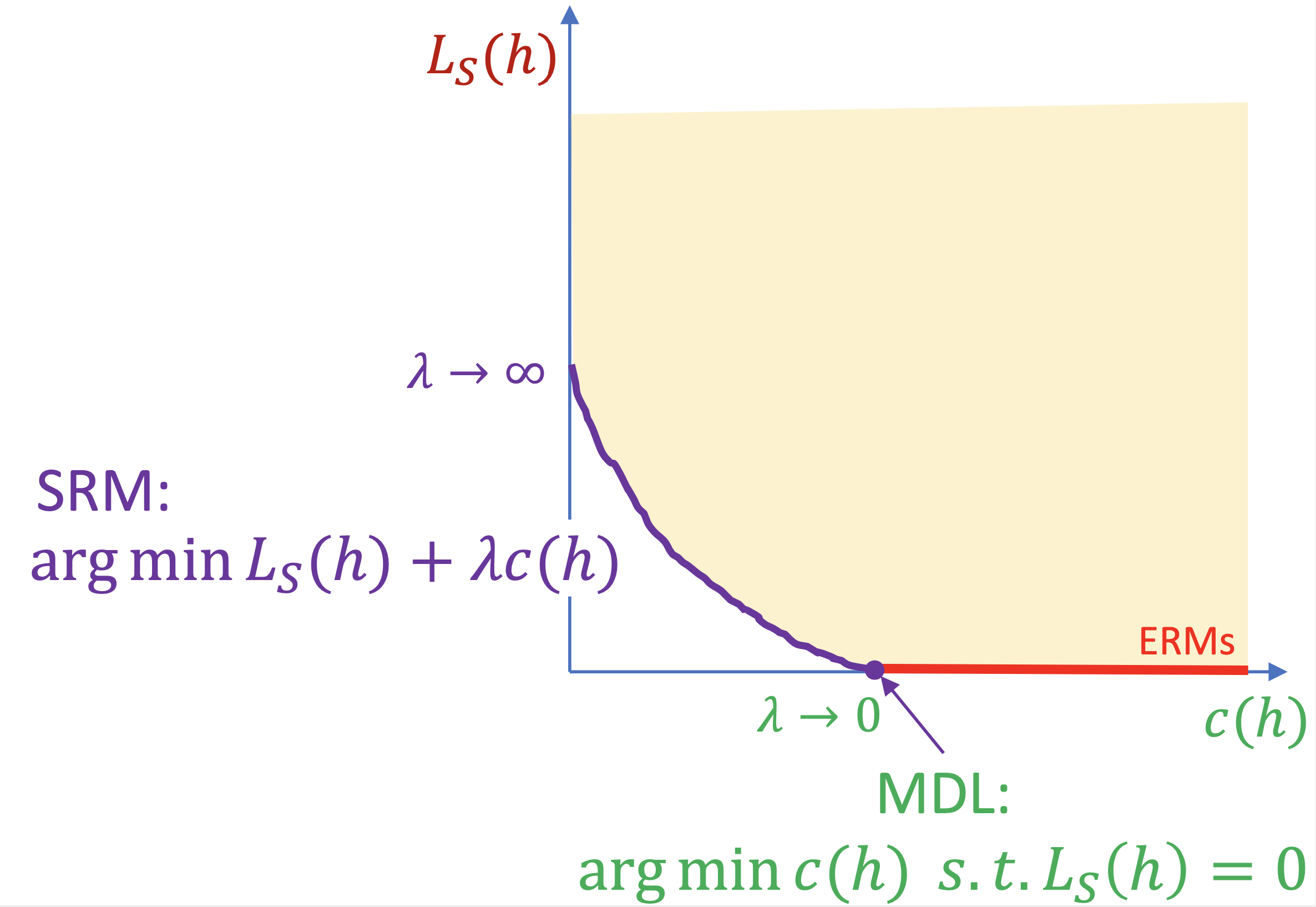}
\caption{Trading off training loss and complexity.}
\label{fig:c_vs_L_plot}
\end{minipage}
\begin{minipage}{0.5\textwidth}
\center
\includegraphics[scale = 0.16]{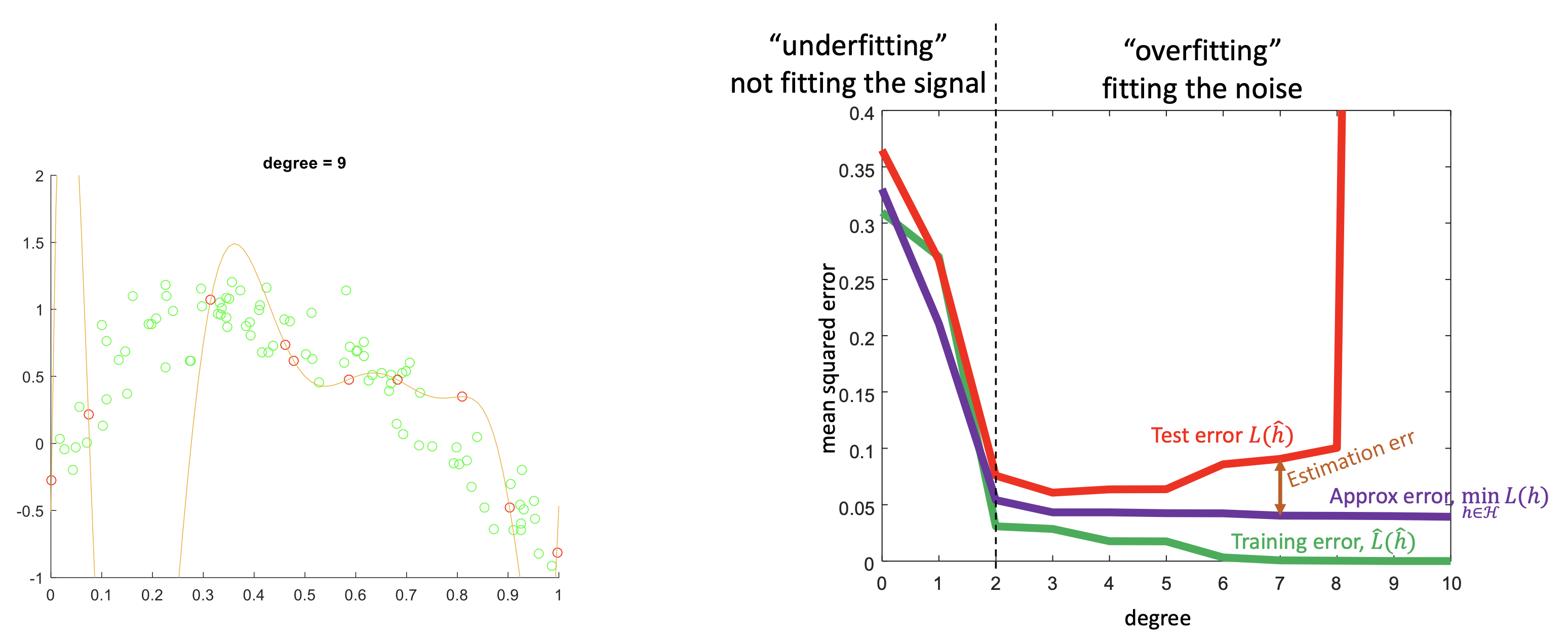}
\caption{Our classical understanding of overfitting.}
\label{fig:underfitting_vs_overfitting}
\end{minipage}
\end{figure}

\begin{figure}
\center
\includegraphics[scale = 0.30]{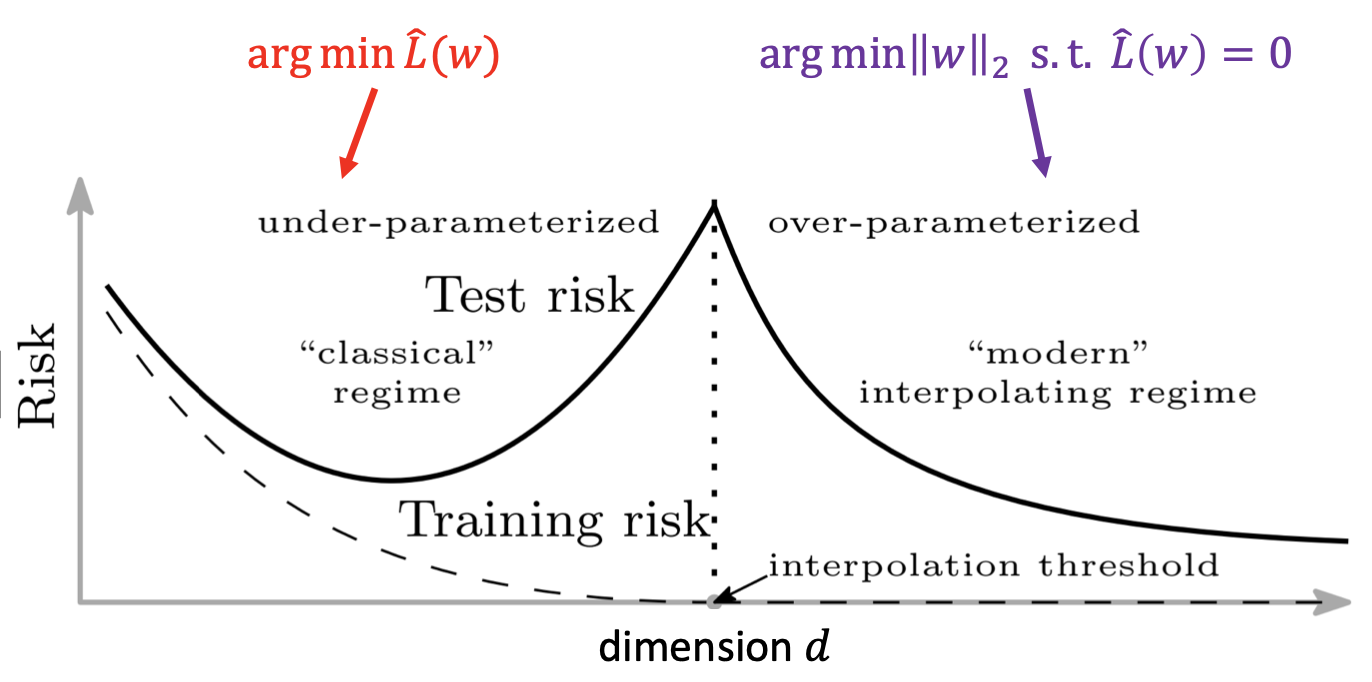}
\caption{The double descent phenomenon.}
\label{fig:double_descent_belkin}
\end{figure}

\subsection{A more general look at the implicit bias in optimization}
From now on and in the remaining lectures, we will focus on understanding the implicit bias coming from optimization and 
leave aside the benign overfitting phenomenon. The examples above allow us to establish the following guidelines to understand implicit bias 
in optimization : 
\begin{itemize}
  \item what complexity measure is actually being minimized by a given algorithm? This will depend on the choice of the algorithm but also, as we have seen, on the chosen architecture for our model.
  \item how do the parameters of the algorithm (initialization scale, step size, etc ...) affect the minimization of this complexity measure?
  \item How do low values of this complexity measure ensure good generalization?
\end{itemize}
The goal of the remaining lectures will be to provide a framework and method to answer these questions in simple cases. In particular, 
we will rely on the generic formulation of the \emph{mirror descent} algorithm, which will allow us to understand what 
cost function is implicitly being minimized for a given choice of architecture and algorithm parameters, as well as capture the geometry of the 
function space in which our algorithm is implicitly searching. Before doing so, let us provide some reminders on optimization in the context 
of supervised learning, which will be the subject of the next lecture.

\newpage
\section{Lecture 3 : Statistical learning and stochastic convex optimization} 
    
    In this lecture, we will explore the connections between optimization geometry and generalization in the well-understood convex case. Specifically, we will derive generalization guarantees based on this geometry. Many of the concepts and proofs reproduced here can be found in classical references on statistical learning \cite{vapnik1999nature,shalev2014understanding,mohri2018foundations} and optimization \cite{bubeck2015convex,nesterov2018lectures}.

    \subsection{Learning and optimization for convex problems}
    Recall that the objective of supervised learning is, for given input and output spaces $\mathcal{X},\mathcal{Y}$, to find a predictor function $h_w :\cX \rightarrow \cY$, parametrized by a vector $w$, with low population error defined by:
    \begin{equation}\label{eq:pop-risk}
      L(h_w) = \mathbb{E}_{x, y}\left[l\left(h_{w}(x) ; y\right)\right],
    \end{equation}
    for some hidden joint density $p(x,y)$ and a chosen loss $l$ function measuring the prediction error for a given pair $(x,y)$. In what follows, we will denote $\mathcal{H}$ the chosen hypothesis class of functions that can be represented with the parameter $w$, and will consider the same notation for optimization on $\mathcal{H}$ or the corresponding parameter space. 
    Since we do not have direct access to the joint distribution $p(x,y)$, we collect a dataset $S = \{(x_1,y_1),\ldots,(x_m,y_m)\}$ of $m$ i.i.d. samples from $p$, and use it to estimate $L(h_w)$ (equivalently denoted $L(w)$) with the corresponding empirical distribution. 
    This leads to the \emph{empirical risk minimization} (ERM) problem to estimate a parameter vector $\hat{w}$:
    \begin{equation}\label{eq:def-w}
      \hat{w} = \arg\min_{h \in \mathcal{H}} \hat{L}(w) := \arg\min _{h \in \mathcal{H}} \frac{1}{m} 
      \sum_{i} l\left(h_{w}\left(x_{i}\right) ; y_{i}\right)
      + \lambda\Psi(w).
    \end{equation}
    Here, the term  $\Psi(w)$ is the regularizer or penalty, while the parameter $\lambda >0$ tunes the regularization strength.
    The purpose of this term is mainly to prevent overfitting. 
    Intuitively, a well chosen $\Psi$ will increase if a corresponding complexity measure of the model increases. Typical examples include the $\ell_{2}$ norm, enforcing regularity at the functional level (RKHS ball, etc ...), or the $\ell_{1}$ norm, inducing sparsity at the level of the parameters $w$.
    Equivalently, \eqref{eq:def-w} can be written as
     \begin{align}\label{eq:erm-b}
      \hat{w}=&\arg \min \frac{1}{m} 
      \sum_{i} l\left(h_{w}\left(x_{i}\right) ; y_{i}\right) \\
      &\mbox{such that} \quad \Psi(w) \leq B
    \end{align}
    where $B$ depends on the regularization coefficient $\lambda$. 
    The main two sources of error that need to be controlled in empirical risk minimization are the optimization and generalization error. 
    The latter is handled using uniform convergence tools to control the convergence rate of the empirical risk towards the population one, when the parameters are constrained to the sublevel sets defined by the penalty. Note that we will 
    not consider problems related to approximation error in this lecture.
    \begin{figure}[H]
      \center
      \includegraphics[scale = 0.2]{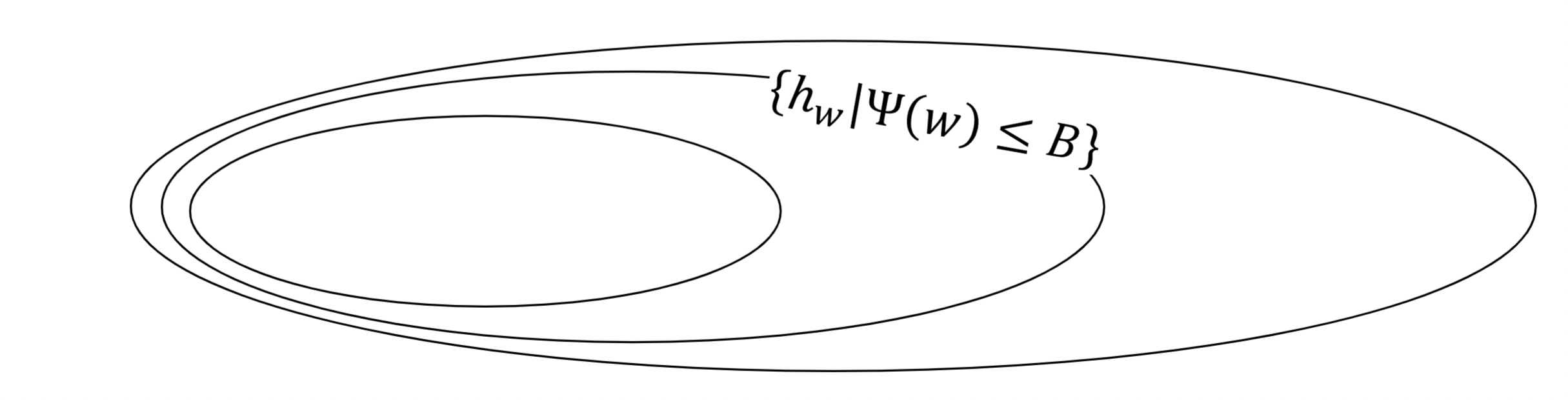}
      \caption{Graphical representation of sublevel sets of the complexity measure.}
      \label{fig-level-curves}
    \end{figure}
  By uniform convergence we mean the following. Let us look back at the ERM learning rule \eqref{eq:erm-b}.
  To ensure that it performs well with respect to the population error, we can bound the difference between the population and empirical errors uniformly over all predictors in the class. That is, for any $\epsilon >0$, we need to quantify how many samples $m$ are needed to ensure that $\sup_{\|\Psi(w)\| \leq B} \big\lvert \hat{L}(w) - L(w)\big\rvert < \epsilon$. For a given 
  learning problem, a dataset that ensures the latter inequality is said to be \emph{$\epsilon-$representative}. 
  It is straightforward to show that an $\frac{\epsilon}{2}$ representative training set ensures that 
  \begin{equation}
    L(\hat{w}) \leq \min_{w}L(w)+\epsilon,
  \end{equation}
  ensuring that the predictor $\hat{w}$ is a good proxy for the true minimizer. The standard way to achieve uniform control of the deviation between $\hat{L}(w)$, based on an available dataset $\mathcal{S}$, and $L(w)$ is 
  to quantify the complexity of the hypothesis class $\mathcal{H}$ and regularity of the loss function, and to relate these quantities to the required sample complexity. A useful 
  complexity measure for hypothesis classes is the \emph{Rademacher complexity}, defined in its empirical form as 
  \begin{equation}
    \mathcal{R}(\mathcal{H}\circ S) = \frac{1}{m}\mathbb{E}_{\boldsymbol{\sigma} \sim \{\pm 1\}^{m}} \left[\sup_{h \in \mathcal{H}}\sum_{i=1}^{m}\sigma_{i}h(z_{i})\right]
  \end{equation}
  where $\boldsymbol{\sigma}$ is a random vector with i.i.d. Rademacher entries. 
  The Rademacher complexity gives a distribution dependent alternative to the VC dimension discussed in Lecture 1, defined for any class of real-valued functions.
  We then have uniform convergence bounds similar to the ones presented using the VC dimension in Lecture 1, e.g., assuming that $\abs{l(h,z)} < c$ for some positive constant $c$, for any $h \in \mathcal{H}$
  \begin{equation}
      \sup_{h \in \mathcal{H}} \vert L(h)-\hat{L}(h) \vert \leq 2\mathcal{R}(l \circ \mathcal{H}\circ S)+c\sqrt{\frac{2/\delta}{m}}
  \end{equation}
  with probability at least $1-\delta$. The interested reader can find more details on the 
  Rademacher complexity along with examples in Chapter 26 of \cite{shalev2014understanding}.
  In this lecture, we will directly derive optimization guarantees for the learning problem formulated as a stochastic optimization problem. \\
\quad \\
    \emph{Example:} Let us assume that the reality is captured by a low-norm linear predictor. Mathematically, this goes as follows:
    \begin{equation}
      \cH = \{h_w(x) \rightarrow \lgl w,x\rgl \mid \|w\|_2 \leq B \}. 
    \end{equation}
    For simplicity, let us assume that the data and the derivative of the loss function are bounded: $\|x\|_2 \leq 1$ and $\|\nabla l\| < 1$, and that $l$ is convex. In that case,
    it can be shown straightforwardly that the empirical Rademacher complexity of the corresponding ERM problem verifies 
    \begin{equation}
      \mathcal{R}(l \circ \mathcal{H}\circ S) \leq \sqrt{\frac{B^{2}}{m}}.
    \end{equation}
    If we denote by $\hat{w}$ the $\argmin$ of the empirical loss $\hat{L}$, we then reach the following generalization result:
    \begin{equation}\label{eq:gener}
        L(\hat{w}) \leq \inf_{\|w\|\leq B} L(w) + O\left(\sqrt{\frac{B^2}{m}}\right).
    \end{equation}
    In order to compute $\hat{w}$ we perform gradient descent on $\hat{L} (w) = \frac{1}{m} \sum_i l(\lgl w,x_i\rgl,y_i)$, where $l(\cdot,\cdot)$ is the loss function. The iteration of the GD goes as follows
     \begin{equation*}
      w^{k+1} = w^k - \eta \nabla \hat{L} (w^k).
    \end{equation*}
    The convergence rate of this algorithm with the optimal step-size, see e.g. \cite{nesterov2018lectures}, is described as:
    \begin{equation*}
      \hat{L} (\bar{w}^T) \leq \inf_{\|w\|\leq B} \hat{L}(w) + O\left(\sqrt{\frac{B^2}{T}}\right)
    \end{equation*}
    However, in each iteration of the gradient descent, we need $m$ gradient computations. Depending on the data this may be computationally costly. Instead, one may use stochastic gradient descent (SGD). In this case, we uniformly pick an example $(x_i,y_i)$ and only calculate its corresponding gradient term. SGD iteration is written as
    \begin{equation*}
      \bar{w}^{k+1} = \bar{w}^k - \eta \nabla l (\lgl \bar{w}^k,x_i\rgl,y_i).
    \end{equation*}
    Thus at each iteration we subtract an unbiased estimator of the full gradient. Indeed, at one may check that
    \begin{equation*}
      \EE [\nabla l (\langle \bar{w}^k,x_i \rangle ,y_i)] = \nabla L (\bar{w}^k).
    \end{equation*}
    For the SGD algorithm we have the same convergence guarantee:
    \begin{equation*}
      \hat{L}(\bar{w}^T) \leq  \inf_{\|w\|\leq B} \hat{L}(w) + O\left(\sqrt{\frac{B^2}{T}}\right).
    \end{equation*}
    Combining this bound with \eqref{eq:gener} we have the following:
    \begin{equation*}
      L(\bar{w}^T) \leq  \inf_{\|w\|\leq B} \hat{L}(w) + O\left(\sqrt{\frac{B^2}{m}}\right)+ O\left(\sqrt{\frac{B^2}{T}}\right).
    \end{equation*}
    Thus the error magnitude of the SGD and approximation error is the same.
    This means that we need to do at most $m$ iteration of SGD because when $m < T$, the dominant term in the previous bound becomes  $O(\sqrt{\frac{B^2}{m}})$.\\
    The one pass SGD can also be viewed as an algorithm to minimize the population risk $L(w) = \EE[l((w,x),y)]$. 
    Indeed, the gradient term satisfies the following:
    \begin{equation*}
      \nabla L(w^{k}) = \EE_{x_i,y_i} \big[\nabla l(\lgl w^k,x_i\rgl,y_i)\big].
    \end{equation*}
    The latter means that instead of this two-step scheme, we can analyze the generalization using the optimization guarantee directly for the population risk. Therefore we obtain the following
    \begin{equation*}
      L(\bar{w}^T) \leq  \inf_{\|w\|\leq B} {L}(w) + O\left(\sqrt{\frac{B^2}{T}}\right).
    \end{equation*} 
    We cannot do more iterations than the number of data points, as we need to have independent samples from the population. 
    Thus the number of iterations is again bounded by the sample size and therefore we get the same bound.
    
    \subsection{Stochastic optimization}

    A stochastic optimization problem is written as 
    \begin{equation}\label{eq:st-opt}
      \min_{ w \in \cW} F(w) = \min_{ w \in \cW} \EE_{z \sim \cD} [f(w,z)]
    \end{equation}
    based on i.i.d. samples $z_1,z_2,\ldots, z_m\sim \cD$. Here the distribution $\cD$ is unknown and we do not have access to $F(w)$. 
    But using the samples we can have estimates of $F$ and $\nabla F$.
    An instance of this problem is the general learning problem. It can be formulated as 
    \begin{equation*}
      \min_{h} F(h) = \min_{h} \EE_{z \sim \cD} [f (h,z)],
    \end{equation*}
    using the data samples  $z_i \sim \cD$, for $i=1,\ldots,m$, where $h$ is a function adapted to the particular model. Here is a short list of examples.
    
    \begin{itemize}
    \item In \emph{supervised} learning we have $\cZ = \cX \times \cY = \{ z = (x,y) \mid x \in \cX, y\in \cY\}$. 
    The function $h:\cX \rightarrow \cY$ and $f(h,z)= l(h(x),y)$, where $l$ is the loss function.
    \item In the \emph{unsupervised} $k$-means clustering problem we have 
    $z =  x \in \RR^d$ and $h = (\mu[1],\ldots,\mu[n]) \in \RR^{d\times k}$. Here $h[i]$ is the center of $i$-th cluster. The objective function for this problem is defined as
      \begin{equation*}
        f((\mu[1], \mu[2], \ldots, \mu[k]), x)=\min_{i}\|\mu[i]-x\|^{2}.
      \end{equation*}
    \item The problem of \emph{density estimation} can also seen as a stochastic optimization. Consider $z = x$ in some measurable space $\cZ$ (e.g. $\RR^d$). Then for each $h$, we define the probability density $p_h(z)$ and the objective function $f(h,z) = -\log p_h(z)$. The function $F$ in this case is the $\sf KL$ divergence.
    \end{itemize}
    
    The fields of stochastic optimization and statistical learning have been developed in parallel in the 60s and 70s \cite{vapnik1974theory,nemirovsky1979problem}.
    
    Let us get back to the stochastic optimization problem \eqref{eq:st-opt}. 
    We saw two ways of solving this problem. 
    The first is based on Sample Average Approximation (SAA) or the ERM. It essentially consists of collecting data $z_1, z_2,\ldots,z_m$ and estimating the expectation term with the empirical mean
    \begin{equation*}
      \hat{F}_m(w) = \frac{1}{m} \sum_i f(w,z_i).
    \end{equation*}
    The other method is the Stochastic Approximation (SA) e.g. SGD. Here, we update $w^i$ using $f(w^i,z_i),\nabla f(w^i,z_i)$ and previous iterates.
    In particular in SGD we have: $w^{i+1} = w^{i} - \eta \nabla f(w^i,z_i)$.
    
    As mentioned previously, in machine (supervised) learning the objective function is the population risk $L(w)$. 
    In this setting, the SGD can be applied in two ways. 
    The first follows the direct SA approach (one-pass SGD).
    \begin{algorithm}[H]
        \begin{algorithmic}[1]
          \State Initialize $w^{(0)} = 0$
          \State At iteration $t = 0, 1,\ldots, T$  
          \item \qquad Draw $(x_t,y_t) \sim \cD$
          \item \qquad $w^{(t+1)} \leftarrow w^{(t)} - \eta_t \nabla l(\lgl w^{(t)},x_t\rgl,y_t) $
          \item Return $\bar{w}^{(T)} = \frac{1}{T} \sum_{t=1}^{T} w^{(t)}$.
      \end{algorithmic}
    \end{algorithm}
    For this algorithm we may obtain the following convergence guarantee\footnote{All the guarantees are satisfied up to a constant.}:
    \begin{equation*}
     {L} (\bar{w}^{(T)}) \leq {L}(w^*) + 2\sqrt{\frac{B^2}{m}} + \sqrt{\frac{B^2}{T}}
    \end{equation*}.
    However, we may perform SGD on ERM. That is our optimization problem has the following form:
    \begin{equation}
      \min_{\|w\|_2 \leq B} \hat{L} (w) = 
      \min_{\|w\|_2 \leq B} \frac{1}{m} \sum_{i=1}^{m} l(\lgl w,x_i\rgl,y_i).
    \end{equation}
    The alternative approach suggests the  minimization scheme below.
    \begin{algorithm}
        \begin{algorithmic}[1]
      \State Draw $(x_1,y_1),\ldots,(x_m,y_m) \sim \cD$
      \State Initialize $w^{(0)} = 0$
      \State At iteration $t$, we pick randomly $i \in \{1,2,\ldots m\}$.
      \State $w^{(t+1)} \leftarrow w^{(t)} - \eta_t \nabla l(\lgl w^{(t)},x_i\rgl,y_i) $
      \State Then, we may perform the step $w^{t+1} \leftarrow {\rm proj}_{\|w\|\leq B} w^{t+1}$. although we may show that implicitly our iterate will converge to this set.
      \State Return $\bar{w}^{(T)} = \frac{1}{T} \sum_{t=1}^{T} w^{(t)}$.
         \end{algorithmic}
    \end{algorithm}
    
    For this algorithm we may obtain the following convergence guarantee:
    \begin{equation*}
     {L} (\bar{w}^{(T)}) \leq {L}(w^*) + 2\sqrt{\frac{B^2}{m}} + \sqrt{\frac{B^2}{T}}.
    \end{equation*}
    
    The several differences between these two schemes. 
    \begin{itemize}
      \item 
          Since we need independent samples for the first scheme, it has as many samples as iterations, ($m = T$). 
          This is not the case for the second method, as we fix initially the $m$ samples and then choose from them. Thus it can have $T>m$ iterations. 
      \item The direct SA approach does not require regularization and thus it does not have a projection step.
      \item The SGD on ERM method is explicitly regularized. 
      The regularization of the direct SA approach hides in the step-size. 
      Indeed, in order for the parameter $w$ to have larger norm, one needs to choose larger step-sizes. 
      In particular, if we choose $\eta_t = \sqrt{B^2t}$, we get the following:
      \begin{equation*}
         {L} (\bar{w}^{(T)}) \leq {L}(w^*) + \sqrt{\frac{B^2}{T}}
      \end{equation*}
      On the other hand, the SGD on ERM has the following generalization error bound:
       \begin{equation*}
         {L} (\bar{w}^{(T)}) \leq {L}(w^*) + 2\sqrt{\frac{B^2}{m}} + \sqrt{\frac{B^2}{T}}
      \end{equation*}
      In both cases $L(w^*)$ is the value of the cost function at the optimal point in the class: $L(w^*) = \min_{\|w\|_2 \leq B} L(w)$.
    \end{itemize}

    \paragraph{Where is the regularization?} Although we mentioned the effect of the step-size on the regularization in the direct approach, 
    it is still not clear why we observe this phenomenon. Let us look back at the standard GD.
    The gradient descent  minimizes the norm $\|w\|_2$.
    Indeed, at each step of the gradient descent we minimize its linear approximation given by the gradient $g^{(t)} = \nabla F(w^{(t)})$:
    \begin{equation*}
      w^{(t+1)}  
      =  \arg\min_{w} \left\{F(w^{(t)}) + \lgl g^{(t)},w - w^{(t)} \rgl \right\}.
    \end{equation*}
    The latter is linear function and thus its minimum is at infinity. 
    Also, on the other hand, the linear approximation is not valid when we get far from the iterate $w^{(t)}$.
   To mitigate this effect, we can add a square regularizer $\|w - w^t\|_2^2 / 2\eta$, leading to 
    \begin{equation}\label{eq:gd-argmin}
    \begin{aligned}
      w^{(t+1)} &\leftarrow  
      \arg\min_{w} \left\{ F(w^{(t)}) + \lgl g^{(t)},w - w^{(t)} \rgl + \frac{1}{2\eta } \| w - w^{(t)}\|_2^2 \right\} \\
      & = \arg\min_{w} \left\{\lgl g^{(t)},w - w^{(t)} \rgl + \frac{1}{2\eta } \| w - w^{(t)}\|_2^2 \right\} \\
      &= w^{(t)} - \eta g^{(t)}.
    \end{aligned}
    \end{equation}
    In order to better understand this in the context of SGD, let us first introduce the notion of stability.

    \subsection{Stability}
    Here we will be studying a notion of stability for loss functions, generically denoted $\hat{F}(w)$, defined as empirical sums of the form $\frac{1}{m}\sum_{i=1}^{m}f(w,z_{i})$, for a given dataset $(z_{i})_{1 \leq i \leq m}$. We start by defining the leave-one-out stability and replace-one-out stability, and then derive generalization bounds using these quantities, see e.g. \cite{shalev2010learnability}.
    \begin{definition}
      Let $\beta : \mathbb{N} \to \mathbb{R}$ be a monotonically decreasing function. A learning rule $\tilde{w}(z_1,\ldots,z_m)$ is \emph{leave one out  $\beta(m)$-stable} if 
      \begin{equation*}
        | f(\tilde{w}(z_1,\ldots,z_{m-1}),z_m) -  f(\tilde{w}(z_1,\ldots,z_{m}),z_m) | \leq \beta(m),
      \end{equation*}
      and \emph{replace one out $\beta(m)$-stable} if, 
      \begin{equation*}
        | f(\tilde{w}(z_1,\ldots,z_{m-1},z'),z_m) -  f(\tilde{w}(z_1,\ldots,z_{m}),z_m) | \leq \beta(m),
      \end{equation*}
      where $z'$ is a new independent sample from the hidden distribution $\mathcal{D}$.
    \end{definition}

    For simplicity we will assume that the learning rule is symmetric. That is $\tilde{w}(z_1,\ldots,z_m) = \tilde{w}(\sigma({z}_1),\ldots,\sigma(z_m))$, where $\sigma$ is any permutation defined on $\{1,2,\ldots,m\}$.
    \begin{theorem}\label{th-stab}
      Define $\widehat{F}(w) := \frac{1}{m} \sum_{i=1}^{m} f_i(w)$. If $\tilde{w}$ is symmetric and $\beta(m)$ stable then 
      \begin{equation*}
        \EE[f\left(\widetilde{w}\left(z_{1}, \ldots, z_{m-1}\right),z_m\right)] 
        \leq \EE[\widehat{F}\left(\widetilde{w}\left(z_{1}, \ldots, z_{m}\right),z_m\right)] + \beta(m)
      \end{equation*}
    \end{theorem}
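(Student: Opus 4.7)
The plan is to go through three short manipulations: a ``renaming'' step that turns a population expectation into an expectation against one of the training points, then the stability inequality, and finally a symmetry step that averages over the index being singled out.

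First I would observe that since $z_m \sim \cD$ is independent of $z_1,\ldots,z_{m-1}$, the population risk of the predictor trained on the first $m-1$ samples can be written as
\begin{equation*}
  \EE\big[F\big(\widetilde{w}(z_1,\ldots,z_{m-1})\big)\big]
  = \EE\big[f\big(\widetilde{w}(z_1,\ldots,z_{m-1}),\, z_m\big)\big],
\end{equation*}
because conditioning on $z_1,\ldots,z_{m-1}$ fixes the predictor and $z_m$ plays the role of a fresh test point distributed according to $\cD$. This is the only place where the independence assumption on the dataset is used, so it is worth stating carefully.

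Next I would apply leave-one-out stability pointwise in $(z_1,\ldots,z_m)$:
\begin{equation*}
  f\big(\widetilde{w}(z_1,\ldots,z_{m-1}),\, z_m\big)
  \;\leq\; f\big(\widetilde{w}(z_1,\ldots,z_m),\, z_m\big) + \beta(m),
\end{equation*}
and take expectations on both sides. This already gives the right-hand side of the claim with $\hat F$ replaced by $f(\widetilde{w}(z_1,\ldots,z_m),z_m)$, and reduces the problem to identifying the latter quantity in expectation with $\EE[\hat F(\widetilde{w}(z_1,\ldots,z_m))]$.

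The last step uses symmetry of the learning rule and the i.i.d.\ assumption on the sample. For any index $i \in \{1,\ldots,m\}$, permuting coordinates of the dataset so that $z_i$ is moved to the last slot does not change $\widetilde{w}$ and does not change the joint distribution of $(z_1,\ldots,z_m)$, so
\begin{equation*}
  \EE\big[f\big(\widetilde{w}(z_1,\ldots,z_m),\, z_i\big)\big]
  = \EE\big[f\big(\widetilde{w}(z_1,\ldots,z_m),\, z_m\big)\big]
\end{equation*}
for every $i$. Averaging over $i$ turns the right-hand side into $\EE[\hat F(\widetilde{w}(z_1,\ldots,z_m))]$ by definition of $\hat F$, and combining with the stability bound finishes the proof. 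The only genuinely delicate point is the symmetry/permutation argument in this last step: it relies both on $\widetilde{w}$ being invariant under permutations of its inputs and on $(z_1,\ldots,z_m)$ being exchangeable, and without either of these assumptions one cannot identify the single-term expectation with the empirical average.
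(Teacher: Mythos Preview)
Your proof is correct and uses the same three ingredients as the paper: the identification of the population risk with $\EE[f(\widetilde{w}(z_1,\ldots,z_{m-1}),z_m)]$, leave-one-out stability, and the symmetry/exchangeability argument. The only difference is the order: the paper first uses symmetry to rewrite the left-hand side as an average $\tfrac{1}{m}\sum_i \EE[f(\widetilde{w}(z_{-i}),z_i)]$, then applies stability termwise and recognizes the empirical risk, whereas you apply stability once to the $m$-th term and then invoke symmetry to identify $\EE[f(\widetilde{w}(z_1,\ldots,z_m),z_m)]$ with $\EE[\widehat F(\widetilde{w}(z_1,\ldots,z_m))]$. This reordering is immaterial; the arguments are essentially the same.
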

    \begin{proof}
      By symmetry of $\widetilde{w}$ we have
      \begin{equation*}
      \begin{aligned}
        \mathbb{E}_{z_{1}, \ldots, z_{m}}&\left[f\left(\widetilde{w}\left(z_{1}, \ldots, z_{m-1}\right), z_{m}\right)\right] \\
        &=\frac{1}{m} \sum_{i=1}^{m} \mathbb{E}\left[f\left(\widetilde{w}\left(z_{1}, \ldots, z_{i-1}, z_{i+1}, \ldots, z_{m}\right), z_{i}\right)\right]
      \end{aligned}
      \end{equation*}
      Using the stability of the function $f$, we get the following
      \begin{equation*}
      \begin{aligned}
        \mathbb{E}_{z_{1}, \ldots, z_{m}}&\left[f\left(\widetilde{w}\left(z_{1}, \ldots, z_{m-1}\right), z_{m}\right)\right] \\
        & \leq \frac{1}{m} \sum_{i=1}^{m}\left(\mathbb{E}\left[f\left(\widetilde{w}\left(z_{1}, \ldots, z_{m}\right), z_{i}\right)\right]+\beta(m)\right) \\
        &=\mathbb{E}\left[\frac{1}{m} \sum_{i=1}^{m} f\left(\widetilde{w}\left(z_{1}, \ldots, z_{m}\right), z_{i}\right)\right]+\beta(m)\\
        &=\mathbb{E}\left[\widehat{F}\left(\widetilde{w}_{m}\right)\right]+\beta(m)
      \end{aligned}
      \end{equation*}
    \end{proof}
    This result yields generalization for stable learning rules. 
    However, one needs to take into account that stability may be tricky in the learning problem. 
    In the case, when the predictor interpolates the data, the empirical error is equal to zero. 
    But most interpolators hardly satisfy the stability condition with a small $\beta(m)$. 
    Thus, the right hand side will be very large and hence non-informative. 
    On the other hand, let us consider the zero predictor. 
    It is stable, as the rule does not depend on the data. 
    However, it has a very large empirical error. 
    We therefore need a different rule that is stable and has small empirical error, to guarantee generalization.

    \subsection{Strong convexity}
    Let us define strong convexity for real valued functions.
    \begin{definition}
      The function $\Psi : \cW \rightarrow \RR$ is $\alpha$-strongly convex w.r.t. to a norm $\|w\|$ if for 
      any $w,w' \in \mathcal{W}$:
      \begin{equation}\label{eq:str-conv}
        \Psi(w') \geq \Psi(w)  + \lgl \nabla \Psi(w), w' - w \rgl + \frac{\alpha}{2} \|w' - w\|^2. 
      \end{equation}
    \end{definition}
    Strong convexity essentially means that the function is bounded below by a quadratic function.
    This property depends on the norm unlike the convexity which can be defined on a vector space. 
    Let us apply Eq.\eqref{eq:str-conv} for the optimum point $w_0 := \argmin_{w} \Psi(w)$.
    Since $\nabla \Psi(w_0) = 0$ we get the following for every $w$:
    \begin{equation}
    \label{eq:strg_conv_prop}
        \Psi(w) - \Psi(w_0)  \geq  \frac{\alpha}{2} \|w - w_0\|^2. 
    \end{equation}
    The latter means that the difference between the function value at any point and the optimal one is proportional to the distance from the optimal point. 
    Let us now establish the connection between strong convexity and stability. For the dataset $S = \{z_1,\ldots,z_m\}$ we define the regularized ERM (RERM) as follows:
    \begin{equation*}
      {\sf RERM}_{\lambda\Psi}(S) = \arg\min_{w \in \cW} \left\{\widehat{F}(w) + \lambda \Psi(w)\right\}.
    \end{equation*}
    Here $\Psi$ is an $\alpha$-strongly convex function and $\widehat{F}$ is the empirical mean corresponding to the dataset $S$.
    We now recall the definition of Lipschitz continuity.
    \begin{definition}
      The function $f(w,z)$ is $G$-Lipschitz w.r.t. $\|\cdot\|$ if and only if for every $z \in \cZ$ and $w,w' \in \cW$ 
      \begin{equation*}
        \vert f(w,z) - f(w',z) \vert \leq G \|w' - w\|.
      \end{equation*}
    \end{definition}
    One may notice that Lipschitz continuity implies some kind of ``stability", as it essentially means that small perturbation of the argument
    will not change the function value drastically. 
    As for strong convexity, the Lipschitzness also depends on the norm of whatever normed space we are working on. 
    This yields that $\|\nabla_w f(w,z)\|_* \leq G$, where $\|\cdot\|_*$ is the dual norm and $\nabla_w f(w,z)$ is viewed as an element of the dual of $\mathcal{W}$.
    We will assume that the norm is the same for both properties. 
    \begin{proposition}
      If $f$ is $G$-Lipschitz  and $\Psi$ is $\alpha$-strongly convex then 
      ${\sf  RERM}_{\lambda \Psi} (S)$ is stable with a coefficient 
      \begin{equation}
        \beta(m)\leq \frac{2G^2}{m\lambda\alpha}.
      \end{equation}
    \end{proposition}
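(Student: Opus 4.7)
The plan is to exploit the $\lambda\alpha$-strong convexity of the regularized empirical objective to bound how much the minimizer can move when one data point is replaced, and then use $G$-Lipschitzness of $f$ to translate this parameter-space bound into the desired function-value bound.

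First, I would set up notation: denote $w_S = {\sf RERM}_{\lambda\Psi}(S)$ for $S = (z_1,\dots,z_m)$ and $w_{S'} = {\sf RERM}_{\lambda\Psi}(S')$ where $S' = (z_1,\dots,z_{m-1},z')$ differs from $S$ in the last coordinate. Write $\widehat{F}_S(w) + \lambda\Psi(w)$ for the regularized objective on $S$, and note the key identity $\widehat{F}_{S'}(w) - \widehat{F}_S(w) = \tfrac{1}{m}\bigl(f(w,z') - f(w,z_m)\bigr)$.

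The central step is the \emph{two-sided strong convexity} trick. Since $\widehat{F}_S$ is convex and $\lambda\Psi$ is $\lambda\alpha$-strongly convex, so is $\widehat{F}_S + \lambda\Psi$; applying \eqref{eq:strg_conv_prop} at its minimizer $w_S$ gives
\begin{equation*}
\bigl[\widehat{F}_S(w_{S'}) + \lambda\Psi(w_{S'})\bigr] - \bigl[\widehat{F}_S(w_S) + \lambda\Psi(w_S)\bigr] \geq \tfrac{\lambda\alpha}{2}\|w_{S'}-w_S\|^2,
\end{equation*}
and symmetrically for $\widehat{F}_{S'} + \lambda\Psi$ at $w_{S'}$. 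Summing the two inequalities makes the $\lambda\Psi$ terms cancel and, after substituting the identity above, makes the $\widehat{F}_S$ contributions cancel as well, leaving
\begin{equation*}
\tfrac{1}{m}\bigl[(f(w_S,z') - f(w_{S'},z')) + (f(w_{S'},z_m) - f(w_S,z_m))\bigr] \geq \lambda\alpha\,\|w_S - w_{S'}\|^2.
\end{equation*}
Each of the two bracketed differences is at most $G\|w_S - w_{S'}\|$ by Lipschitzness, so the left-hand side is at most $\tfrac{2G}{m}\|w_S - w_{S'}\|$, and dividing through yields $\|w_S - w_{S'}\| \leq \tfrac{2G}{m\lambda\alpha}$.

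Finally, applying Lipschitzness one more time gives $|f(w_{S'},z_m) - f(w_S,z_m)| \leq G\,\|w_{S'}-w_S\| \leq \tfrac{2G^2}{m\lambda\alpha}$, which is the replace-one-out stability bound; the leave-one-out version follows by the same argument with $\tfrac{1}{m}f(w,z')$ replaced by $0$. I expect the only subtle point to be the bookkeeping in the cancellation step — making sure the linear term $\tfrac{1}{m}(f(w,z')-f(w,z_m))$ is handled consistently on both sides so that only the cross differences remain — but once the sum is written out, the rest is immediate.
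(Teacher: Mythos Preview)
Your proof is correct and reaches the stated bound $\tfrac{2G^2}{m\lambda\alpha}$. The core ingredients are the same as the paper's (strong convexity of the regularized objective to control how far the minimizer moves, then Lipschitzness to pass to function values), but the execution differs in a small, instructive way.

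The paper uses a \emph{one-sided} argument: it applies strong convexity only to $h_S := \widehat{F}_S + \lambda\Psi$ at its minimizer $w_S$, obtaining $h_S(w_{S'}) - h_S(w_S) \geq \lambda\alpha\,\|w_{S'}-w_S\|^2$ (the paper drops the $1/2$ here), and then writes $h_S(w_{S'}) - h_S(w_S) = [h_{S'}(w_{S'}) - h_{S'}(w_S)] + \tfrac{1}{m}[\cdots]$, discarding the first bracket via plain optimality of $w_{S'}$ for $h_{S'}$ (no strong convexity needed there). You instead apply strong convexity to \emph{both} $h_S$ and $h_{S'}$ and sum, so that the $\lambda\Psi$ terms cancel symmetrically and the factor $\lambda\alpha$ (without $1/2$) appears cleanly on the right. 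Both routes are standard; your symmetrized version is tidier with the constants, while the paper's one-sided version yields a slightly sharper $\tfrac{G^2}{m\lambda\alpha}$ in the leave-one-out case because only one Lipschitz term, $\tfrac{1}{m}|f(w_{S^{(-m)}},z_m)-f(w_S,z_m)|$, survives the decomposition.
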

    \begin{proof}
        In the following proof we will abbreviate $RERM$ with $R$. It is straightforward to check that if $f$ is $G-$Lipschitz then so is $\hat{F}$ (with respect to $w$, conditionally on the $z_{i}$). Denote $S = (z_{1},...,z_{m})$ the full dataset and $S^{(-m)} = (z_{1},...,z_{m-1})$ the dataset in which $z_{m}$ is left out. Now define $h_{S}(w) = \hat{F}_{S}(w)+\lambda \Psi(w)$, where $\hat{F}_{S}(w) = \frac{1}{m}\sum_{i=1}^{m}f(w,z_{i})$. Then $h_{S}$ is $\lambda \alpha$ strongly convex and from equation \eqref{eq:strg_conv_prop} we have
        \begin{equation}
        \label{eq:str_cvx_dist}
            h_{S}(w)-h_{S}({\sf  R}_{\lambda \Psi} (S)) \geq \lambda \alpha \norm{w-{\sf  R}_{\lambda \Psi} (S)}_{2}^{2}.
        \end{equation}
        Now, denote ${\sf  R}_{\lambda \Psi} (S^{(-m)})$ the solution to the ERM problem with the dataset $S^{(-m)}$. Then
        \begin{align}
          \label{eq:inter_1}
            &h_{S}({\sf  R}_{\lambda \Psi} (S^{(-m)}))-h_{S}({\sf  R}_{\lambda \Psi} (S)) = h_{S^{(-m)}}({\sf  R}_{\lambda \Psi} (S^{(-m)}))-h_{S^{(-m)}}({\sf  R}_{\lambda \Psi} (S)) \notag \\
            &+\frac{1}{m}\left(f({\sf  R}_{\lambda \Psi} (S^{(-m)}),z_{m})-f({\sf  R}_{\lambda \Psi} (S),z_{m})\right).
        \end{align}
        Since $h_{S^{(-m)}}(w)$ is positive and strongly convex, 
        \begin{align}
            h_{S^{(-m)}}({\sf  R}_{\lambda \Psi} (S^{(-m)}))-h_{S^{(-m)}}({\sf  R}_{\lambda \Psi} (S)) \geq 0
        \end{align}
        so that 
        \begin{align}
            h_{S}({\sf  R}_{\lambda \Psi} (S^{(-m)}))-&h_{S}({\sf  R}_{\lambda \Psi} (S)) \notag \\
            &\leq \frac{1}{m}\left(f({\sf  R}_{\lambda \Psi} (S^{(-m)}),z_{m})-f({\sf  R}_{\lambda \Psi} (S^{(-m)}),z_{m})\right) \notag \\
            &\leq G\norm{{\sf  R}_{\lambda \Psi} (S^{(-m)})-{\sf  R}_{\lambda \Psi} (S)}_{2}
        \end{align}
        using the Lipschitz continuity of $f$. Combining this inequality with Eq.\eqref{eq:str_cvx_dist}, we reach 
        \begin{equation}
            \norm{{\sf  R}_{\lambda \Psi} (S^{(-m)})-{\sf  R}_{\lambda \Psi} (S)}_{2} \leq \frac{G}{\alpha m \lambda},
        \end{equation}
        and
        \begin{equation}
            f({\sf  R}_{\lambda \Psi} (S^{(-m)}),z_{m})-f({\sf  R}_{\lambda \Psi} (S),z_{m}) \leq \frac{G^{2}}{\alpha m \lambda}.
        \end{equation}
        In the case of replace-one-out stability, we define $S^{'} = (z_{1},...,z_{m-1},z')$ and the related estimator ${\sf  R}_{\lambda \Psi} (S^{'})$. Eq.\eqref{eq:inter_1} then becomes 
        \begin{align}
            &h_{S}({\sf  R}_{\lambda \Psi} (S^{'}))-h_{S}({\sf  R}_{\lambda \Psi} (S)) = h_{S^{'}}({\sf  R}_{\lambda \Psi} (S^{'}))-h_{S^{'}}({\sf  R}_{\lambda \Psi} (S)) \notag \\
            &+\frac{1}{m}\left(f({\sf  R}_{\lambda \Psi} (S^{'}),z_{m})-f({\sf  R}_{\lambda \Psi} (S),z_{m})\right)+\frac{1}{m}\left(f({\sf  R}_{\lambda \Psi} (S^{'}),z')-f({\sf  R}_{\lambda \Psi} (S),z')\right),
        \end{align}
        leading to 
        \begin{equation}
          \norm{{\sf  R}_{\lambda \Psi} (S^{'})-{\sf  R}_{\lambda \Psi} (S)}_{2} \leq \frac{2G}{\alpha m \lambda}.
      \end{equation}
    \end{proof}
    In the rest of the lecture, without loss of generality, we may assume that the regularization function $\Psi$ is $1$-strongly convex. 
    This is easily achieved by tuning the parameter $\lambda$ accordingly.
    \Cref{th-stab} then yields the following
    \begin{equation*}
      \begin{aligned}
        \mathbb{E}\left[F\left({\sf RERM}_{\lambda \Psi}(S)\right)\right]
        &\leq \mathbb{E}\left[\widehat{F}\left({\sf RERM}_{\lambda \Psi}(S)\right)\right]+\frac{2 G^{2}}{\lambda m}.
      \end{aligned}
    \end{equation*}
    Without loss of generality we may assume that $\Psi$ is a positive function. Then using the definition 
    of $\sf RERM$, for every $w \in \cW$ we obtain
    \begin{equation*}
      \begin{aligned}
        \mathbb{E}\left[F\left({\sf RERM}_{\lambda \Psi}(S)\right)\right]
        & \leq \mathbb{E}\left[\widehat{F}\left({\sf RERM}_{\lambda \Psi}(S)\right)+\lambda \Psi({\sf RERM}_{\lambda \Psi}(S))\right]
        +\frac{2 G^{2}}{\lambda  m}\\
        & \leq \mathbb{E}\left[\widehat{F}(w)+\lambda \Psi(w)\right]+\frac{2 G^{2}}{\lambda m}\\
        &=F(w)+\lambda \Psi(w)+\frac{2 G^{2}}{\lambda  m}\\
        &\leq \inf _{w \in \mathcal{W}} F(w) + \sqrt{\frac{8 G^{2}\sup_{w}\Psi(w)}{ m}},
      \end{aligned}
    \end{equation*}
    where the last line is obtained by choosing $\lambda = \sqrt{\frac{2G^{2}}{\alpha m \sup_{w} \Psi(w)}}$. 
    In particular, using the constraint $\Psi(w) \leq B$ and optimizing over the parameter $\lambda$, the last inequality can be rewritten as
    \begin{equation*}
        \mathbb{E}\left[F\left({\sf RERM}_{\lambda \Psi}(S)\right)\right] - F(w^*) \leq O\left(\sqrt{\frac{\sup \{\|\nabla_w f\|_*^{2}\}B}{m}}\right).
    \end{equation*}
    Let us now look back at the Risk minimization problem in the convex case:
    \begin{equation*}
      \min_{w\in\cW} \EE_{z\sim \cD} [f(w,z)] = 
      \min_{w\in\cW} \EE_{z\sim \cD} [\textsf{loss}(\lgl w, \phi(x)\rgl,y)].
    \end{equation*}
    $\cW$ is assumed to be convex.
    If ${\sf loss}(\hat{y},y) $ is convex in $\hat{y}$, then the problem is convex.
    For a non-trivial loss, the composition $\textsf{loss} (h_w(x),y)$ is convex in $w$ \textbf{only} when $h_w(x) = \lgl w,\phi(x) \rgl$.
    In this setting Lipschitz continuity may be established as follows. 
    Assume that the loss function $\sf loss(y,y')$ is $g$-Lipschitz continuous w.r.t. $y$. Then
    \begin{equation}
      | f(w,(x,y)) - f(w',(x,y)) | \leq g \|\phi(x)\|_* \cdot \|w - w'\|.
    \end{equation}
    In particular, the learning problem becomes $G = gR$ Lipschitz continuous, if we assume that $\|\phi(x)\|_* \leq R$, for some $R > 0$.
    Hence the generalization bound becomes
   \begin{equation*}
      \mathbb{E}\left[F\left({\sf RERM}_{\lambda \Psi}(S)\right)\right] - F(w^*) 
      \leq O\left(\sqrt{\frac{\Psi(w^*)\sup \|\phi(x)\|_*}{m}}\right).
    \end{equation*} 
    In order for the right-hand side to be small we need to have an appropriate sample size. 
    Below we derive the sample complexity for several examples.
    \begin{itemize}
      \item $\Psi(w) = \frac{1}{2} \|w\|_2^2$ is $1$-strongly convex w.r.t. $\|w\|_2$ then $m \propto \|w\|_2^2 \cdot \|\phi(x)\|_2^2$.
      \item $\Psi(w) = \frac{1}{2} w^{T} Q w$ is $1$-strongly convex w.r.t. $\|w\|_Q$ then   $m \propto  (w^{T} Q w) (x^{T} Q^{-1} x)$.
      Here we choose $Q$ to be small in some direction, then we pay for it in its dual $Q^{-1}$.
      \item $\Psi(w) = \frac{1}{2(p-1)} \|w\|_p^2$ is $1$-strongly convex w.r.t. $\|w\|_p$, 
      then   $m \propto  \frac{\|w\|_p^2 \cdot \|x\|_q^2}{p-1}$.
      Here, we would like the $q$ norm of the data to be small. Thus, we want $q$ to be large. 
      Hence $p$ must be close to $1$, which explodes the denominator of the sample complexity.
      \item $\Psi(w) = \sum_i w[i] \log \big(\frac{w[i]}{1/d}\big)$ is $1$-strongly convex w.r.t. $\|w\|_1$.
      This problem is called the entropic minimizer. Its sample complexity satisfies   
      $m \propto  \frac{\|w\|_1^2 \cdot \|x\|_{\infty}^2}{p-1}$. 
    \end{itemize}
    We see in this example that in order to have good sample complexity we need to have matching geometries for the data and the parameter.
    \paragraph{Online learning} 
    The online learning paradigm resembles the previously discussed stochastic optimization framework.
    The optimizer provides $w_i$. We give it to the adversary to compute $f(w_i,z_i)$
    and then use the value of $f(w_i,z_i)$ to compute $w_{i+1}$.

    The stability in online learning setting plays an important role. 
    Consider the Follow The Leader (FTL) rule. 
    It proposes to choose
    \begin{equation}
      \hat{w}_m (z_1,\ldots,z_{m-1}) = \arg\min_{w\in\cW} \sum_{i=1}^{m-1} f(w,z_i) 
    \end{equation}
    However, this is an unstable rule. 
    A better method called Be the Leader (BTL) suggests the following:
    \begin{equation}
      \hat{w}_m (z_1,\ldots,z_{m-1}) = \arg\min_{w\in\cW} \sum_{i=1}^{m} f(w,z_i).
    \end{equation}
    This has great convergence properties, but it is not implementable as we assume to have access to  $f(w,z_m)$. 
    Instead, we regularize the FTL. The Follow the Regularized Leader (FTRL) goes as 
    \begin{equation*}
      \hat{w}_m (z_1,\ldots,z_{m-1}) = \arg\min_{w\in\cW} \sum_{i=1}^{m-1} f(w,z_i) + \lambda_t \Psi(w).
    \end{equation*} 
    This algorithm however, does not resemble the one-pass SGD algorithm (see also equation \eqref{eq:gd-argmin}):
    \begin{equation}
      w_{t+1} = \argmin_{w} \lgl \nabla f(w_t,z_t),w \rgl + \lambda_t \|w - w_t\|_2^2/2.
    \end{equation} 
    With the increase of the iteration $m$, our FTRL needs to minimize a more complex sum-decomposable function. 
    This becomes costly for large $m$'s. 
    For convex objectives we can relax the problem. We minimize the linear approximation of the objective (Linearized FTRL):
    \begin{equation}\label{eq:lin-ftrl}
      \hat{w}_m^{\lambda} (z_1,\ldots,z_{m-1}) = \arg\min_{w\in\cW} \frac{1}{m} \left\lgl \sum_{i=1}^{m-1} 
      \nabla f(w_i,z_i),w\right\rgl + \lambda_t \Psi(w).
    \end{equation} 
    This problem is simpler than the FTRL, but it is still more complex than the one-pass SGD because of its dependence on the previous gradient evaluations. To fill this gap we introduce the mirror descent method.

    \subsection{Mirror descent}

    In this part of the lecture, we will present the mirror descent algorithm. We will get a better understanding of it in the upcoming lectures.
    Let us define the Bregman divergence. 
    For a given strictly convex, continuously differentiable function $\Psi$ defined on a convex set $\Omega$, the Bregman divergence between two points $x,y$ in $\Omega$ is given by 
    \begin{equation*}
      D_{\Psi} (x \mid y) =  \Psi(x) - (\Psi(y) + \lgl\nabla \Psi(y),x-y\rgl).
    \end{equation*}
    Intuitively, it corresponds to the distance, at point $x$, between the function $\Psi$ and its linearization at point $y$, see \Cref{fig:Bregman} for an illustration.
    \begin{figure}
      \center
      \includegraphics[scale = 0.15]{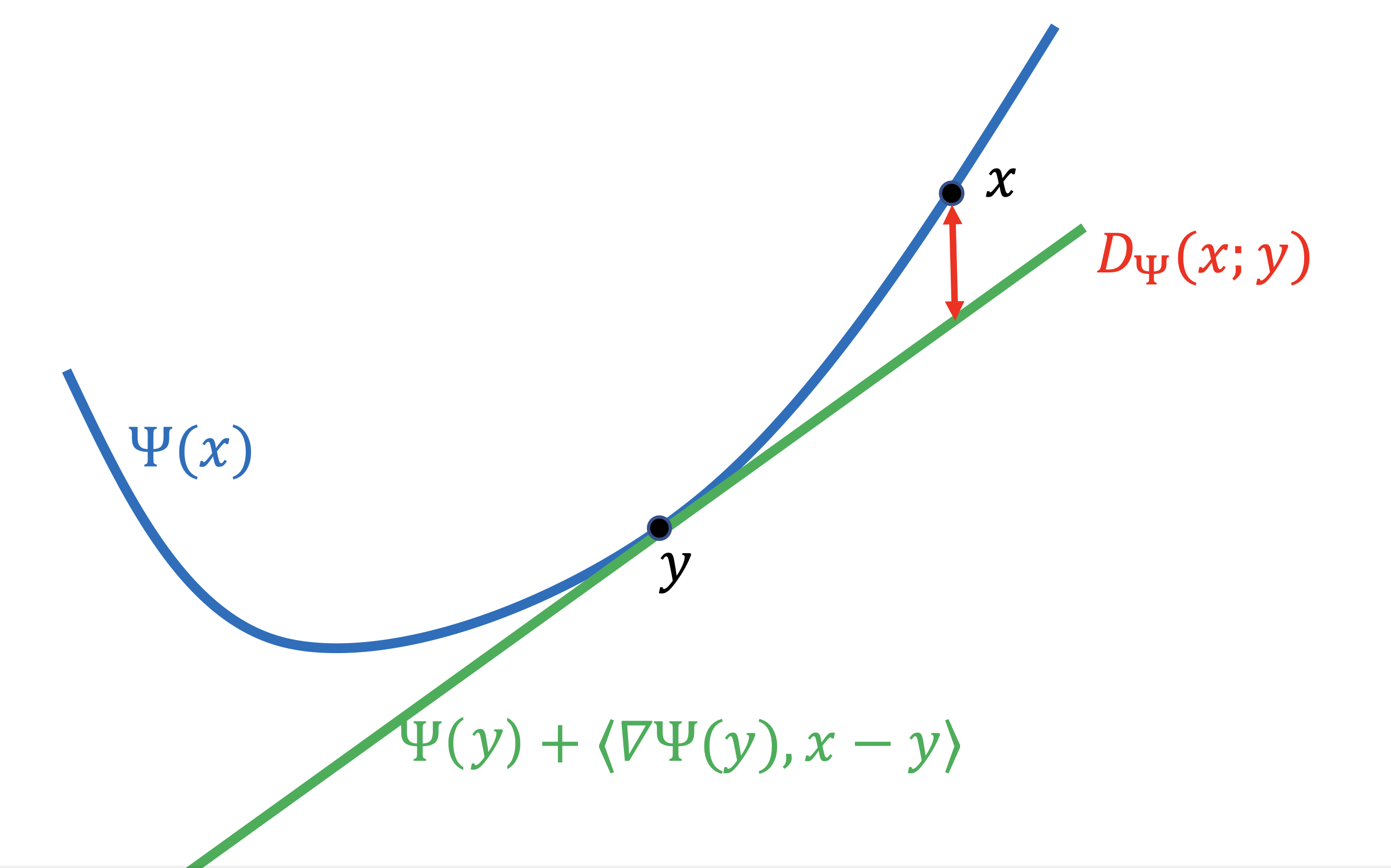}
      \caption{Bregman divergence.}
      \label{fig:Bregman}
    \end{figure}
    In particular, for $\alpha$-strongly convex functions, it holds that $D_{\Psi} (x \mid y) \geq \alpha\|x- y\|^2/2$.
    The mirror descent is then defined as the following iterative scheme:
    \begin{align}\label{eq:mirror-descent}
        w_{t+1} &= \arg\min_{w \in \cW} \lgl \nabla f(w_t,z_t) , w \rgl + \lambda_t D_{\Psi} (w \mid w_t)\\
                 &= \Pi_{\Psi}^{\cW} \left(\nabla \Psi^{-1} \left( \nabla \Psi (w_t) - \frac{1}{\lambda_t} \nabla f(w_t,z_t) \right)\right),
    \end{align}
    where $ \Pi_{\Psi}^{\cW}(w) = \min_{w' \in \cW} D_{\Psi} (w' \mid w)$ is a projection step on $\mathcal{W}$ with respect to the Bregman distance. One may easily verify that if $\Psi$ is a quadratic function, then its Bregman divergence is also quadratic, and we recover the standard gradient descent algorithm, assuming the constraint set $\mathcal{W}$ is the domain of definition of $\Psi$. Indeed, taking $\Psi(x) = \frac{1}{2}\norm{x}_{2}^{2}$, we obtain 
    \begin{equation}
        D_{\Psi}(x \vert y) = \frac{1}{2}\norm{x-y}_{2}^{2}
    \end{equation}
    so that 
    \begin{align}
        &\arg\min_{w \in \cW} \lgl \nabla f(w_t,z_t) , w \rgl + \lambda_t D_{\Psi} (w \mid w_t) \\
        &= w_{t}-\frac{1}{\lambda_{t}}\nabla f(w^{t},z^{t})
    \end{align}
    Hence the one-pass SGD is also an instance of the mirror descent.
    Let us now look at the minimization problem \eqref{eq:mirror-descent}. The optimality condition is the following:
    \begin{equation}
      \begin{aligned}
        0  &= \nabla_w \big(\lgl \nabla f(w_t,z_t) , w \rgl + \lambda_t D_{\Psi} (w \mid w_t)\big)\\
        &=  \nabla f(w_t,z_t) + \lambda_t \big(\nabla \Psi(w) -  \nabla \Psi(w_t)\big),\\
      \end{aligned}
     \end{equation} 
    which leads to
     \begin{equation*}
      \begin{aligned}
        \nabla \Psi(w)  &=   \nabla \Psi(w_t)  - \frac{1}{\lambda_t} \nabla f(w_t,z_t). \\
      \end{aligned}
     \end{equation*} 
     For differentiable, strongly convex functions the gradient is invertible, and we may write the following formula for $w_{t+1}$
     \begin{equation*}
       \begin{aligned}
         w_{t+1} = \Pi_{\Psi}^{\cW}\left( \big(\nabla \Psi\big)^{-1} 
         \Big(\nabla \Psi(w_t)  - \frac{1}{\lambda_t} \nabla f(w_t,z_t)\Big) \right).
       \end{aligned}
     \end{equation*}
    Here $\Pi_{\Psi}^{\cW}(w_0)$ is the projection of $w_0$ on $\cW$: 
    \begin{equation*}
      \Pi_{\Psi}^{\cW}(w_0) = \arg\min_{w\in \cW} D_{\Psi}(w\mid w_0).
    \end{equation*}
    See \Cref{fig:mirror} for illustration of the mirror descent algorithm.
     \begin{figure}
      \center
      \includegraphics[scale = 0.25]{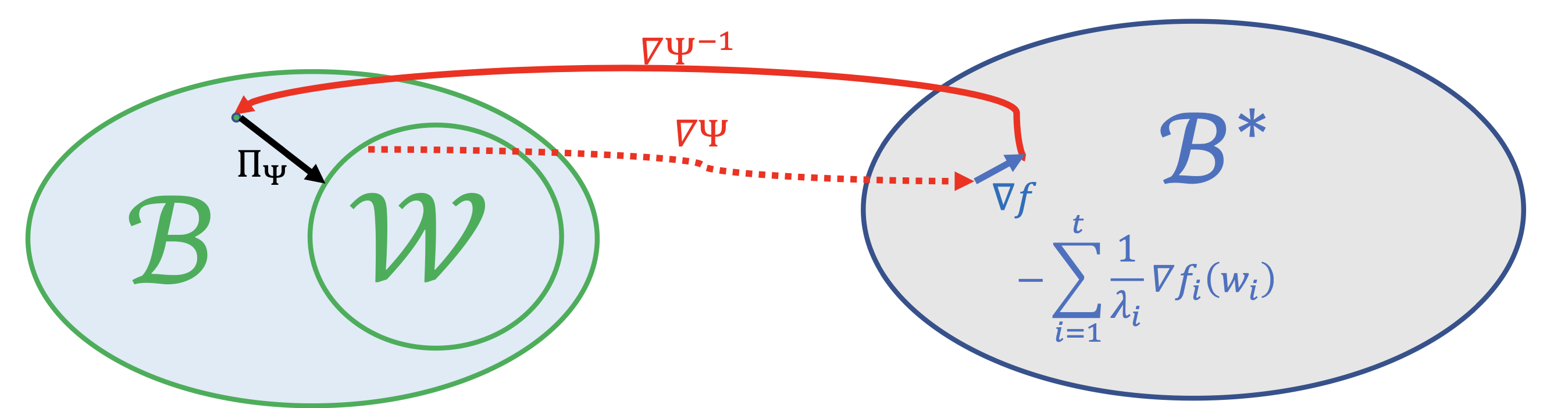}
      \caption{Suppose that the hypothesis class $\cW$ is a subset of $\cB$, where $\cB$ is a Banach space. As we have seen above, the gradients live in the dual space $\cB^*$. Then each iteration of the mirror descent consists of four steps. First we take the map
      $\nabla \Psi : \cB \rightarrow \cB^*$. Then we do a step  in $\cB^*$ in the direction of the gradient $\nabla f$. At the third step, we take $\Psi^{-1}$, which maps $\cB^*$ to $\cB$. The final step is the projection of this point to the hypothesis set $\cW$.}
      \label{fig:mirror}
    \end{figure}
    Suppose that $\cW$ is the entire space: $\cW = \cB$. Then in the dual space $\cB^*$ we get a sum of the gradients, as there is no projection in the primal space. 
    We take the initial point $w_0 = \argmin_w \Psi(w)$. This choice is intuitive as one would like to start with a model with lowest complexity. Thus the mirror descent iteration goes as follows:
     \begin{equation*}
       \begin{aligned}
         w_{t+1} &=  \big(\nabla \Psi\big)^{-1} 
         \Big(\nabla \Psi(w_0)  - \sum_{i=1}^{t} \frac{1}{\lambda_i} \nabla f(w_i,z_i)\Big)\\
         &= \argmin_w \Big(\sum_{i=1}^{t} \frac{1}{\lambda_i} \big\lgl\nabla f(w_i,z_i),w \big\rgl + \nabla \Psi(w)\Big),\\
       \end{aligned}
     \end{equation*}
     and we recover the linearized FTRL iteration \eqref{eq:lin-ftrl}.

\subsection{Mirror descent and implicit bias of optimization}
    We see that the generic formulation of the mirror descent algorithm allows to capture the geometry of an optimization problem 
    through the Bregman distance of a chosen potential and the associated projection operator. For a given function $F(w)$ to optimize (over weights $w$), mirror descent implicitly 
    minimizes an effective cost at each time step taking the form $\lgl \nabla f(w_t,z_t) , w \rgl + \lambda_t D_{\Psi} (w \mid w_t)$. We will see that, with some additional work,
    this framework can be used to understand implicit bias in optimization : for a given descent algorithm, we can find the potential that is being implicitly minimized and the optimality conditions 
    that the fixed point of the descent method corresponds to. This will be the core topic of the next lecture.
\newpage
\section{Lecture 4 : Mirror descent and implicit bias of descent algorithms}
We now have a good understanding of stochastic and online optimization for supervised learning. Consider an objective of the form:

$$
f(w, (x,y)) = loss(\langle w, \phi(x) \rangle, y)
$$
so that we have convexity with respect to $w\,.$ In what follows, we will abbreviate the pair $(x,y)$ as $z$. This is suitably generic because any data set is realizable in this form: We can choose $\phi$ as mapping $x$ to an indicator about its identity, and $w$ to select the appropriate label.

There are two reasons why studying convex optimization can be useful to understand deep learning: One is because it relates to mirror descent, and the second is that it allows us to discuss the geometry of the optimization problem and therefore the inductive bias.

Recall that if we are exploring the loss landscape with gradient descent, we are implicitly staying close in $\ell_2$ norm to the initialization.

\subsection{Mirror Descent}
We derived mirror descent as iteratively minimizing a regularized form of the first order approximation of the objective function using a convex potential. We obtained the following form:
\begin{align}
    w_{k+1} &= \arg \min_{w \in \mathcal{W}} \langle \nabla f(w_k, z_k), w \rangle + \lambda_k D_{\Psi}(w || w_k) \notag \\
    &=  \Pi_{\Psi}^{\cW} \left(\nabla \Psi^{-1} \left( \nabla \Psi (w_k) - \frac{1}{\lambda_k} \nabla f(w_k,z_k) \right)\right)
\end{align}

where

$$
D_{\Psi}(w || w') = \Psi(w) - \Psi(w') + \langle \nabla \Psi(w'), w-w' \rangle.
$$

Since the value of the step size $\eta_{k}$ is directly linked to the value of the regularization parammeter $\lambda_{k}$, we may rewrite the iteration as 
\begin{align}
  w_{k+1} = \Pi_{\Psi}^{\cW} \left(\nabla \Psi^{-1} \left( \nabla \Psi (w_k) - \eta_{k} \nabla f(w_k,z_k) \right)\right).
\end{align}
\begin{figure}
    \centering
    \includegraphics[width=0.9\textwidth]{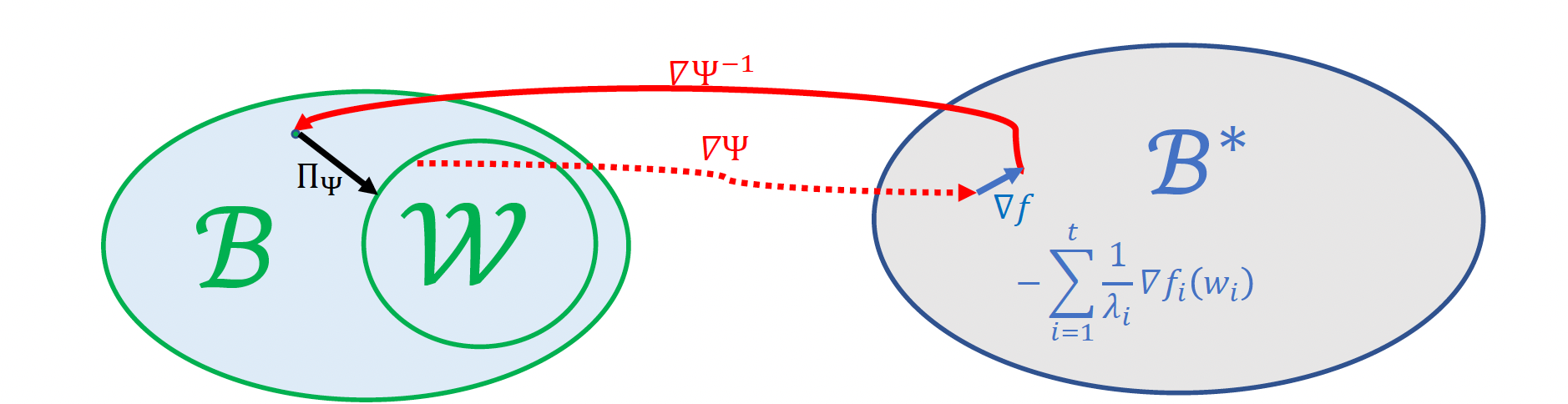}
    \caption{Graphical representation of mirror descent, see Figure \ref{fig:mirror}.}
    \label{fig:linkfunct}
\end{figure}

When the step size is small, we can take a low-order approximation of the Bregman divergence:
$$
D_{\Psi}(w || w_k) \underset{w-w_k \text{small}}{ \approx } \langle \nabla \Psi(w_k), w-w_k \rangle - (w-w_k)^T \nabla^2 \Psi(w_k) (w-w_k) - \langle \nabla \Psi(w_k), w-w_k \rangle
$$
where we have ignored terms not dependent on $w\,,$ as they do not affect optimizing with respect to $w\,.$

We can then write the mirror descent problem approximately as:
\begin{align*}
w_{k+1} &= \arg \min_{w \in \mathcal{W}} \langle \nabla f(w_k, z_k), w \rangle + \frac{1}{\eta_k} (w-w_k)^T \nabla^2 \Psi(w_k) (w-w_k) \\
&= w_k - \eta_k \nabla^2 \Psi(w_k)^{-1} \nabla f(w_k, z_k)\,, \\
&\text{where} \quad \rho(w_k) \coloneqq \nabla^2 \Psi(w_k)^{-1}
\end{align*}

The latter version of the update rule is known as natural gradient descent, which was first popularized in the context of information geometry by S. Amari (see \cite{amari2012differential}). It is valid for any sufficiently regular potential $\Psi$ (or really for any metric tensor). For intuition, consider that there is a manifold with some local metric $\rho\,,$ which is the Hessian of $\Psi\,,$ and we optimize on this manifold. Thus, here we adopt the point of view that natural gradient descent is an approximation to mirror descent.

Next, we can take the step size to 0 and write the gradient flow equation:
$$
\dot{w} = - \nabla^2 \Psi(w(t))^{-1}  \nabla f(w(t), z)\,.
$$

As the step size goes to 0, the stochasticity goes away in the online learning framework: in any time interval, as the step size goes to 0, we take more steps and so we see more new samples, effectively minimizing the population loss. Thus, instead of writing the stochastic gradient, we can write the population gradient, denoting $F$ the population loss:
\begin{align}
\dot{w} = - \nabla^2 \Psi(w(t))^{-1}  \nabla F(w(t))\,. \label{eqn:naturalgd}
\end{align}
This is either very convenient because we do not have to worry about the stochasticity, or it is bothersome because we want to study the stochasticity but cannot.

There are two ways to interpret what is happening here: when the step size goes to 0, both mirror descent and natural gradient descent (which are the same in the infinitesimal step size limit) converge to the Riemannian gradient flow on the population objective. (Note: this is another way to think about what we are doing with SGD, because as the step size gets smaller and smaller, we {\em are} approaching optimizing the population loss.) The other way to think about natural gradient descent and mirror descent is: we endow the space with some local geometry and take steps based on that local geometry. Natural gradient descent corresponds to a forward Euler discretization of the natural gradient flow rule (Eqn.~\ref{eqn:naturalgd}):

\begin{align}
\dot{w}(t) &= - \nabla^2 \Psi(w(\lfloor t \rfloor_\eta))^{-1}
 \nabla F(w( \lfloor t \rfloor_\eta) )\\
 &= - \nabla^2 \Psi(w(\lfloor t \rfloor_\eta))^{-1}
 \nabla f(w( \lfloor t \rfloor_\eta), z_{\floor{t}_\eta)}
 \end{align}
 
 \begin{align}
 &w(k \eta + \Delta t) = \nabla \Phi^{-1}( \nabla \Psi(w(k\eta)) + \Delta t \nabla f(w(k\eta), z_k)) \\
 &\Leftrightarrow  \nabla \Psi(w(k \eta + \Delta t)) =  \nabla \Psi(w(k\eta)) + \Delta t \nabla f(w(k\eta), z_k)
 \end{align}
 Mirror descent represents a slightly more sophisticated discretization of this equation. Suppose we start with any arbitrary metric tensor
 \begin{align}
     \dot{w}(t) = - \rho(w(t))^{-1} \nabla F(w(t))\,.
 \end{align}

What happens if we start with a manifold and seek a complexity measure to optimize over this manifold? In order to determine the complexity measure that underlies the geometry, we require the metric tensor to also be a Hessian map. Unfortunately, most metric tensors (smooth mapping from general vector space to $d \times d$ positive definite matrix) are not Hessian maps. In order to have it be a Hessian, we require that:
$$
\frac{\partial \rho_{ij}}{\partial w_k} = \frac{\partial \rho_{ik}}{\partial{w_j}}\,.
$$

This turns out to be almost sufficient as well. Suppose we define:
$$
\rho(w) = \mathbb{I} + w w^T\,.
$$
The manifold that we get from looking at the squared norm of $w$ appended to $w$ turns out to be the above. It turns out that this is not a Hessian map, as the above symmetry doesn't hold. Thus, starting from a metric tensor, it's quite special to actually be a Hessian map, and that is what allows us to determine the complexity measure that is underlying the geometry. To summarize, NGD is piecewise linear in the primal space, and MD is piecewise linear in the dual space.

\subsubsection{Examples of Mirror Descent}
Let us now look at how the dynamics look like for some examples of mirror descent. How should we choose the geometry? We know that:
$$
\E{S \sim D^m}{F(\bar{w}_k)} \le F(w^\star) + \O{\sqrt{\frac{\Psi(w^\star) \, \sup \knorm{\nabla f}_\star}{k}}}
$$
which holds as long as $\Psi$ is 1-strongly convex with respect to our choice of norm. Here, $w^\star$ is no longer the best in the norm bounded class, since we do not wish to limit the norm explicitly. The guarantee above means that we can compete with any $w^\star$, but we need to pay for its complexity in the second term.

\paragraph{Remark} When the metric tensor doesn't correspond to a Hessian, it is unclear wether we can identify a global complexity measure.

\paragraph{Example 1: Euclidean potential}

$$
\Psi(w) = \frac 12 \knorm{w}_2^2 \quad \text{strongly convex  wrt} \quad \knorm{w}_2\,.
$$
Here, $k \propto \knorm{w^\star}_2^2 \knorm{\phi(x)}_2^2\,.$ The metric tensor is just the identity, and the dynamics are the standard GD dynamics

\paragraph{Example 2: Mahalanobis potential}
$$
\Psi(w) = \frac12 w^T Q w\quad \text{strongly convex wrt} \knorm{w}_Q = \sqrt{w^T Q w}
$$
Here, $k \propto (w^{\star T} Q w^\star) \phi(x)^T Q^{-1} \phi(x)\,.$ The dynamics correspond to preconditioned gradient descent:
$$
\dot{w} = - Q^{-1} \nabla F(w)\,.
$$

\paragraph{Example 3: Entropic potential} 
$$
\Psi(w) = \sum_{i} w[i] \log \frac{w[i]}{i/d}
$$
here, $k \propto \knorm{w}_1^2 \log d \knorm{\phi(x)}_\infty^2\,.$ Here, the Hessian will be diagonal with $1/w$ on each component, and so the dynamics will look like:
$$
\dot{w} = - \nabla^2 \Psi(w)^{-1} \nabla F(w) = - \text{diag}(w) \nabla F(w) = - w_i \partial_i F(w)
$$

The local geometry is not constant, nor is it uniformly penalized in the same directions everywhere. It penalizes changing coordinates that are already small.

\subsubsection{Smoothness and Batching}

\begin{definition}
If $F(w') \le F(w) + \langle \nabla F(w), w'-w\rangle + \frac{H}{2} \knorm{w' - w}^2\,,$ we say that the convex function $F$ is $H$-smooth.
\end{definition}

For a smooth function, we can get a better convergence rate, which depends on $1/k$ rather than $1/\sqrt{k}\,:$
\begin{align} \label{eqn:smoothsogen}
    &\E{}{F(\bar{w}_k)} \le F(w^\star) + \O{\frac{H \Psi^\star(w)}{k} + \sqrt{\frac{\sigma^2 \Psi^\star(w)}{k}}}\,, \\
    &\text{where } \sigma \text{ s.t. } \E{}{\knorm{\nabla f(w,z) - \nabla F(z)}^2} \le \sigma^2 \notag
\end{align}
The first term above is the optimization term that only depends on the discretization (how well does linearization match the true function), and the second term is the stochasticity term, which depends on the distance between the stochastic approximation and the population objective. Accordingly, it makes sense to take more samples than discretization steps. We can do this via using batches. In particular, if instead we use:
$$
\frac 1b \sum_{i = 1}^b \nabla f(w, z_i)\,,
$$
then the variance scales with $1/b,$ giving us:
\begin{equation}
    \E{}{F(\bar{w}_k)} \le F(w^\star) + \O{\frac{H \Psi^\star(w)}{k} + \sqrt{\frac{\sigma^2 \Psi^\star(w)}{bk}}}\,.
\end{equation}

We can see here is that optimization is easier than the statistical aspect. It never really helps to take more steps than the number of samples. In the non-convex case, this is substantially different: in particular, the optimization might be harder than the statistical part, and so it might be worth viewing the samples multiple times. 

\paragraph{Remark} It is sufficient to look just at variance of the gradients at $w^\star\,.$ This is important because if the problem is realizable, then at the optimum, $w^\star$ is correct for any $z\,,$ and therefore the gradients are 0. So if $f \ge 0\,,$
$$
 \E{}{F(\bar{w}_k)} \le F(w^\star) + \O{\frac{H \Psi^\star(w)}{k} + \sqrt{\frac{H F(w^\star) \Psi^\star(w)}{bk}}}\,.
$$

One more observation: this analysis relies not only on the choice of potential function but also on the choice of norm: the variance and smoothness depend on the norm, but the algorithm itself doesn't rely on the norm. Recent analyses alleviate this by going through relative smoothness \cite{bauschke2017-relativesmooth, lu2018-relativesmooth}. 

\begin{definition}
A function $F$ is relatively smooth to $\Psi$ up to smoothness parameter $H$ if:
$$
\nabla^2 F(w) \preceq H \nabla^2 \Psi(w)\,.
$$
\end{definition}

The guarantee in Eqn.~\ref{eqn:smoothsogen} holds when $H$ is the relative smoothness parameter.

\subsection{General Steepest Descent}

We can also think more generally about steepest descent methods that are not related to a metric,

$$
w_{k + 1} = \arg \min \langle \nabla f(w_k, z_k), w \rangle + \frac{1}{\eta}\delta(w, w')\,.
$$

We can take $\delta(w, w') = \knorm{w-w'}_1\,,$ for example, and though it appears to correspond to using $\ell_1$ geometry, 
it actually will correspond to coordinate descent. Using $\ell_\infty$ will take a step corresponding to the sign of the gradient.

\subsection{Implicit bias of descent methods}

So far, we have discussed the convex setting, and we obtained bounds combining optimization guarantees and generalization guarantees. We get generalization guarantees with respect to $w^\star$ considering only the dynamics of optimization. We studied the connection between the geometry of searching the space and the corresponding complexity. However, there are some limitations to what we have seen so far. First of all, this is only the convex case. Second of all, this isn't what we're seeing in deep learning. Here, we are not driving training error to 0. The generalization seems to rely on the fact that we are using stochasticity. That is different from what we see in the examples from the previous lectures where we train with full batch, multi-pass gradient methods but still get good generalization. 

Let us compare two approaches. First, suppose we actually select:

$$
w_{\lambda} = \arg \min \langle \nabla f(w_k, z_k), w \rangle + \lambda \Psi(w)\,.
$$

And secondly, consider $\bar{w}_k$, the solution achieved after $k$ iterations of mirror descent. In either of these cases, we get the same generalization guarantee, but for different solutions. In the Lipschitz case with optimally chosen $\lambda$, the distance (suboptimality, really) between $w_\lambda$ and $w^\star$ is $1/\sqrt{k}\,.$ The distance between $w^\star$ and $\bar{w}_k$ is also $1/\sqrt{k}\,,$ but the distance between $w_\lambda, \bar{w}_{k}$ is also $1/\sqrt{k}$. We know this because we know that we can view one-pass of this as optimizing the training objective, which in $k$ steps gets suboptimality $1/\sqrt{k}\,.$
Thus, we are not saying that implicit regularization gets us to the same solution as explicit regularization, but rather we are saying that the generalization guarantees hold.
If we keep repeating passes, we might get to minimizer of the training error, but it's unclear if this is beneficial.

\subsubsection{Deriving Implicit Regularization for Gradient Descent}

We will analyze gradient descent on the unregularized training 
objective:
$$
\hat{F}(w) = \frac 1m \sum_{i = 1}^m loss(\langle w, x_i \rangle, y_i )\,.
$$
We've dropped $\phi$ for ease of notation, but we should think of $x$ as being the output of some feature map. Let $w \in \mathbb{R}^d\,, \mbox{with} \thickspace d >> m\,.$ Let us use gradient descent:
$$
w_{k +1} = w_k - \eta_{k} \nabla \hat{F}(w_k)\, ; \quad w_0 = 0\,.
$$

We know from previous discussion that this will result in implicit $\ell_2$ regularization, but let us formally derive this. This will also allow us to understand what happens when moving to mirror descent later on.

For the first step, we are going to argue that the iterates $w_k$ all lie in a linear manifold given by the span of the data, i.e.:

$$
w_k \in \mathcal{M} = \text{span}(X) = \left \{ w = X^T s \bigg \rvert s \in \mathbb{R}^m \right \}
$$
where $X \in \mathbb{R}^{m \times d}$.
This simply comes from the fact that, for the model above, the gradient reads 
\begin{align}
  \nabla \hat{F}(w) &= \frac{1}{m}\sum_{i=1}^{m}loss'\left(\langle w,x_{i} \rangle, y_{i}\right)x_{i} \notag \\
  &= X^{\top}s,
\end{align}
where we introduced the vector of \emph{residuals} $s \in \mathbb{R}^{m}$ containing the entries $\frac{1}{m}loss'\left(\langle w,x_{i} \rangle, y_{i}\right)$. 
This already tells us a lot about what we will get: even though we have $d$ parameters, we only will ever be in an $m$-dimensional subspace. We will also assume that we get to a global optimum. This is not always easy to show but are going to assume it. Since we are in the overparametrized setting, we know that $Xw = y\,,$ where $w$ is the predictor to which we finally converge. We claim that the fixed point of this iteration is exactly the minimizer of the following optimization problem:

\begin{equation} \label{eqn:implicitoptprob}
\min \frac 12 \knorm{x}_2^2 \quad \text{ such that } Xw = y\,.
\end{equation}


In general, these arguments are going to proceed by claiming that solving the first optimization problem actually implicitly solves the second optimization problem. To show this, we use the method of Karush-Kuhn-Tucker (KKT) conditions for optimality of a solution to a constrained optimization problem. The optimum for a constrained optimization problem is uniquely characterized by its KKT conditions.
Let us look at the KKT conditions. We introduce dual variables $\nu$ for the constraints, giving us that the Lagrangian is:
$$
L(w, \nu) = \frac 12 \knorm{w}_{2}^2 + \nu^T (y - Xw)\,.
$$

The KKT conditions are:
\begin{itemize}
    \item \textbf{Stationarity} $0 = \nabla_w L = w - X^T \nu$
    \item \textbf{Primal feasibility} $y = Xw\,.$ 
\end{itemize}

We observe that the stationarity condition shows exactly what we claimed earlier, that $w_k$ are in the span of the data, and the primal feasibility condition finds the interpolator. Since a point that satisfies these two conditions is optimal for the optimization problem in Eqn.~\ref{eqn:implicitoptprob}, and since gradient descent that arrives at a 0 training error predictor satisfies these two conditions, gradient descent finds an optimal solution to the problem in Eqn.~\ref{eqn:implicitoptprob}, i.e., a minimum Euclidean norm solution.


\subsubsection{Similar Argument for Mirror Descent}
With this argument in mind, we can apply a similar procedure to analyze the output of mirror descent. We are going to show that with mirror descent, we obtain a solution that implicitly minimizes the Bregman distance of the chosen potential with the starting point $w_{0}$. To show this, we will use essentially the same proof. The optimization problem is still the same as before: we are minimizing the training objective, but the optimization algorithm will differ. Now, we optimize $\hat{F}$ using mirror descent with respect to some potential function $\Psi$. Let us reproduce the iterates:
\begin{equation*}
w_{k + 1} = \arg \min_{w \in \mathcal{W}} \langle \nabla f(w_t, z_t), w \rangle + \lambda_t D_{\Psi}(w || w_t)
\end{equation*}

In mirror descent, we accumulate the gradients in the dual space. 
$$
w_{k+1} = \nabla \Psi^{-1} \bigg( \nabla \Psi(w_0) - \sum_{i = 1}^k \frac{1}{\lambda_{i}}\nabla \hat{F}(w_k)\bigg)
$$
That is, the $k+1$ iterate is the mapping back into the primal space of the accumulation of the gradients in the dual space.
As before, we can see that the gradients lie in the span of the data (in the dual space). Thus, again, $w_k \in \mathcal{M} = \{ \nabla \Psi^{-1}(\nabla \Psi(w_0) + X^T s) \, \forall \, x \in \mathbb{R}^m  \}\,.$ This is not a flat manifold (zero curvature) in the primal space, but it is in the dual space. To see what point we converge to, we use the assumption that in the end, we converge to a global optimum. This imposes $m$ linear constraints, which when intersected with an $m$-dimensional manifold gives us a unique point. To determine which unique point that is, we write an optimization problem whose KKT conditions match the two sets of constraints (global optimality and lying in the manifold):
$$
\min D_{\Psi}(w || w_0) \quad \text{ such that } Xw = y\,.
$$
Then the Lagrangian is:
$$
L(w, \nu) = D_{\Psi}(w || w_0) + \nu(y - X\mu)
$$
To see this, we write the KKT conditions:
\begin{itemize}
    \item \textbf{Stationarity} $ 0 = \nabla \Psi(w) - \Psi(w_0) - X^{\top} \nu$
    \item \textbf{Primal Feasibility} Xw = y
\end{itemize}

This is exactly what was specified earlier.

\subsubsection{A General Method} We started from the optimization problem of minimizing the training objective and then saw trajectory stays within a manifold. The second set of constraints we imposed came from the global optimality of the solution reached. We then matched these two sets of constraints to KKT conditions for some other optimization problem. 
The goal of the next lecture will be to use this method to understand the implicit bias of optimization in various problems, and in particular attempt to highlight what parameters or architectures may lead to 
implicity $\ell_{2}$ regularization or $\ell_{1}$, i.e. feature learning.
\newpage
\section{Lecture 5: Implicit bias with linear functionals and the square loss}
The main topic of this lecture is the derivation of analytical evidence to the lazy \cite{chizat2019lazy} and rich regimes in learning problems with gradient descent, by employing the general method described in the previous lecture. We will consider simple model for which the dynamics of gradient descent may be solved exactly and we will study these trajectories 
as a function of the magnitude of the initialization and model architectures, leading to various implicit biases.
\subsection{Setting}
We consider models parametrized by a weight vector $\mathbf{w} \in \mathbb{R}^{p}$ acting on an input space $\mathcal{X}$ and denote $F(\mathbf{w}) \in \{f : \mathcal{X} \to \mathbb{R}\}$
the predictor implemented by $\mathbf{w}$, such that $F(\mathbf{w})(\mathbf{x}) = f(\mathbf{w},\mathbf{x})$. We will focus on models linear in $\mathbf{x}$ represented by a linear functional in the dual 
space of $\mathcal{X}$, denoted $\mathcal{X}^{*}$ and represented by a vector $\boldsymbol{\beta}_{\mathbf{w}}$ such that $f(\mathbf{w},\mathbf{x}) = \langle \boldsymbol{\beta}_{\mathbf{w}}, \mathbf{x} \rangle$, i.e. $\boldsymbol{\beta}_{\mathbf{w}}$ is some 
transform, potentially non-linear of $\mathbf{w}$. We consider the supervised learning problem with $n$ sample pairs $(\mathbf{x}_{i},y_{i})$ where $y_{i} \in \mathbb{R}$ is a response vector. The parameters are learned by minimizing the empirical loss 
\begin{equation}
    \frac{1}{n}\sum_{i=1}^{n}\mbox{loss}(f(\mathbf{w},\mathbf{x}_{i}),y_{i})
\end{equation}
with the corresponding population loss 
\begin{equation}
    \mathbb{E}_{\mathbf{x},y}\left[\mbox{loss}(f(\mathbf{w},y),\mathbf{x})\right].
\end{equation}
We assume the model is homogeneous of order $D$, i.e., for any constant $c>0$ $F(c\mathbf{w}) = c^{D}h_{\mathbf{w}}$. The order $D$ is related to
the depth of networks with homogeneous activations (e.g. a linear or Relu). A linear model is homogeneous of order $1$, a factorization model of order $2$.
Our objective is three-fold :
\begin{itemize}
    \item first, we want to characterize the implicit bias of gradient descent in this setup. In regards to the previous lectures, is optimizing in the parameters space using gradient descent equivalent to 
    optimizing in the functional space w.r.t. some metric tensor and potential? If this is the case, can we characterize this potential analytically?
    \item secondly, is this implicit optimization problem in function space equivalent to explicit regularization in parameter space? For instance, we saw that GD on a least-squares problem converges towards the minimum $\ell_{2}$
    norm solution, equivalent to explicitly penalizing the $\ell_{2}$ norm. Can we extend this picture to more general models, in particular models closer to deep learning architectures.
    \item finally, we would like to study the transition between kernel (a.k.a. lazy) regime and rich feature learning regime. What parameters govern this behaviour? We have seen in the previous lectures 
    that in many cases we obtain implicit biases that cannot be reached with kernel methods (sparsity, nuclear norm, ...). Can we get a more precise picture on simple models?
\end{itemize}

\subsection{Reminder on the kernel regime}
Consider the function computed by the model $f(\mathbf{w},\mathbf{x})$. We can take its first order approximation around the initalization of GD $\mathbf{w}_{0}$.
\begin{align}
    f(\mathbf{w},\mathbf{x}) = f(\mathbf{w}_{0},\mathbf{x})+\langle \mathbf{w}-\mathbf{w}_{0},\nabla_{\mathbf{w}}f(\mathbf{w}_{0},\mathbf{x})\rangle+\mathcal{O}\left(\norm{\mathbf{w}-\mathbf{w}_{0}}_{2}^{2}\right)
\end{align}
In what follows, we will sometimes write $\phi_{0}(\mathbf{x}) = \nabla_{\mathbf{w}}f(\mathbf{w}_{0},\mathbf{x})$. In certain regimes, this linear 
approximation is always valid across training and the model behaves as an affine model $f(\mathbf{w},\mathbf{x}) = f_{0}(\mathbf{x})+\langle \mathbf{w}, \phi_{0}(\mathbf{x}) \rangle$
with feature map $\nabla_{\mathbf{w}}f(\mathbf{w}_{0},\mathbf{x})$ corresponding to the tangent kernel $K_{0}(\mathbf{x},\mathbf{x}') = \langle \nabla_{\mathbf{w}}f(\mathbf{w}_{0},\mathbf{x}), \nabla_{\mathbf{w}}f(\mathbf{w}_{0},\mathbf{x}') \rangle $.
GD then learns the corresponding minimum RKHS distance to the initialization $F(\mathbf{w}_{0})$ solution, i.e. $\argmin_{h} \norm{h-F(\mathbf{w}_{0})}_{K_{0}} \thickspace \mbox{s.t.} \thickspace h(X)=\mathbf{y}$. Note that we can 
avoid the $F(\mathbf{w}_{0})$ by choosing a familiy of functions verifying $F(\mathbf{w}_{0}) = 0$ (see e.g. the unbiased initialization from \cite{chizat2019lazy}). Then are we just studying an 
uninformative regime or can we really replace neural nets with linear models? In what case does this kernel regime appear? \\
\par 
Initially, the appearance of the kernel regime was shown to be linked to the width of the network : taking the width to infinity under an appropriate scaling of the weights allows to linearize the network to obtain an asymptotically equivalent (in law) Gaussian process, leading to a kernel method and thus to the kernel regime. For the corresponding result, see e.g. \cite{jacot2018neural,daniely2017sgd}.
Closer to what we want to do is \cite{chizat2019lazy}. Regardless of the width, can always reach the kernel regime 
when the scale of the initialization goes to infinity. In what follows, we will mainly consider gradient flow, i.e. 
\begin{equation}
    \frac{d\mathbf{w}}{dt} = -\nabla_{\mathbf{w}}\hat{L}(\mathbf{w})
\end{equation}
Initialize at different scales, i.e. $\mathbf{w}(0) = \alpha \mathbf{w}_{0}$ where $\alpha >0$ and $\mathbf{w}_{0}$ can be 
any $\mathbf{w}_{0}$ which can be random, and such that $F(\mathbf{w}_{0}) = 0$ (i.e. $\mathbf{w}_{0}$ maps to a null function in function space). For any 
$\alpha$, we will write the dynamics 
\begin{equation}
    \frac{d\mathbf{w}_{\alpha}}{dt} = -\nabla_{\mathbf{w}}\hat{L}(\mathbf{w}_{\alpha})
\end{equation}
The result from \cite{chizat2019lazy} then states that when $\alpha$ goes to infinity, after a appropriate rescaling of time, the entire trajectory converges to the 
kernel one :  
\begin{equation}
    \lim_{\alpha \to \infty} \sup_{t} \norm{\mathbf{w}_{\alpha}(\frac{1}{\alpha^{D-1}}t)-\mathbf{w}_{K}(t)}_{\infty} = 0
\end{equation}
where $\mathbf{w}_{K}$ represents the vector obtained by running gradient descent on the tangent kernel model : $\dot{\mathbf{w}}(K) = -\nabla_{\mathbf{w}}\hat{L}(\langle \mathbf{w}, \phi_{0}(\mathbf{x}) \rangle)$.

\subsection{A simple model : 2-layer linear diagonal network}
We would like a simple model going beyond the linear case (for which we understand the phenomenology), that can still be studied analytically and where the 
implicit bias is interesting. We are going to look at element-wise squaring of parameters, i.e. 
\begin{equation}
    f(\mathbf{w},\mathbf{x}) = \langle \mathbf{w}^{2},\mathbf{x} \rangle \quad \beta = F(\mathbf{w}^{2})
\end{equation}
where for a given vector, $^{2}$ denotes the elementwise squaring.
This is equivalent to a depth 2 diagonal linear network. Here we cannot get all linear functions this way 
because the coefficients cannot be negative. In order to allow negative coefficients, we introduce the $2d$ parameters $\mathbf{w} = \begin{bmatrix} \mathbf{w}_{+} \\ \mathbf{w}_{-} \end{bmatrix}$ and $\boldsymbol{\beta}_{\mathbf{w}} = \mathbf{w}^{2}_{+}-\mathbf{w}^{2}_{-}$. This is equivalent 
to a depth-2 diagonal linear network with $2d$ parameters on the first layer $f(\mathbf{w},\mathbf{x}) = \mathbf{w}^{\top}\mbox{diag}(\mathbf{w})\begin{bmatrix}\mathbf{x} \\ -\mathbf{x}\end{bmatrix}$, as in the following figure
\begin{figure}[h]
    \centering
    \includegraphics[scale=0.4]{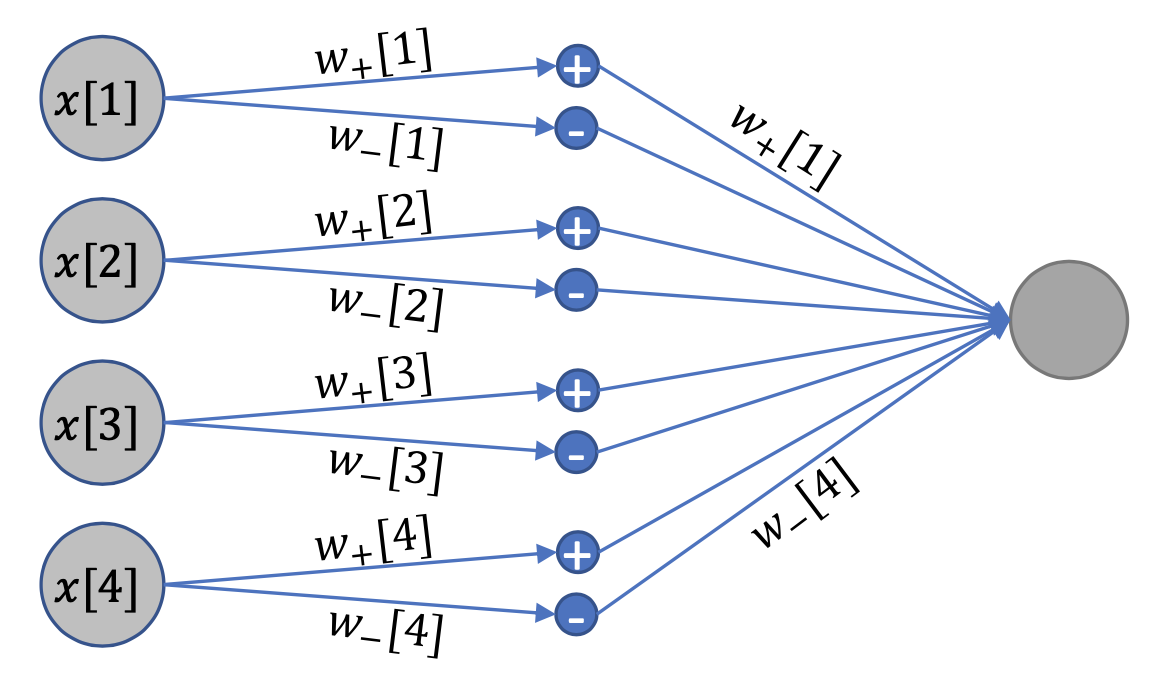}
    \caption{Depth-2 diagonal linear net with replicated an signed units in the first layer}
    \label{fig:diag_net}
\end{figure}

Negative functions can thus be represented and we can also choose the initialization such that $F(\mathbf{w}_{0}) = 0$. If $\mathbf{w}_{0} = 1_{d}$, 
then at initialization, $\boldsymbol{\beta}_{0} = 0$ whatever the scale of initialization $\alpha$, where $\mathbf{w}_{+,\alpha}(0) = \mathbf{w}_{-,\alpha}(0) = \alpha \mathbf{w}_{0}$. What's the implicit bias of doing gradient descent on this model, when considering the square loss?
\begin{equation}
    L(\hat{y},y) = \frac{1}{2}(\hat{y}-y)^{2}
\end{equation}
\subsubsection{Analytical study of GD in parameter space}
Applying the chain rule gives, denoting $\mathbf{X} \in \mathbb{R}^{n \times d}$ the design matrix,
\begin{align}
    &\dot{\mathbf{w}}_{+} = -\frac{d \boldsymbol{\beta}}{d \mathbf{w}^{+}}\frac{d L(\boldsymbol{\beta})}{d \boldsymbol{\beta}} = -2\mathbf{X}^{\top}\mathbf{r}(t)\odot\mathbf{w}_{+}(t)\\
    &\dot{\mathbf{w}}_{-} = -\frac{d \boldsymbol{\beta}}{d \mathbf{w}^{-}}\frac{d L(\boldsymbol{\beta})}{d \boldsymbol{\beta}} = 2\mathbf{X}^{\top}\mathbf{r}(t)\odot\mathbf{w}_{-}(t),
\end{align}
where we defined the residual $\mathbf{r}(t) = \mathbf{X}\boldsymbol{\beta}(t)-\mathbf{y}$ and $\odot$ denotes the elementwise product.

Assume that the residuals are known, then these are differential equations that can be solved. Integrating yields 
\begin{align}
    \label{eq:exp_w1}
    \mathbf{w}_{+}(t) &= \mathbf{w}_{+}(0)\odot\exp\{-2\mathbf{X}^\top\int_{0}^{t}\mathbf{r}(\tau)d\tau\} \\
    \mathbf{w}_{-}(t) &= \mathbf{w}_{-}(0)\odot\exp\{2\mathbf{X}^\top\int_{0}^{t}\mathbf{r}(\tau)d\tau\} \label{eq:exp_w2}
\end{align}
where $\mathbf{r}(t) = \mathbf{X}^{\top}\left(\mathbf{w}_{+}^{2}-\mathbf{w}_{-}^{2}\right)-\mathbf{y}$. Although this is just a rewriting as integral equations, 
we can get important information from this. Letting $\mathbf{S}(t) = \int_{0}^{t}\mathbf{r}(\tau)d\tau$:
\begin{align}
    \label{eq:reduc_dim1}
    \boldsymbol{\beta}(t) &= \mathbf{w}^{2}_{+}(0)\odot \exp\left(-4\mathbf{X}^{\top}\mathbf{S(t)}\right)-\mathbf{w}^{2}_{-}(0)\odot \exp\left(4\mathbf{X}^{\top}\mathbf{S(t)}\right) \\
    &= \alpha^{2}1_{d}\odot \left(\exp\left(-4\mathbf{X}^{\top}\mathbf{S(t)}\right)-\exp\left(4\mathbf{X}^{\top}\mathbf{S(t)}\right)\right) \label{eq:reduc_dim2} \\
    &= 2\alpha^{2}1_{d}\odot \mbox{sinh}\left(-4\mathbf{X}^{\top}\mathbf{S(t)}\right) \label{eq:reduc_dim3}
\end{align}
where sinh is the hyperbolic sine.
We are interested in the regime where $n << d$ i.e. where the number of samples is much lower than the dimensionality $d$, so they are many solutions to the problem $\mathbf{X}\boldsymbol{\beta} = \mathbf{y}$. However, 
Eq.\eqref{eq:reduc_dim1}-\eqref{eq:reduc_dim2} shows that we have reduced the set of solutions to a lower dimensional manifold of dimension $n$. This is similar to the study of GD on linear regression, i.e. 
the predictor is always spanned by the data. Here we observe the same thing on a non-linear model. This manifold is defined by 
\begin{align}
    \mathcal{M} &= \left\{\boldsymbol{\beta} = \alpha^{2}1_{d}\odot\left(\exp\left(-4\mathbf{X}^{\top}\mathbf{s}\right)-\exp\left(4\mathbf{X}^{\top}\mathbf{s}\right)\right)\vert \mathbf{s} \in \mathbb{R}^{n}\right\} \\
    &= \left\{\boldsymbol{\beta} = 2\alpha^{2}1_{d}\odot \mbox{sinh}\left(-4\mathbf{X}^{\top}\mathbf{s}\right)\vert \mathbf{s} \in \mathbb{R}^{n}\right\}
\end{align}
where we have $n$ additional constraints defined by the equation $\mathbf{X}\boldsymbol{\beta} = \mathbf{y}$ leading to a unique solution. Recall that 
we are not studying the convergence of gradient flow per se, rather we assume it converges and we study the corresponding fixed point to characterize its implicit bias. Let's write an equivalent optimization problem giving the same 
set of solutions. We want to find a function $Q_{\alpha}(\boldsymbol{\beta})$ that is implictly minimized by the GF dynamics such that 
\begin{align}
    \boldsymbol{\beta^{*}} &\in \min_{\boldsymbol{\beta}} Q_{\alpha}(\boldsymbol{\beta}) \\
    &\mbox{s.t.} \quad \mathbf{X}\boldsymbol{\beta} = \mathbf{y}
\end{align}
The Lagrangian formulation for this problem reads 
\begin{align}
    \min_{\boldsymbol{\beta}} \max_{\boldsymbol{\nu}} Q_{\alpha}(\boldsymbol{\beta})+\boldsymbol{\nu}^{\top}\left(\mathbf{y}-\mathbf{X}\boldsymbol{\beta}\right)
\end{align}
The KKT conditions then read 
\begin{align}
  \label{eq:KKT_mirror}
    \mathbf{X}\boldsymbol{\beta} &= \mathbf{y} \\
    \nabla_{\boldsymbol{\beta}}Q_{\alpha}(\boldsymbol{\beta}) &= \mathbf{X}^{\top}\boldsymbol{\nu}
\end{align}
Using Eq.\eqref{eq:reduc_dim3}, we may write 
\begin{align}
    \mbox{sinh}^{-1}\left(\frac{1}{2\alpha^{2}}\boldsymbol{\beta}\right) = -4\mathbf{X}^{\top}\mathbf{s} 
\end{align}
where the inverse hyperbolic sine is applied elementwise. Matching this with the optimality condition Eq.\eqref{eq:KKT_mirror} then gives 
\begin{align}
\boldsymbol{\nu} &= -4\int_{0}^{\infty}\mathbf{r}_{\alpha}(t)\left(=-4\mathbf{s}\right) \\
\nabla_{\boldsymbol{\beta}}Q_{\alpha}(\boldsymbol{\beta}) &= \mbox{sinh}^{-1}\left(\frac{1}{2\alpha^{2}}\boldsymbol{\beta}\right)
\end{align}
Integrating this expression, remembering that the gradient of an element-wise function is applied element-wise, yields 
\begin{equation}
  \label{eq:potential_1}
    Q_{\alpha}(\boldsymbol{\beta}) = \sum_{i=1}^{d}\alpha^{2}q(\frac{\beta_{i}}{\alpha^{2}})
\end{equation}
where $q(z) = \int_{0}^{z}\mbox{sinh}^{-1}(\frac{t}{2})dt = 2-\sqrt{4+z^{2}}+z\mbox{sinh}^{-1}(\frac{z}{2})$. \\
We can now study this function for different scalings of $\alpha$. Plotting $q$ gives the following figure :
\begin{figure}[h]
    \centering
    \includegraphics[scale=0.6]{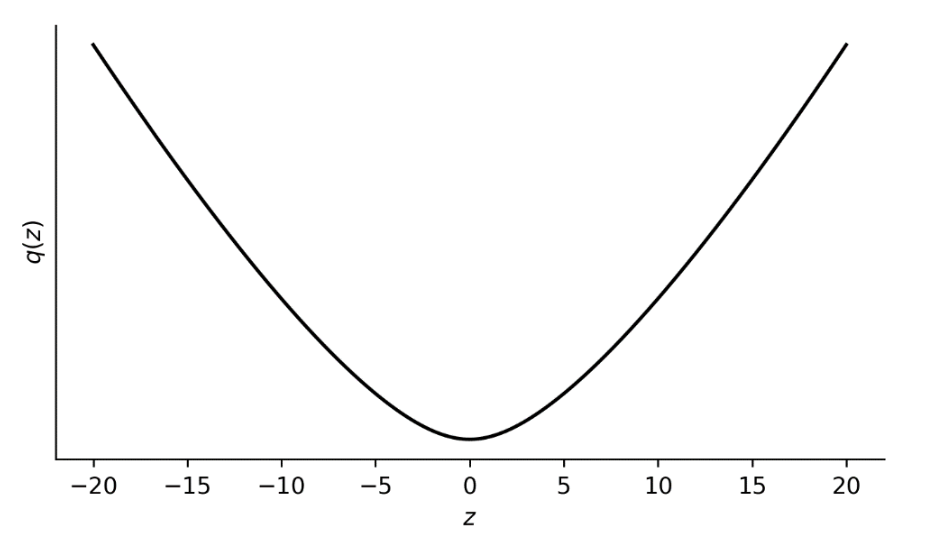}
    \caption{$q(z) = \int_{0}^{z}\mbox{sinh}^{-1}(\frac{t}{2})dt = 2-\sqrt{4+z^{2}}+z\mbox{sinh}^{-1}(\frac{z}{2})$}
\end{figure} 
\quad \\
For $\alpha \to \infty$, we may look at $q$ around zero, a second order Taylor expansion shows that $q$ is quadratic around zero. Thus for large $\alpha$, 
the regularization is effectively quadratic on each coordinate. In this model, we can show that the tangent kernel at initialization is the linear kernel
$K_{0}(\mathbf{x}, \mathbf{x}') = \langle \mathbf{x},\mathbf{x'} \rangle$ thus the solution converges to the minimum $\ell_{2}$ norm solution
\begin{equation}
    \boldsymbol{\beta}_{\alpha}(\infty)\xrightarrow[]{\alpha \to \infty}\hat{\boldsymbol{\beta}}_{L_{2}}  = \argmin_{\mathbf{X}\boldsymbol{\beta} = \mathbf{y}} \norm{\boldsymbol{\beta}}_{2} \quad \mbox{Kernel regime}
\end{equation}  
For small values of $\alpha$ however, $q$ becomes close to a $\ell_{1}$ norm and we obtain an effective $\ell_{1}$, sparsity inducing implicit regularization  
which does not correspond to a kernel regime, but a rich regime : 
\begin{equation}
    \boldsymbol{\beta}_{\alpha}(\infty)\xrightarrow[]{\alpha \to 0}\hat{\boldsymbol{\beta}}_{L_{1}}  = \argmin_{\mathbf{X}\boldsymbol{\beta} = \mathbf{y}} \norm{\boldsymbol{\beta}}_{1} \quad \mbox{Rich regime}
\end{equation}
We thus have a transition from a kernel inductive bias to a sparsity inducing inductive bias. The rich learning regime corresponds to the $\ell_{1}$ regime, 
which is one of the main benefits we are looking for in a machine learning model : learning features corresponds to selecting a small number of important features among an infinite amount of features (in the infinite size limit for neural networks).
A more precise characterization of this transition can be found in Theorem 2 from \cite{woodworth2020kernel}, which we reproduce here.
\begin{theorem}[Theorem 2 from \cite{woodworth2020kernel}]
    \label{th:scaling}
    For $0<\epsilon<d$, under the above setting,
    \begin{align*}
        &\alpha \leqslant \min\left\{\left(2(1+\epsilon)\norm{\boldsymbol{\beta}^{*}_{\ell_{1}}}_{1}\right)^{-\frac{2+\epsilon}{2\epsilon}},\exp(-d/(\epsilon)\norm{\boldsymbol{\beta}^{*}_{\ell_{1}}}_{1})\right\} \implies \norm{\boldsymbol{\beta}^{\infty}_{\alpha,1}}_{1}\leqslant (1+\epsilon)\norm{\boldsymbol{\beta}^{*}_{\ell_{1}}}_{1} \\
        &\alpha \geqslant \sqrt{2(1+\epsilon)(1+2/\epsilon)\norm{\boldsymbol{\beta}_{\ell_{2}}^{*}}}_{2} \implies \norm{\boldsymbol{\beta}^{\infty}_{\alpha,1}}_{2}^{2} \leqslant (1+\epsilon)\norm{\boldsymbol{\beta}^{*}_{\ell_{1}}}_{2}^{2}
    \end{align*}
\end{theorem}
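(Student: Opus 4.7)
The plan is to exploit the variational characterization derived just above the theorem: the implicit bias $\boldsymbol{\beta}_{\alpha,1}^{\infty}$ is the unique point minimizing $Q_{\alpha}(\boldsymbol{\beta}) = \sum_{i=1}^{d} \alpha^{2} q(\beta_{i}/\alpha^{2})$ subject to the interpolation constraint $\mathbf{X}\boldsymbol{\beta} = \mathbf{y}$, where $q(z) = 2 - \sqrt{4+z^{2}} + z\,\mathrm{sinh}^{-1}(z/2)$. The strategy for both bounds is the same two-sided sandwich: I will exhibit functions $\underline{R}_{\alpha}(\boldsymbol{\beta}) \le Q_{\alpha}(\boldsymbol{\beta}) \le \overline{R}_{\alpha}(\boldsymbol{\beta})$ with $\underline{R}_{\alpha}$ and $\overline{R}_{\alpha}$ close to a multiple of $\|\cdot\|_{1}$ (for small $\alpha$) or $\|\cdot\|_{2}^{2}$ (for large $\alpha$), then chain through the optimality of $\boldsymbol{\beta}_{\alpha,1}^{\infty}$ against either $\boldsymbol{\beta}_{\ell_{1}}^{*}$ or $\boldsymbol{\beta}_{\ell_{2}}^{*}$, which are themselves feasible.

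First, I would establish a small library of scalar bounds on $q$. From the Taylor expansion $q(z) = z^{2}/4 - z^{4}/96 + O(z^{6})$ near zero I get $q(z) \le z^{2}/4$ globally, and $q(z) \ge z^{2}/(4(1+\delta))$ whenever $|z| \le c(\delta)$ for an explicit $c(\delta)$. Using $\mathrm{sinh}^{-1}(u) = \log(u + \sqrt{1+u^{2}})$ and bounding $\sqrt{4+z^{2}} \le 2 + |z|$, I can show that for $|z|$ sufficiently large, $q(z) \ge |z|\log|z| - C|z|$ and conversely $q(z) \le |z|\log|z| + C'|z|$ for constants $C,C'$. After multiplication by $\alpha^{2}$ and substitution $z = \beta_{i}/\alpha^{2}$, the dominant behavior is $\alpha^{2} q(\beta_{i}/\alpha^{2}) \approx |\beta_{i}|\bigl(\log(1/\alpha^{2}) + \log|\beta_{i}|\bigr)$, so $Q_{\alpha}(\boldsymbol{\beta}) / \log(1/\alpha^{2}) \to \|\boldsymbol{\beta}\|_{1}$.

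For the rich ($\alpha \to 0$) case: feed $\boldsymbol{\beta}_{\ell_{1}}^{*}$ into the \emph{upper} bound to get $Q_{\alpha}(\boldsymbol{\beta}_{\alpha,1}^{\infty}) \le Q_{\alpha}(\boldsymbol{\beta}_{\ell_{1}}^{*}) \le \log(1/\alpha^{2}) \cdot \|\boldsymbol{\beta}_{\ell_{1}}^{*}\|_{1} + (\text{l.o.t.})\|\boldsymbol{\beta}_{\ell_{1}}^{*}\|_{1}$. Then apply the \emph{lower} bound to $Q_{\alpha}(\boldsymbol{\beta}_{\alpha,1}^{\infty})$ to conclude $\log(1/\alpha^{2}) \cdot \|\boldsymbol{\beta}_{\alpha,1}^{\infty}\|_{1} \le \log(1/\alpha^{2}) \cdot \|\boldsymbol{\beta}_{\ell_{1}}^{*}\|_{1} + (\text{l.o.t.})\|\boldsymbol{\beta}_{\alpha,1}^{\infty}\|_{1} + (\text{l.o.t.})\|\boldsymbol{\beta}_{\ell_{1}}^{*}\|_{1}$, and rearrange. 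The two stated conditions on $\alpha$, namely $\alpha \le (2(1+\epsilon)\|\boldsymbol{\beta}_{\ell_{1}}^{*}\|_{1})^{-(2+\epsilon)/(2\epsilon)}$ and $\alpha \le \exp(-d/(\epsilon \|\boldsymbol{\beta}_{\ell_{1}}^{*}\|_{1}))$, are precisely what is needed to make the lower-order correction swallowed by $\epsilon \log(1/\alpha^{2}) \cdot \|\boldsymbol{\beta}_{\ell_{1}}^{*}\|_{1}$; one of them controls the $|\beta_{i}|\log|\beta_{i}|$ term (the worst case is a single large coordinate) while the other controls the cumulative effect across all $d$ coordinates when some $|\beta_{i}|$ are very small.

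For the kernel ($\alpha \to \infty$) case: use $Q_{\alpha}(\boldsymbol{\beta}_{\alpha,1}^{\infty}) \le Q_{\alpha}(\boldsymbol{\beta}_{\ell_{2}}^{*}) \le \|\boldsymbol{\beta}_{\ell_{2}}^{*}\|_{2}^{2}/(4\alpha^{2})$ from the global quadratic upper bound on $q$. For the other direction, provided every $|\beta_{i}/\alpha^{2}| \le c(\epsilon)$, I can lower-bound $Q_{\alpha}(\boldsymbol{\beta}_{\alpha,1}^{\infty}) \ge \|\boldsymbol{\beta}_{\alpha,1}^{\infty}\|_{2}^{2}/(4(1+\epsilon)\alpha^{2})$. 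The hypothesis $\alpha \ge \sqrt{2(1+\epsilon)(1+2/\epsilon)\|\boldsymbol{\beta}_{\ell_{2}}^{*}\|_{2}}$ combined with a coordinate-wise bound $\|\boldsymbol{\beta}_{\alpha,1}^{\infty}\|_{\infty} \le \|\boldsymbol{\beta}_{\alpha,1}^{\infty}\|_{2}$ validates this smallness assumption (after a short bootstrap). Combining yields $\|\boldsymbol{\beta}_{\alpha,1}^{\infty}\|_{2}^{2} \le (1+\epsilon)\|\boldsymbol{\beta}_{\ell_{2}}^{*}\|_{2}^{2}$.

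The main obstacle will be the rich regime: the functional $Q_{\alpha}$ is not exactly $\log(1/\alpha^{2})\|\cdot\|_{1}$, and controlling the $|\beta_{i}|\log|\beta_{i}|$ correction uniformly requires handling both the case of coordinates much larger than $\alpha^{2}$ (where one must argue they cannot be much larger than $\|\boldsymbol{\beta}_{\ell_{1}}^{*}\|_{1}$, giving the first scaling of $\alpha$) and the case of coordinates very close to zero but non-zero (where the bootstrap that relates $\|\boldsymbol{\beta}_{\alpha,1}^{\infty}\|_{1}$ to the lower bound requires preventing an explosion of $\log|\beta_{i}|$ summed over $d$ coordinates, giving the second, exponential-in-$d$ scaling). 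Picking the bootstrap carefully, so that the implicit constant is indeed $1+\epsilon$ and not a larger multiplicative factor, is the delicate part; the kernel-regime bound, by contrast, reduces to standard manipulations of the quadratic Taylor expansion.
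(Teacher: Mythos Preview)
The paper does not actually prove this theorem: it is quoted verbatim from \cite{woodworth2020kernel} and immediately followed by a discussion of its consequences, with no argument given. So there is no ``paper's own proof'' to compare against.

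That said, your proposal is the right one and matches the argument in the cited reference. The key structural point you identified is exactly correct: since $\boldsymbol{\beta}_{\alpha,1}^{\infty}$ is the $Q_{\alpha}$-minimizer among interpolators and $\boldsymbol{\beta}_{\ell_1}^{*}$, $\boldsymbol{\beta}_{\ell_2}^{*}$ are both feasible, you get $Q_{\alpha}(\boldsymbol{\beta}_{\alpha,1}^{\infty}) \le Q_{\alpha}(\boldsymbol{\beta}_{\ell_p}^{*})$ for free, and the entire content is in the scalar sandwich on $q$. Your computations $q''(z) = (4+z^{2})^{-1/2} \le 1/2$ (giving the global upper bound $q(z) \le z^{2}/4$) and $q'(z) = \sinh^{-1}(z/2) \sim \log|z|$ (giving the $|z|\log|z|$ behavior) are the correct ingredients. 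Your diagnosis of why two separate conditions on $\alpha$ appear in the rich regime---one to control a single large coordinate, one to control the aggregate over $d$ coordinates---is also accurate, and you are right that the kernel-regime half is the routine part.
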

The sparsity inducing bias is fundamentally different from the kernel regime. Consider the following example of sparse regression 
\begin{align}
    y_{i} = \langle \boldsymbol{\beta}^{*}, \mathbf{x}_{i} \rangle+\gamma \quad \mbox{where} \quad \gamma \sim \mathcal{N}(0,0.01)
\end{align}
where $d=1000; \norm{\boldsymbol{\beta}^{*}}_{0} = 5$ and we have $n=100$ samples. A kernel method cannot solve this problem : we can see this on the following 
figure 
\begin{figure}[h]
    \centering
    \includegraphics[scale=0.3]{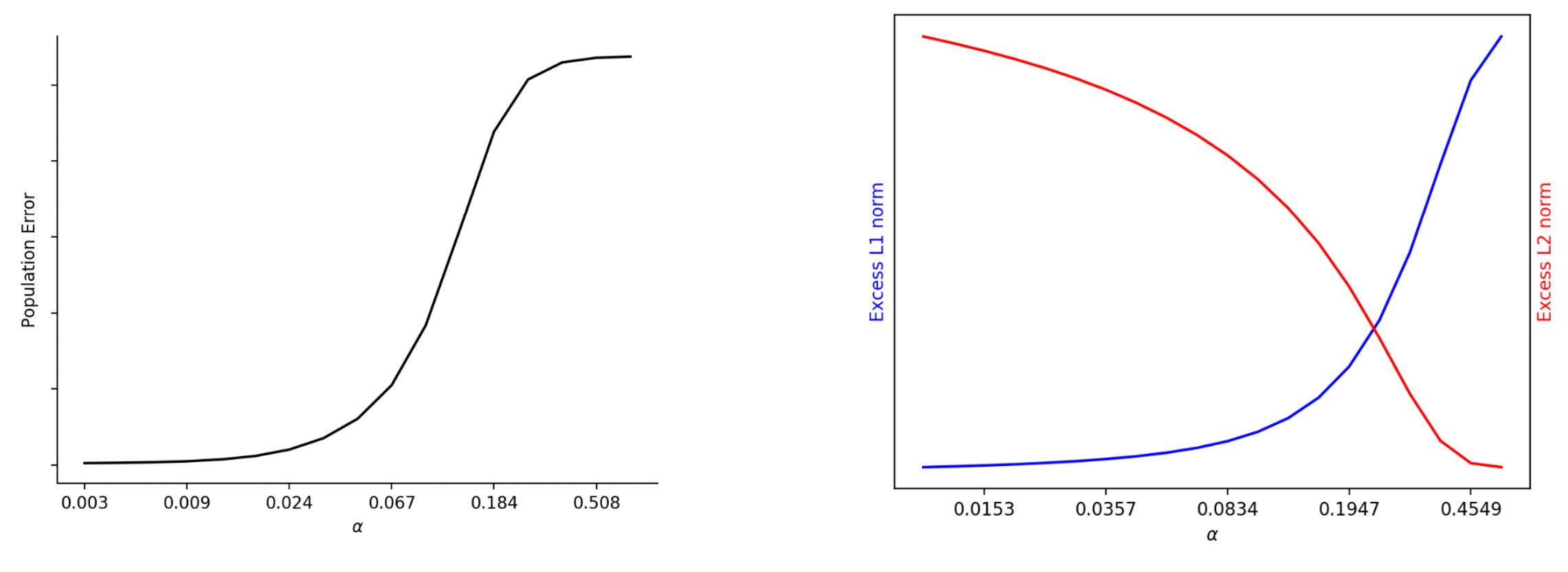}
    \caption{Generalization error of kernel and rich regime on a sparse regression problem with $n << d$}
\end{figure}
For large $\alpha$, we are in the kernel regime and the excess $\ell_{2}$ norm is small, but the population error is large. For small alpha however, 
both the excess $\ell_{1}$ norm and the population error are small. \\
Getting to the $\ell_{1}$ regime is actually quite difficult. Indeed, looking at the thresholds given by Theorem \ref{th:scaling} : $\alpha$ has to be exponentially small. What's the sample complexity as a function 
of $\alpha$, or conversely, how small does $\alpha$ has to be to reach good performance, let's say $L(\boldsymbol{\beta}_{\alpha}(\infty)) \leqslant 0.025$
\begin{figure}[h]
    \centering
    \includegraphics[scale=0.3]{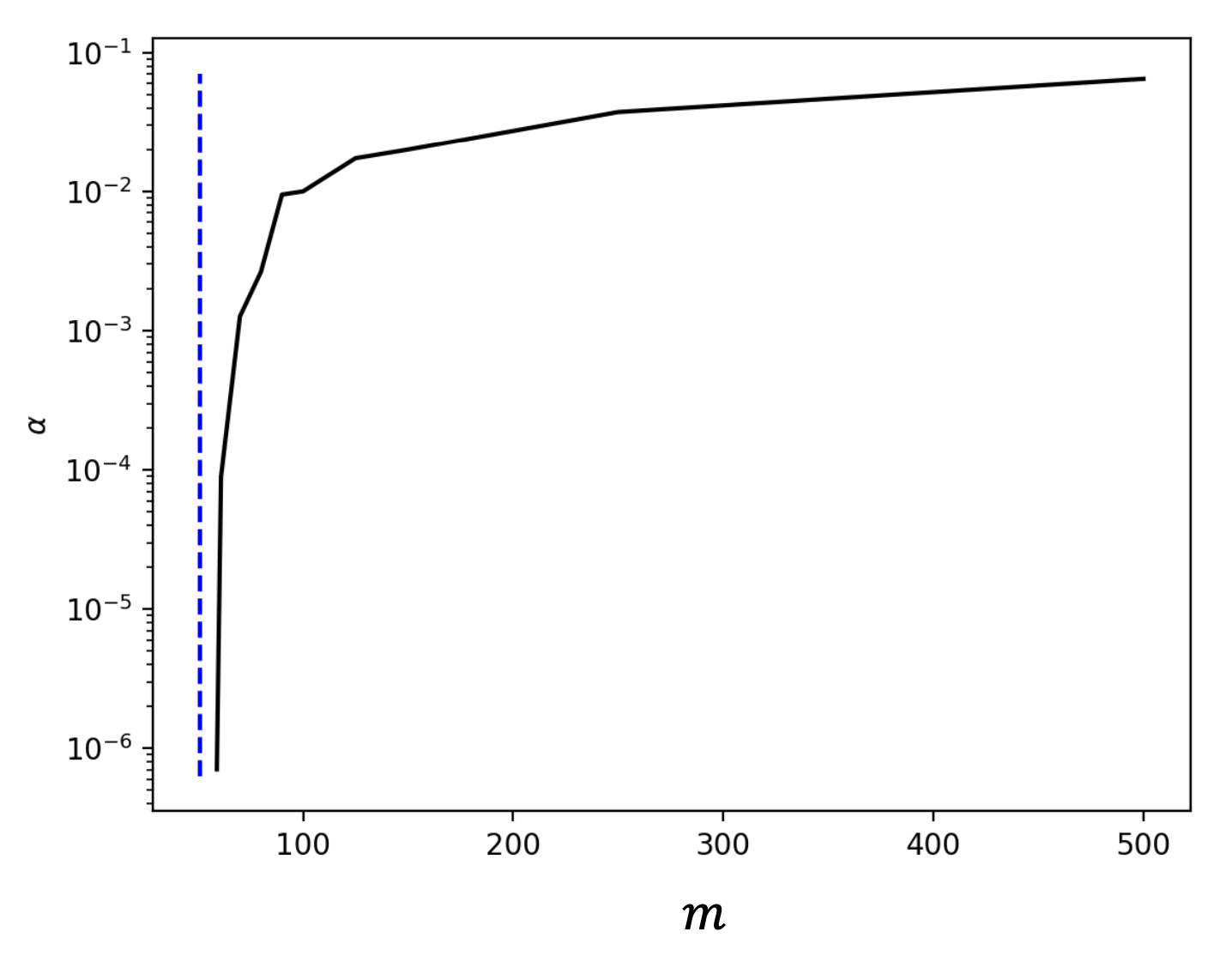}
    \caption{Threshold value of alpha varies with the number of samples}
\end{figure}
Thus, for very low number of samples, doing the exact $\ell_{1}$ is impossible (vertical asymptote), but for a reasonable 
amount of samples, we can get good performance with an approximate $\ell_{1}$. This concludes the link between the scaling of initialization $\alpha$ to the kernel and rich regimes.

\subsubsection{Studying the dynamics in function space}
Whats do the dynamics look like in function space ? Recall that the function space is parametrized by $\boldsymbol{\beta}$, thus we want to write the solution 
directly as a function of $\boldsymbol{\beta}$ instead of $\mathbf{w}$, where $\boldsymbol{\beta} = F(\mathbf{w})$. Writing the dynamics at the level of $\beta$, we reach
\begin{align}
    \dot{\boldsymbol{\beta}} = \frac{d \boldsymbol{\beta}}{d\mathbf{w}}\dot{\mathbf{w}} &= \nabla F(\mathbf{w}(t))^{\top}\dot{\mathbf{w}} \\
    &=\nabla F(\mathbf{w}(t))^{\top}\left(-\nabla_{\mathbf{w}}L(\mathbf{w}(t))\right),
\end{align}
where $\nabla F(\mathbf{w}(t)) \in \mathbb{R}^{p\times p}$ is the Jacobian of $F$. The chain rule then gives
\begin{align}
    \nabla_{\mathbf{w}}L(\mathbf{w}(t)) &= \nabla_{\mathbf{w}}L(\boldsymbol{\beta}(\mathbf{w}(t))) \\
    &=\nabla F(\mathbf{w}(t))\nabla L(\boldsymbol{\beta})
\end{align}
giving the dynamics 
\begin{equation}
    \dot{\boldsymbol{\beta}} = -\nabla F(\mathbf{w}(t))^{\top}\nabla F(\mathbf{w}(t))\nabla L(\boldsymbol{\beta}).
\end{equation}
This corresponds to the previously discussed Riemanian gradient flow (or natural gradient descent), with a metric tensor determined by $\rho = \left(\nabla F(\mathbf{w}(t))^{\top}\nabla F(\mathbf{w}(t))\right)^{-1}$, i.e. 
\begin{equation}
    \dot{\boldsymbol{\beta}} = -\rho^{-1}\nabla L(\boldsymbol{\beta}).
\end{equation}
Thus choosing a certain parametrization $F$ induces a geometry in the search done by the gradient descent, 
governed by the metric tensor $\rho$. But $\rho$ is a function of $\mathbf{w}(t)$: in the case of the model 
discussed previously, 
\begin{align}
    \nabla F(\mathbf{w}(t)) &= \begin{bmatrix}\mbox{diag}(\mathbf{w}_{+})\\ \mbox{diag}(\mathbf{w}_{-})\end{bmatrix} \in \mathbb{R}^{2d \times d} \\
    \rho(\mathbf{w}(t)) &= \mbox{diag}\left(\mathbf{w}_{+}^{2}+\mathbf{w}_{-}^{2}\right)^{-1}.
\end{align}
We would now like to rewrite this entirely as functions of $\boldsymbol{\beta}$, i.e. write 
\begin{equation}
    \dot{\boldsymbol{\beta}} = - \rho(\boldsymbol{\beta})\nabla L(\boldsymbol{\beta})
\end{equation}
This is problem dependent, and possible here. Recall the dynamics on $\mathbf{w}_{+}$ and $\mathbf{w}_{-}$ from the previous section, we have 
\begin{align}
    \frac{d}{dt}\left(\mathbf{w}_{+}\odot \mathbf{w}_{-}\right) = -2\mathbf{X}^{\top}\mathbf{r}(t)\left(\mathbf{w}_{+}\odot \mathbf{w}_{-}-\mathbf{w}_{-}\odot \mathbf{w}_{+}\right) = 0
\end{align}
We can thus evaluate this quantity at $t=0$ which gives 
\begin{equation}
    \label{eq:e1}
    \forall t, \thickspace \mathbf{w}_{+}\odot \mathbf{w}_{-} = \alpha^{2}1_{d}
\end{equation}
Note that this also appears immediately by considering Eq.\eqref{eq:exp_w1}-\eqref{eq:exp_w2} along with the fact that $\mathbf{w}_{+}(0)=\mathbf{w}_{-}(0) = \alpha 1_{d}$. Furthermore, 
by definition of $\boldsymbol{\beta}(t)$
\begin{equation}
    \label{eq:e2}
    \boldsymbol{\beta}(t) = \mathbf{w}_{+}^{2}-\mathbf{w}_{-}^{2},
\end{equation}
which leads to an element-wise quadratic equation that we can solve. Since the equations are the same for each coordinate, we drop the coordinate index. Squaring both sides of Eq.\eqref{eq:e1} and replacing $w_{+}^{2}(t)$ with $\beta(t)+w_{-}^{2}(t)$ using Eq.\eqref{eq:e2}, we obtain 
\begin{equation}
    w_{-}^{4}(t)-\beta w_{-}^{2}(t)+\alpha^{4}=0.
\end{equation}
This is a quadratic equation in $w_{-}^{2}$ whose positive solution reads
\begin{equation}
    w_{-}^{2} = \frac{-\beta+\sqrt{\beta^{2}+4\alpha^{4}}}{2}.
\end{equation}
$w_{+}^{2}$ is obtained in similar fashion, leading to 
\begin{equation}
    w_{+}^{2} = \frac{\beta+\sqrt{\beta^{2}+4\alpha^{4}}}{2}.
\end{equation}
This leads to the following expression for the metric tensor $\rho$ and the corresponding dynamics
\begin{align}
    \rho^{-1} &= \mbox{diag}\left(\sqrt{\beta^{2}+4\alpha^{4}}\right)^{-1} \\
    \dot{\boldsymbol{\beta}} &= -\mbox{diag}\left(\sqrt{\beta^{2}+4\alpha^{4}}\right)^{-1} \odot \nabla L(\boldsymbol{\beta})
\end{align}
We can recover the previously discussed phenomenology from this equation. For large $\alpha$, the $\beta^{2}$ term is negligible and we recover standard gradient flow dynamics with $\ell_{2}$ geometry in the function space. If $\alpha =0$, the scaling in front of the gradient is 
proportional to the absolute value of $\beta$. Thus higher absolute value coefficients will decay faster, which will promote sparsity.
Now, does this metric tensor correspond to a Hessian map defining a mirror descent? To check this we need to solve $\rho = \nabla^{2} \Psi$ where $\Psi$ is the potential defining the Bregman distance used for mirror descent.
In general, a metric tensor is not a Hessian map, but here it is the case, mostly thanks to the diagonal structure. We can then simply integrate each element on the diagonal twice.
Performing this double integral yields the following potential 
\begin{equation}
    \Psi(\alpha,\beta) = \alpha^{2}\sum_{i=1}^{d}\left(\frac{\beta}{2\alpha^{2}}\mbox{sinh}^{-1}\left(\frac{\beta}{2\alpha^{2}}\right)-\sqrt{4+\frac{\beta^{2}}{\alpha^{4}}}\right)
\end{equation}
Up to a constant, this is the same potential as the one implicitly being minimized by the gradient descent, as established at Eq.\ref{eq:potential_1}.
\subsubsection{Comparing explicit and implicit regularization}
Is what we have established in the previous section equivalent to explicit regularization using the $\ell_{2}$ norm in parameter space?
\begin{align}
    \boldsymbol{\beta}_{\alpha,\mathbf{w}_{0}}^{R} = F\left(\argmin_{\mathbf{w}}\norm{\mathbf{w}-\alpha\mathbf{w}_{0}}_{2}^{2} \thickspace \mbox{s.t.} \thickspace L(\mathbf{w})=0\right)\\
     = \argmin_{\boldsymbol{\beta}} R_{\alpha,\mathbf{w}_{0}}(\boldsymbol{\beta}) \thickspace \mbox{s.t.} \mathbf{X}\boldsymbol{\beta} = \mathbf{y} \\
     \mbox{where} \quad R_{\alpha,\mathbf{w}_{0}} = \min_{\mathbf{w}} \norm{\mathbf{w}-\alpha\mathbf{w}_{0}}_{2}^{2} \thickspace \mbox{s.t.} \thickspace F(\mathbf{w})=\boldsymbol{\beta}.
\end{align}
Using a similar analysis as before, we study the optimization problem in parameter space over $\boldsymbol{\beta}$ which is equivalent to the optimization problem 
in weight space over $\mathbf{w}$ where we use explicit $\ell_{2}$ regularization. For standard linear regression this is a classical result, gradient descent converges to the min 
$\ell_{2}$ norm solution (in that case $\boldsymbol{\beta} = \mathbf{w}$). When the initialization is $\mathbf{w}_{0} = \mathbf{1}$, one can determine an analytical expression for 
$R_{\alpha,\mathbf{w}_{0}}$:
\begin{equation}
    R_{\alpha,\mathbf{1}}(\boldsymbol{\beta}) = \sum_{i}r(\boldsymbol{\beta}_{i}/\alpha^{2})
\end{equation}
where $r(z)$ is the unique real root of $p_{z}(u) = u^{4}-6u^{3}+(12-2z^{2})u^{2}-(8+10z^{2})u+z^{2}+z^{4}$. The next figure shows a plot of $r(z)$ next to $q(z)$ obtained from the 
implicit bias analysis. 
\begin{figure}[h]
    \centering
    \includegraphics[scale=0.4]{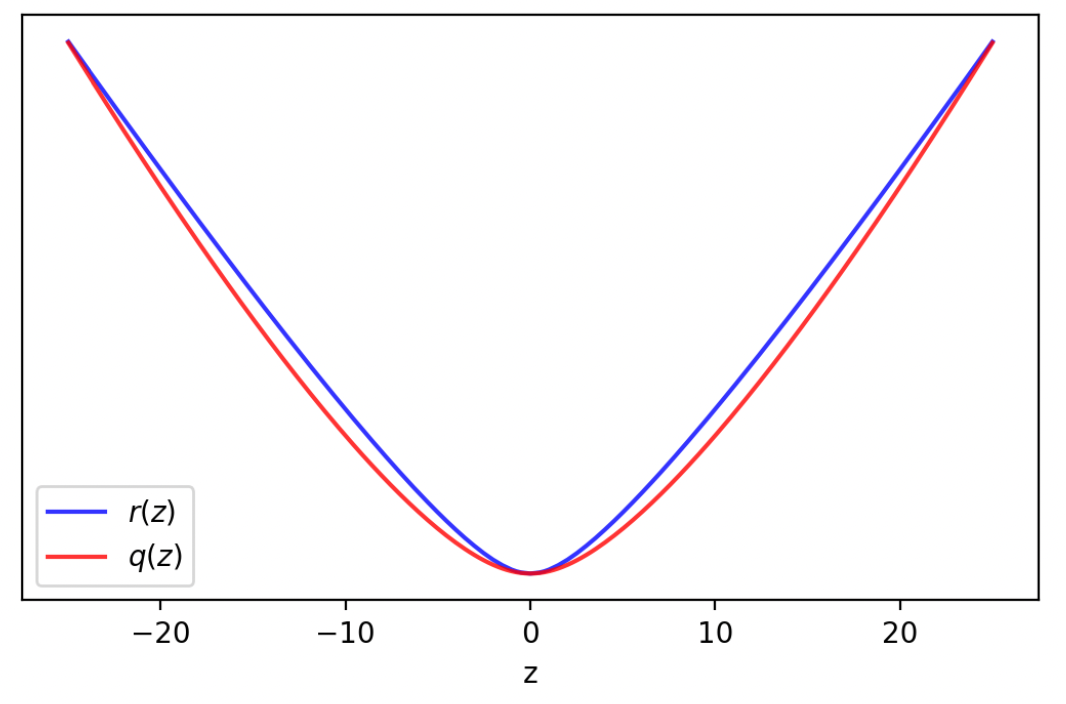}
    \caption{Comparing explicit and implicit regularization}
\end{figure}
The functions are very close to one another, even if a more refined analysis shows that the rich regime can be reached 
with a polynomial scale of $\alpha$ with $r$ instead of an exponential one with $q$. See the discussion in \cite{woodworth2020kernel} for more detail.
\subsection{The effect of width}
For now we have studied the effect of the scale of the initialization on the regime in which the dynamics operate. What is the effect of the width ?
To find out, consider now that the model we want to learn is the function
\begin{align}
f((\mathbf{U},\mathbf{V}),\mathbf{x}) = \sum_{i=1,..,d,j=1,..,k}u_{i,j}v_{i,j}x[i] = \langle \mathbf{U}\mathbf{V}^{\top},\mbox{diag}(\mathbf{x}) \rangle
\end{align}
where $\mathbf{U}, \mathbf{V} \in \mathbb{R}^{d \times k}$, and we learn the model by minimizing 
\begin{align}
    L(\mathbf{U},\mathbf{V}) = \sum_{n=1}^{N}\left(\langle \mathbf{X}_{n},\mathbf{M}_{\mathbf{U},\mathbf{V}}\rangle -y_{n}\right)^{2} = \tilde{L}(\mathbf{M}_{\mathbf{U},\mathbf{V}})
\end{align}
using gradient flow, and we defined the map $M_{\mathbf{U},\mathbf{V}} = F(\mathbf{U},\mathbf{V}) = \mathbf{U}\mathbf{V}^{\top}$ in the notations of the previous section.
This model can be considered as an extension of the linear model from above over matrix valued observations, with an additional 
width parameter $k$, i.e. a matrix factorization problem or a wide parallel linear network.
\begin{figure}[h]
    \centering
    \includegraphics[scale=0.4]{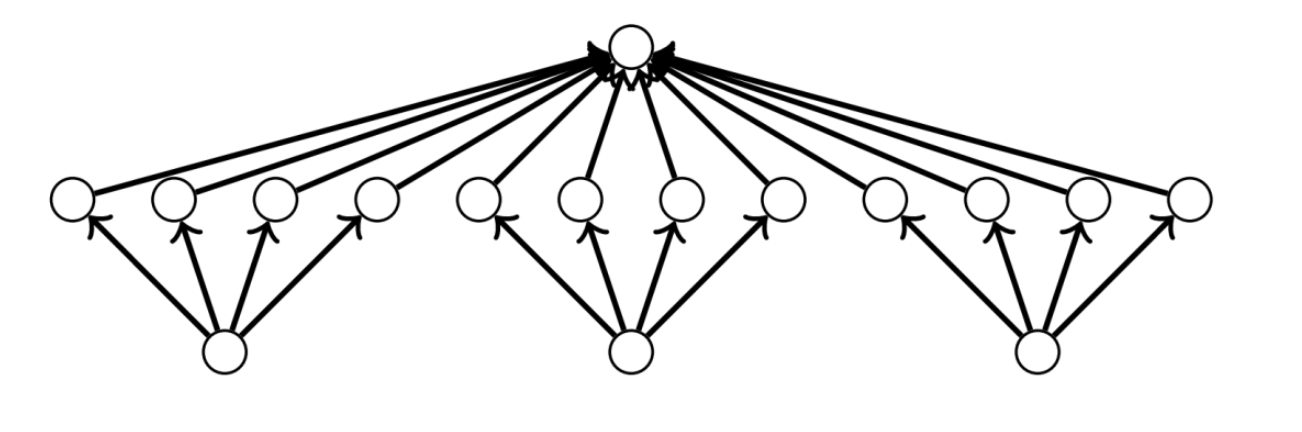}
    \caption{Wide parallel linear network}
\end{figure}
The goal is to study the combined effect of $\alpha$ and $k$ on the learning regime.
Since the number of parameters grows with the width $k$, we capture the scale of initialization with the parameter 
\begin{equation}
   \frac{1}{d}\norm{\mathbf{M}_{\mathbf{U},\mathbf{V}}}_{F}
\end{equation}
We will see that $\mathbf{M}_{\mathbf{U},\mathbf{V}}$ can be in the kernel regime even if $\sigma$ goes to $0$, depending on the relative scaling with $k$.
In the symmetric case where 
$\mathbf{M}_{\mathbf{W}} = \mathbf{W}\mathbf{W}^{\top}$, the gradient flow reads
\begin{equation}
    \dot{\mathbf{M}}_{\mathbf{W}(t)} = \nabla \tilde{L}(\mathbf{M}_{\mathbf{W}(t)})\mathbf{M}_{\mathbf{W}(t)}+\mathbf{M}_{\mathbf{W}(t)}\nabla\tilde{L}(\mathbf{M}_{\mathbf{W}(t)})
\end{equation}
thus the entire dynamics is described by $\mathbf{M}_{\mathbf{W}\mathbf{W}^{\top}}$. In the asymmetric case $\mathbf{M}_{\bU,\bV}$ this is not true. We may then 
consider the following \emph{lifted} problem defined by 
\begin{equation}
    \bar{\bM}_{\bU,\bV} = \begin{bmatrix}\bU\bU^{\top}&\bM_{\bU,\bV} \\ \bM_{\bU,\bV} & \bV \bV^{\top} \end{bmatrix}
\end{equation}
and the corresponding lifted datapoints $\bar{\bX}_{n} = \frac{1}{2}\begin{bmatrix}0& \bX_{n} \\ \bX_{n}^{\top} & 0 \end{bmatrix}$, where we consider that the datapoints are 
matrices in $\mathbb{R}^{d \times d}$, not necessarily diagonal. The implemented function is 
\begin{equation}
    \bar{f}((\bU,\bV),\bar{\bX}) = \langle \bar{\bM}_{\bU,\bV},\bar{\bX} \rangle
\end{equation}
the output of which is the same as the original model but now $\bar{\bM}_{\bU,\bV}$ is the relevant matrix to study the problem.
Assume that $\bU(0),\bV(0)$ are initialized with $\mathcal{N}(0,\sigma^{2} = \frac{\alpha^{2}}{\sqrt{k}})$. This way $\mbox{Var}\left[\mbox{diag}(\bU\bV^{\top})[i]\right] = \alpha^{2}$. In the case where the 
measurements commute, the following theorem is proven in \cite{woodworth2020kernel} (we note that the definition of the scaling parameters are different in \cite{woodworth2020kernel}, but ultimately the statements are equivalent)
 \begin{theorem}
    Let $k \to \infty$, $\alpha(k) \to 0$ and $\mu := \lim_{k \to \infty} \alpha^{4}\sqrt{k} = \sigma(k)\sqrt{k}$ and suppose that 
    $\mathbf{X}_{1},...,\mathbf{X}_{n}$ commute. If $\mathbf{M}_{\bU,\bV}(t)$ converges to a zero error solution $\bM_{\bU,\bV}^{*}$, then 
    \begin{equation}
        \bM_{\bU,\bV}^{*} = \argmin_{\mathbf{M}} Q_{\mu}(\mbox{spectrum}(\bM)) \thickspace \mbox{s.t.} \thickspace \mathbf{L}(\bM) = 0
    \end{equation}
    where $Q_{\mu}$ is the same function as before, now applied to the spectrum of $\mathbf{M}$.
 \end{theorem}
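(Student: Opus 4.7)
The plan is to reduce the asymmetric matrix factorization problem to a diagonal (scalar) problem by exploiting the commuting measurement structure, and then apply the implicit bias analysis already carried out for depth-$2$ linear diagonal networks. I would proceed in four stages.

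First, I would symmetrize via the lifting $\bar{\bM}_{\bU,\bV}$ and $\bar{\bX}_n$ already introduced in the excerpt, so that the problem reads $\bar{\bM} = \bar{\bW}\bar{\bW}^\top$ with $\bar{\bW} = (\bU^\top, \bV^\top)^\top \in \mathbb{R}^{2d\times k}$ and the loss only depends on $\bar{\bM}$. In this symmetric form the gradient flow becomes
\begin{equation}
\dot{\bar{\bM}} \;=\; -\nabla\tilde L(\bar{\bM})\,\bar{\bM} \;-\; \bar{\bM}\,\nabla\tilde L(\bar{\bM}),
\end{equation}
and $\nabla\tilde L(\bar{\bM})$ is a linear combination of the measurement matrices $\bar{\bX}_n$. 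Since all the $\bX_n$ commute, so do the lifted $\bar{\bX}_n$, and hence they share a common orthonormal eigenbasis $\{\bar\bu_i\}$ that is independent of $t$.

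Second, I would use this common eigenbasis to diagonalize the dynamics. Writing $\bar{\bM}(t) = \sum_i \lambda_i(t) \bar\bu_i \bar\bu_i^\top$ and projecting the gradient flow onto this basis, the commuting measurements imply that $\nabla\tilde L(\bar{\bM})$ is also diagonal in this basis, so the off-diagonal components of $\bar{\bM}(t)$ stay at their initial (vanishing) values in the $k\to\infty$ limit. The evolution thus collapses to $d$ (or $2d$) decoupled scalar ODEs, one per eigenvalue $\lambda_i$, which structurally coincide with the scalar ODE that arises in the $2$-layer diagonal linear network with signed parameterization $\beta = w_+^2 - w_-^2$. The key small technical step here is to check that, with the Gaussian initialization at scale $\sigma^2 = \alpha^2/\sqrt{k}$, the initial lifted matrix $\bar{\bM}(0)$ is approximately isotropic with spectrum concentrated at scale $\alpha^2$, and that this concentration survives the limit.

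Third, I would transport the implicit-bias result from Section~5.3 to each eigenvalue channel. For each $i$, the trajectory $\lambda_i(t)$ satisfies (to leading order in $k$) the same integral identity
\begin{equation}
\lambda_i(t) \;=\; 2\mu\,\sinh\!\bigl(-4\,x_i^\top \bs(t)\bigr) \;+\; o(1),
\end{equation}
with $\mu = \lim_k \alpha^4\sqrt{k}$ playing the role that $\alpha^2$ played in the diagonal case, because this is precisely the scale at which the entries of $\bU\bV^\top$ concentrate after width averaging. Matching KKT conditions for the constrained problem $\min_{\bM} Q_\mu(\mathrm{spec}(\bM))$ subject to $\tilde L(\bM)=0$ as in Section~5.3 then identifies the limiting solution.

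The main obstacle, and where I would spend the most care, is the \emph{uniformity in time} of the infinite-width reduction: one must show that in the joint limit $k\to\infty$, $\alpha\to 0$ with $\alpha^4\sqrt{k}\to\mu$, the off-diagonal blocks of $\bar{\bM}(t)$ in the common eigenbasis remain negligible \emph{throughout} the trajectory, not merely at initialization, and that $\bU(0)\bU(0)^\top, \bV(0)\bV(0)^\top$ concentrate to $\mu\, I$ sufficiently fast so that the scalar ODE reduction is valid up to time $\infty$. This is the step that genuinely uses both the commuting assumption (to kill basis rotation) and the specific scaling $\alpha^4\sqrt{k}\to\mu$ (to make the effective initialization scale of each eigenvalue channel converge to $\mu$ rather than $0$ or $\infty$). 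Once that control is in place, the rest follows by applying the scalar implicit-bias calculation eigen-channel by eigen-channel.
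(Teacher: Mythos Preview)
The paper does not actually prove this theorem: it is stated with the attribution ``the following theorem is proven in \cite{woodworth2020kernel}'' and no argument is given beyond the preceding setup (the lifting $\bar{\bM}_{\bU,\bV}$, the symmetric gradient flow, and the initialization scaling $\sigma^2=\alpha^2/\sqrt{k}$). So there is no in-paper proof to compare your proposal against.

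That said, your outline is consistent with the ingredients the paper does provide. The lifting to a symmetric problem, the observation that commuting $\bX_n$ share a common eigenbasis so the dynamics diagonalize, and the reduction of each eigenvalue channel to the scalar $\beta=w_+^2-w_-^2$ analysis with effective scale $\mu=\lim_k \alpha^4\sqrt{k}$ are exactly the pieces the surrounding text sets up before deferring to the reference. Your identification of the uniform-in-time control of the off-diagonal blocks and the concentration of $\bU(0)\bU(0)^\top$, $\bV(0)\bV(0)^\top$ as the genuine technical work is accurate; this is indeed where the commuting hypothesis and the precise scaling are used. One small correction: in the lifted problem the common eigenbasis of the $\bar{\bX}_n$ need not diagonalize $\bar{\bM}(0)$ even approximately unless you first argue concentration of the random initialization toward a multiple of the identity, so the order of your second and ``small technical'' steps should be reversed---you need the isotropy of $\bar{\bM}(0)$ \emph{before} you can claim the dynamics stay diagonal in that basis.
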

 We see that the parameter governing whether or not the function $q$ behaves like a square or an absolute value is $\mu$, which involves both 
 the scale of initialization and the width of the problem :  
\begin{itemize}
    \item if $\alpha  = o(1/k^{1/4})$, i.e. $\sigma = o(1/\sqrt{k})$, we have an $\ell_{1}$ implicit bias and rich regime, 
    \item if $\alpha = O(1/k^{1/4})$, i.e. $\sigma = O(1/\sqrt{k})$, we have an $\ell_{2}$ implicit bias and kernel regime,
    \item the scaling $\sqrt{k}\alpha^{2} \to 0$ leads to the kernel regime, even if $\norm{\boldsymbol{\beta}(0)} \simeq \alpha^{2} \to 0$
\end{itemize}
 \begin{figure}[h]
    \centering
    \includegraphics[scale=0.4]{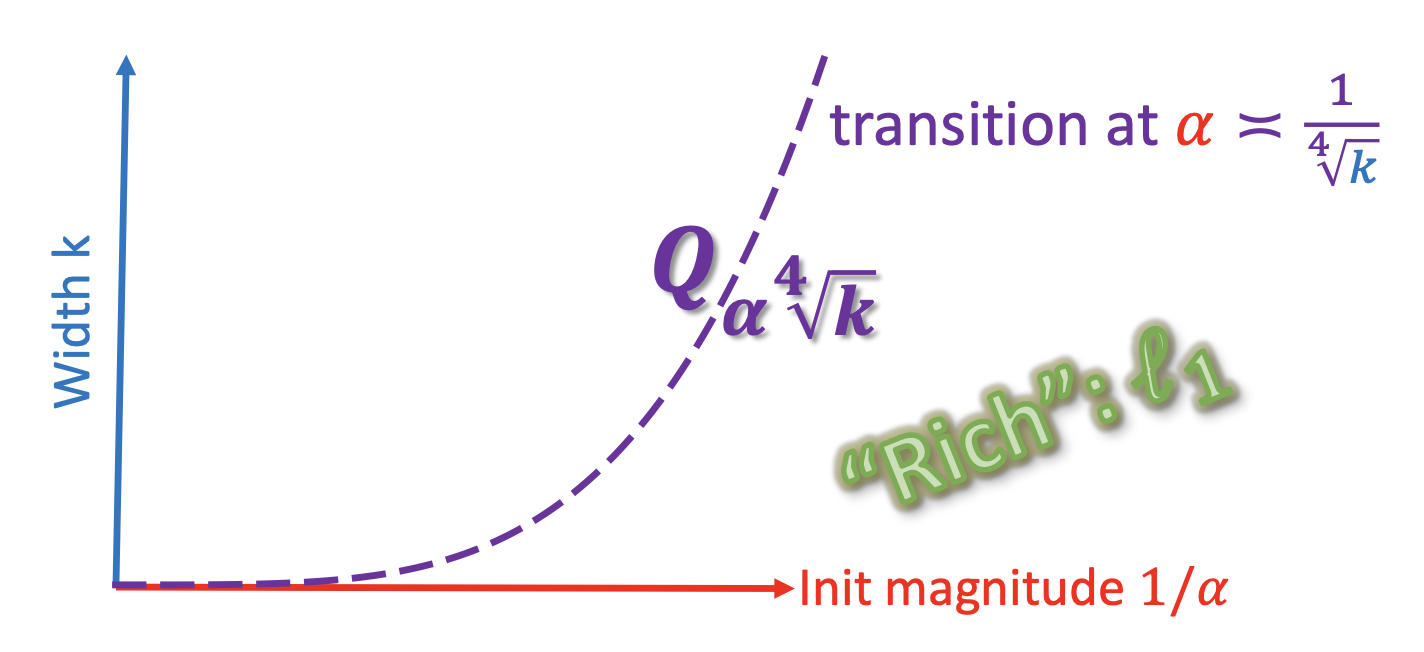}
    \caption{Rich and kernel regime in the matrix factorization problem}
\end{figure}

\subsection{Deep diagonal networks}
We now turn to the study of the effect of depth on the learning regime. To do so, we consider a deep variant of the 
model introduced above, a depth $D$ diagonal linear network :
\begin{equation}
    \boldsymbol{\beta}(t) = \bw_{+}(t)^{D}-\bw_{-}(t)^{D} \quad \mbox{and} \quad f_{D}(\bw,\bx) = \langle \bw_{+}(t)^{D}-\bw_{-}(t)^{D},\bx \rangle
\end{equation}
\begin{figure}[h]
    \centering
    \includegraphics[scale=0.4]{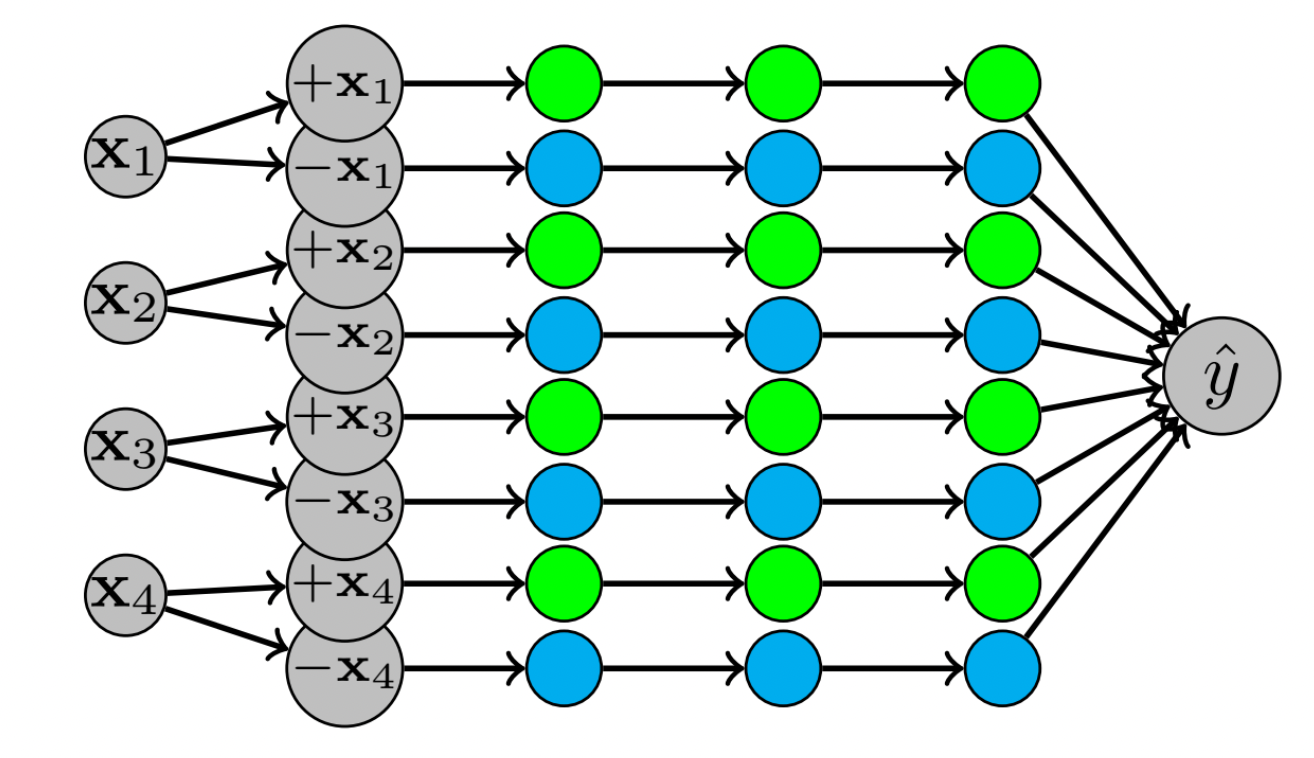}
    \caption{Deep diagonal linear network}
\end{figure}
We assume that gradient flow is initialized with $\bw_{+}(0) = \bw_{-}(0) = \alpha \mathbf{1}$, and define the residual at each time step $\mathbf{r}(t) = \bX\boldsymbol{\beta}(t)-\by$. Writing gradient flow on this model, with the square loss, reads 
\begin{align}
    \dot{\bw}_{+} = -\frac{dL}{d\bw_{+}} = -D\bX^{\top}\mathbf{r}(t)\odot \bw_{+}^{D-1} \\
    \dot{\bw}_{-} = -\frac{dL}{d\bw_{-}} = D\bX^{\top}\mathbf{r}(t)\odot \bw_{+}^{D-1} 
\end{align}
which in turn integrates to 
\begin{align}
    \bw_{+} = \left(\alpha^{2-D}+D(D-2)\bX^{\top}\int_{0}^{t}\mathbf{r}(t)dt\right)^{-\frac{1}{D-2}} \\
    \bw_{-} = -\left(\alpha^{2-D}+D(D-2)\bX^{\top}\int_{0}^{t}\mathbf{r}(t)dt\right)^{-\frac{1}{D-2}},
\end{align}
and 
\begin{align}
    \beeta(t) = \alpha^{D}\left(1+\alpha^{D-2}D(D-2)\bX^{\top}\int_{0}^{t}\mathbf{r}(t)dt\right)^{-\frac{D}{D-2}} \\
    \hspace{2cm}-\alpha^{D}\left(1+\alpha^{D-2}D(D-2)\bX^{\top}\int_{0}^{t}\mathbf{r}(t)dt\right)^{-\frac{D}{D-2}}
\end{align}
Letting $\mathbf{s} = \int_{0}^{\infty}\mathbf{r}(\tau)d\tau$ be the integrated residual and assuming a zero error solution is achieved (global convergence of the gradient method), we may write 
\begin{equation}
    \boldsymbol{\beta}(\infty) = \alpha^{D}h_{D}\left(\mathbf{X}^{\top}\mathbf{s}\right) \thickspace \mbox{and} \thickspace \bX\beeta(\infty) = \by
\end{equation}
where $h_{D} = \alpha^{D}\left(1+\alpha^{D-2}D(D-2)z\right)^{-\frac{D}{D-2}}-\alpha^{D}\left(1+\alpha^{D-2}D(D-2)z\right)^{-\frac{D}{D-2}}$ (we note that the chosen scaling is slightly different in \cite{woodworth2020kernel}, but ultimately the statements are equivalent). Recall that we are searching for an equivalent problem of the form 
\begin{align}
    \beeta^{*} \in \inf_{\beeta} Q(\beeta) \thickspace \mbox{s.t.} \thickspace \bX\beeta = \by,
\end{align}
with the corresponding Lagrangian
\begin{equation}
    \mathcal{L}(\beeta,\boldsymbol{\nu}) = Q(\beeta)+\bnu^{\top}\left(\bX\beeta-\by\right),
\end{equation}
for which the KKT optimality conditions read 
\begin{equation}
    \bX\beeta =\by \quad \mbox{and} \quad \nabla Q(\beeta^{*}) = \bX^{\top}\boldsymbol{\nu}.
\end{equation}
We can then match $\mathbf{s}$ with $\bnu$ and identify the potential $Q$ by matching its gradient with the inverse of $h_{D}$. To do so, define $q_{D} = \int h^{-1}_{D}$
and $Q_{D}(\beeta) = \sum_{i}q_{D}\left(\frac{\beta[i]}{\alpha^{D}}\right)$. It is proven in \cite{woodworth2020kernel} that 
\begin{equation}
    \forall t \quad \norm{\bX^{\top}\int_{0}^{t}r(\tau)d\tau}_{\infty} \leqslant \frac{\alpha^{2-D}}{D(D-2)}
\end{equation}
so the domain of $h_{D}$ is the interval $[-1,1]$ upon which it is monotonically increasing, ensuring the existence of the inverse mapping $h^{-1}_{D}$. Then, for all depth 
$D\geqslant 2$, this equivalent cost induces a rich implicit bias for $\alpha \to 0$, i.e. 
\begin{align}
    \lim_{\alpha \to 0} \beeta^{\infty}_{\alpha,D} = \beeta_{\ell_{1}}^{*} \\
    \lim_{\alpha \to \infty} \beeta^{\infty}_{\alpha,D} = \beeta_{\ell_{2}}^{*}
\end{align}
Although the same behaviour is observed as for the $D=2$ case, there are actually two main differences. The first one 
is that, for $D>2$, explicit regularization does not lead to a sparse bias. Indeed
\begin{equation}
    R_{\alpha}(\beeta) = \min_{\beeta = \bw_{+}^{D}-\bw_{-}^{D}} \norm{\bw-\alpha\mathbf{1}}_{2}^{2}
\end{equation}
leads to 
\begin{equation}
    R_{\alpha}(\beeta) \xrightarrow[]{\alpha \to 0} \norm{\beeta}_{2/D},
\end{equation}
i.e. the $2/D$ quasi-norm, which leads to less sparse solution than the $\ell_{1}$ norm for $D=2$. The second difference concerns the 
intermediate regime, meaning how fast does the scaling at initialization go to zero for the sparsity inducing bias to kick in. We have 
seen above that, for $D=2$, an exponentially small scale in $\alpha$ is required to enter the rich regime. As soon as $D=3$ however, 
only a polynomially decreasing scale in $\alpha$ is required, and the deeper the network the faster we can reach the rich regime when decreasing $\alpha$. This is 
illustrated by plotting the shape of $q_{D}$ for different values of $D$
\begin{figure}[h]
    \centering
    \includegraphics[scale=0.5]{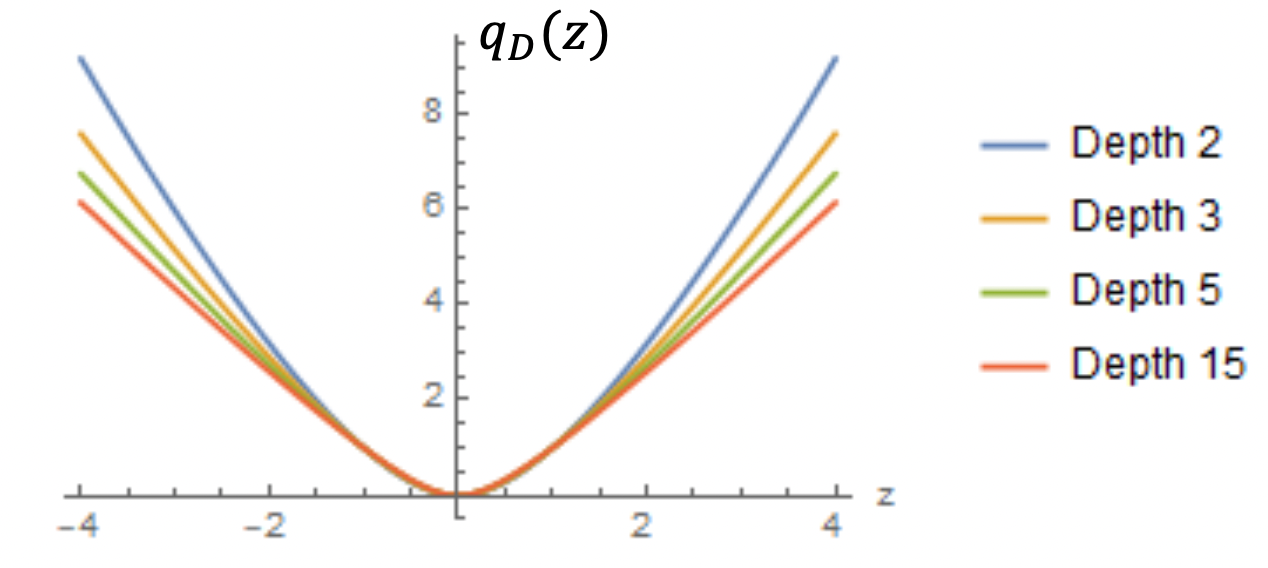}
    \caption{Implicit cost function for deeper networks}
\end{figure}
\subsection{Beyond linear models}
Recall our setup, minimizing a loss function defined over a dataset with gradient descent. The predictor function $h_{\mathbf{w}}(\mathbf{x}) = f(\mathbf{w},\mathbf{x})$ is parametrized 
by $\mathbf{w} \in \mathbb{R}^{p}$. So far we have focused on linear models taking the form 
\begin{equation}
    h_{\mathbf{w}}(\mathbf{x}) = \langle \boldsymbol{\beta}_{\mathbf{w}},\mathbf{x} \rangle
\end{equation}
where $\boldsymbol{\beta}_{\mathbf{w}} = F(\mathbf{w}) \in \mathbb{R}^{d}$.
Now consider the generic case where no linearity assumption is made on the predictor. The function 
$F$ is now defined as a mapping from $\mathbb{R}^{d}$ to $\mathbb{R}^{\mathcal{X}}$. We may write the dynamics 
on $h_{\mathbf{w}}(\mathbf{x})$ in similar fashion as before using functional derivatives :
\begin{equation}
    \dot{h}_{\mathbf{w}}(\mathbf{x}) = - \nabla F^{\top} \nabla F \nabla_{h}L(h)
\end{equation}
where $\nabla_{h}L(h)$ is now an element of $\mathbb{R}^{\mathcal{X}}$ and $\nabla F^{\top} \nabla F$ is a linear map 
from $\mathbb{R}^{\mathcal{X}}$ to $\mathbb{R}^{\mathcal{X}}$, with a kernel taking the form 
\begin{equation}
    \rho^{-1}(\mathbf{x},\mathbf{x}') = \langle \nabla_{\mathbf{w}}f(\mathbf{w},\mathbf{x}), \nabla_{\mathbf{w}}f(\mathbf{w},\mathbf{x}') \rangle
\end{equation}
Thus the dynamics in parameter space is a gradient flow according to the metric tensor defined by the tangent kernel at each time step. In the kernel regime, 
this metric tensor is fixed and remains the same as the one at initialization throughout the dynamics, whereas in the generic case it changes at each time step.
\newpage

\section{Lecture 6: Implicit bias with linear functionals and the logistic loss}

Recall that we are trying to understand how the choice of optimization geometry (i.e., what our preferred metric is in local updates) affects where the optimization will lead us. In the previous lecture, we also spoke about geometry of parameters space (usually just Euclidean geometry), but really what mattered was the geometry in function space. Recall that the relationship between parameter and function space is:
\begin{align}
&h_w(x) = f(w, x)\, \text{ or } h_w(x) = \langle \beta_w, x \rangle\, \notag \\
&\text{where } \beta_w = F(w)\,.
\end{align}
We can describe what the geometry of our model looks like in function space based on the Jacobian of the model $F$\,. Now we can do gradient flow with this inverse metric tensor:
$$
\dot{\beta} = - (\nabla F^T \nabla F) \nabla_\beta L_S(\beta)\,.
$$

We also discussed how this relates to explicit $\ell_2$ regularization and showed a setting where it was quite different. The metric tensor really depends on where we are in parameter space, which isn't conducive to studying the dynamics on the function, but in some cases, this can be circumvented.


We can write down the entries of $\rho^{-1}\,:$
$$
\rho^{-1}(x, x') = \langle \nabla_w f(w, x), \nabla_w f(w, x') \rangle\,.
$$
This is the tangent kernel at the position $w\,.$ We are conditioning the dynamics on the position $w$ at the given time. In the kernel regime, the location at which the Jacobian is evaluated doesn't change significantly and so the same kernel matrix governs the dynamics at every step / at all times. 

The simplest example in which we can see non-trivial behavior is this squared parameterization model: $f(w, x) = \langle \beta_w, x \rangle$ with $\beta_w = F(w) = w_+^2 - w_-^2\,,$ or in the deeper case: $\beta_w = F(w) = |w_+|^D - |w_-|^D\,.$ We talked about initializing at $w(0) = \alpha \vec 1\,,$ which gives $\beta(0) = 0\,.$ We saw that for $D=2\,,$ for large $\alpha\,,$ we got the kernel regime, and when we take $\alpha$ to 0, we get this $\ell_1$ regularization. Even when $D \ge 2\,,$ when we took $\alpha \rightarrow 0\,,$ we still get $\ell_1$ regularization, despite explicit $\ell_2$ norm regularization on the parameters not giving $\ell_1$ norm regularization in function space.

In this lecture, we will move away from the least-squares setting and look at logistic regression. We will see that there is an implicit bias of optimization in the classification setting, as well.

\subsection{Problem setting and equivalent reformulation}
Consider a binary classification problem 
where we minimize the logistic loss over a data set using gradient descent 
\begin{equation}
    L_{S}(\mathbf{w}) = \frac{1}{n}\sum_{i=1}^{n}\mbox{loss}\left(\langle \mathbf{w},\mathbf{x}_{i} \rangle, y_{i}\right)
\end{equation}
where $\mathbf{w} \in \mathbb{R}^{d}$, the $y_{i}$ are binary and $\mbox{loss}(\hat{y}_{i},y_{i}) = \mbox{log}(1+\mbox{exp}(-\hat{y}_{i}y_{i}))$, and we minimize this loss with 
\begin{equation}
    \mathbf{w}_{k+1} = \mathbf{w}_{k}-\eta\nabla L_{S}(\mathbf{w}_{k})
\end{equation}
In the overparametrized case, i.e. $d>n$, the data is separable and we may minimize the loss by considering any separating hyperplane and 
taking its norm to infinity (the loss does not have a finite minimizer in this case). Since the optimal solution diverges, we focus on the direction of the optimal solution :
\begin{equation}
    \lim_{k \to \infty} \frac{\mathbf{w}_{k}}{\norm{\mathbf{w}_{k}}_{2}}
\end{equation}
which converges to the max margin separator, i.e. the furthest away (in Euclidian distance) to all the points. This can be 
equivalently rewritten as a convex opitmization problem under inequality constraints 
\begin{align}
    \bw^{*} \in \argmin_{\bw \in \mathbb{R}^{d}} \norm{\mathbf{w}}_{2} \\
    \mbox{s.t.} \quad y_{i}\langle \bw, \bx_{i} \rangle \geqslant 1
\end{align}
The Lagrangian, denoted $\mathcal{L}$, for this problem reads 
\begin{equation}
    \mathcal{L}(\bw,\bnu) = \frac{1}{2}\norm{\bw}_{2}^{2}+\sum_{i=1}^{n}\nu_{i}\left(1-y_{i}\langle \bw_{i},\bx_{i} \rangle \right)
\end{equation}
where $\bnu \succeq 0$. Primal feasibility then requires 
\begin{equation}
    y_{i}\langle \bw_{i},\bx_{i} \rangle \geqslant 1 \quad \mbox{and} \quad \nabla_{\bw} \mathcal{L} = 0
\end{equation}
which implies $\mathbf{w} = \sum_{i}\nu_{i}\bx_{i}$, meaning the separating hyperplane is supported by the data vectors. Complementary slackness then 
indicates that the only active coefficients $\nu_{i}$ that are non-zero are those associated with datapoints verifying $y_{i}\langle \bw,\bx_{i} \rangle = 1$, i.e. where the constraint is active. For any $\mathbf{x}_{i}$ verifying $y_{i}\langle \bw_{i},\bx_{i} \rangle > 1$, the corresponding Lagrange multiplier $\nu_{i}$ will be zero.
\subsection{Gradient flow dynamics}
What does GF look like on this problem? Since we are interested in the interpolating regime, assume that the dynamics converge to 
a small error $\epsilon$. In this regime, we may approximate the logistic loss by its right hand side tail, i.e. 
\begin{equation}
    \forall i \quad \mbox{log}(1+\mbox{exp}(-\hat{y}_{i}y_{i})) \simeq \mbox{exp}^{-y_{i}\hat{y}_{i}}
\end{equation}
The gradient may then be approximated by
\begin{equation}
    -\nabla_{\bw}L_{S}(\bw) = \frac{1}{n}\sum_{i=1}^{n}e^{-\langle \bw_{i},\bx_{i} \rangle}\bx_{i}.
\end{equation}
The formal proof of what follows can be found in \cite{lyu2019gradient,moroshko2020implicit}. Intuitively, 
GF finds a separating direction that will not change much after a certain number of iteration, and increase its norm. We may then write 
\begin{equation}
    \bw_{k} = \bw_{\infty}g(k)+\rho(k)
\end{equation}
where $\rho(k) = o(1)$ (a Theorem from [Soudry Hoffer Srebro 18] actually shows that $g(k) = log(k)$). Replacing in the expression of the gradient leads to 
\begin{equation}
    \nabla_{\bw} L_{S}(\bw) = \frac{1}{n}\sum_{i=1}^{n}e^{-\langle \bw,\bx_{i} \rangle} \bx_{i} \simeq \frac{1}{n} \sum_{i=1}^{n}e^{g(k)\langle \bw_{\infty},\bx_{i} \rangle -O(1)}\bx_{i},
\end{equation}
which is a linear combination of the data points and gives an explicit expression for the $\nu_{i}$ coefficients. Now denote 
by $\gamma$ the margin $\gamma = \min_{i} \langle \bw_{\infty},\bx_{i} \rangle >0$, and define the normalized separator 
\begin{equation}
    \hat{\bw}_{\infty} = \frac{\bw_{\infty}}{\gamma}
\end{equation}
whose norm will remain finite.
Primal feasibility is verified by construction $\langle \bw, \bx_{i} \rangle \geqslant 1$ is satisfied by construction, and we have seen 
that the zero gradient condition on $\bw$ prescribing it as a linear combination of the data points is also verified. We also see that for large
values of $\langle \bw_{\infty}, \bx_{i} \rangle$ the corresponding coefficient $\nu_{i}$ will decrease very fast as $g(k) \to \infty$, leaving the 
main contribution to the lowest values which are the $\bx_{i}$ for which $\langle \bw_{\infty}, \bx_{i} \rangle = 1$. Thus we recover the complementary slackness condition.


\subsection{Comparing the squared, logistic and exponential loss}

For squared loss, we go to minimum distance from initialization, so the initialization is still important. For the logistic loss, however, the initialization doesn't matter at all -- we always go to the max margin solution. This makes sense because given that we diverge anyway, i.e., we go infinitely far from the starting point, it cannot matter where we start. Any finite initialization from far enough away looks like the origin. This is a significant factor that steers differences between squared loss and logistic loss. We could repeat this analysis and show that when you minimize the logistic loss, for any homogenous model, you always go to the minimum $\ell_2$ norm solution. Any network with fixed depth and homogeneous activation. We can show this with basically the same proof. 

\begin{theorem}[\cite{pmlr-v97-nacson19a, lyu2019gradient}]
If $L_S(w) \rightarrow 0$ and the step size is small enough to ensure convergence in direction, then:
$$
w_\infty \propto \text{ first order stationary point of } \arg \min \norm{w}_2 \text{ s.t. } \forall\, i \, y_i f(w, x_i) \ge 1\,.
$$
\end{theorem}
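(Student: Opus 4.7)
The plan is to extend the argument from the linear logistic case in Section 6.2 by replacing each instance of $x_i$ with the (homogeneous) gradient $\nabla_w f(w,x_i)$, and then matching the asymptotic gradient flow direction to the KKT conditions of the stated constrained problem. Throughout, let $D$ be the degree of homogeneity of $f(\cdot,x)$, so that $f(cw,x)=c^D f(w,x)$ and, by Euler's identity, $\nabla_w f(cw,x)=c^{D-1}\nabla_w f(w,x)$.

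First, since $L_S(w)\to 0$ implies $y_i f(w,x_i)\to +\infty$ for every $i$, I can replace the logistic loss by its exponential tail: $\mathrm{loss}'(-y_i f(w,x_i))\simeq e^{-y_i f(w,x_i)}$. Thus, for $t$ large,
\begin{equation}
-\nabla_w L_S(w(t)) \;\simeq\; \frac{1}{n}\sum_{i=1}^n e^{-y_i f(w(t),x_i)}\, y_i\,\nabla_w f(w(t),x_i).
\end{equation}
By the convergence-in-direction hypothesis, I can write $w(t)=g(t)\bar w+\rho(t)$ with $\|\bar w\|_2=1$, $g(t)\to\infty$, and $\|\rho(t)\|_2/g(t)\to 0$. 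Using $D$-homogeneity of $f$ and $(D-1)$-homogeneity of $\nabla_w f$, this yields $y_i f(w(t),x_i)\simeq g(t)^D y_i f(\bar w,x_i)$ and $\nabla_w f(w(t),x_i)\simeq g(t)^{D-1}\nabla_w f(\bar w,x_i)$.

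Next, let $\gamma:=\min_i y_i f(\bar w,x_i)>0$ (positivity follows from separation, which is forced by $L_S\to 0$) and let $S^\star=\{i:y_i f(\bar w,x_i)=\gamma\}$ be the set of ``support vectors.'' Because $g(t)\to\infty$, the exponentials $e^{-g(t)^D y_i f(\bar w,x_i)}$ concentrate on $S^\star$ at an exponentially faster rate than any non-support index. After pulling out the common scalar $g(t)^{D-1}e^{-g(t)^D\gamma}/n$, the remaining direction of $-\nabla_w L_S(w(t))$ converges to
\begin{equation}
\sum_{i\in S^\star}\tilde\nu_i\, y_i\,\nabla_w f(\bar w,x_i),\qquad \tilde\nu_i\geq 0,
\end{equation}
with $\tilde\nu_i=0$ for $i\notin S^\star$. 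Since $\dot w(t)=-\nabla_w L_S(w(t))$ and $w(t)/\|w(t)\|_2\to\bar w$, the velocity must be asymptotically parallel to $\bar w$; hence $\bar w\propto \sum_{i\in S^\star}\nu_i y_i\nabla_w f(\bar w,x_i)$ for some $\nu_i\geq 0$.

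Finally, rescale: set $w^\star:=\bar w/\gamma^{1/D}$. By $D$-homogeneity, $y_i f(w^\star,x_i)=y_i f(\bar w,x_i)/\gamma\geq 1$ with equality precisely on $S^\star$, giving primal feasibility and complementary slackness. Using $(D-1)$-homogeneity of $\nabla_w f$ and absorbing the scalar into the multipliers $\nu_i'\geq 0$, the alignment condition rewrites as $w^\star=\sum_i \nu_i' y_i\nabla_w f(w^\star,x_i)$, which is exactly the stationarity KKT condition of $\min\tfrac12\|w\|_2^2$ subject to $y_i f(w,x_i)\geq 1$. Thus the rescaled limit direction is a first-order stationary (KKT) point of the max-margin problem, as claimed. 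The main obstacle is the rigorous justification of the concentration step and of the ``velocity parallel to position'' argument in the limit: one needs to show that $\rho(t)$ really is lower order in norm than $g(t)\bar w$ in all relevant directions, so that the $o(1)$ corrections in the homogeneity expansions do not destroy the KKT identification. This is where one must invoke the Lyu--Li type smoothed-margin monotonicity (which guarantees both $g(t)\to\infty$ and the sharpening of the direction), and the non-convexity of $f$ is precisely the reason we can only conclude first-order stationarity rather than global optimality of the margin problem.
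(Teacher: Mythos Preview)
The paper does not give a self-contained proof of this theorem; it is stated as a cited result from \cite{pmlr-v97-nacson19a, lyu2019gradient}. The only argument the paper offers is the heuristic in Section~6.2 for the \emph{linear} case $f(w,x)=\langle w,x\rangle$, together with the remark immediately preceding the theorem that ``we can show this with basically the same proof'' for any homogeneous model. Your proposal is precisely that extension: you replace each $x_i$ by $\nabla_w f(w,x_i)$, use $D$-homogeneity to factor out the scale $g(t)$, and then rerun the exponential-tail concentration and KKT-matching argument from Section~6.2. In that sense your approach is exactly what the paper intends, and you correctly identify where the heuristic becomes delicate (the $\rho(t)$ lower-order control and the ``velocity aligned with position'' step), pointing to the Lyu--Li smoothed-margin monotonicity as the missing ingredient for a rigorous proof.
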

The first order stationary points of this objective are exactly those that satisfy the KKT conditions. For convex problems, this implies optimality. Here, we have to be a bit more careful. The objective is convex but $f$ is not in general a convex function of $w$. 
This suggests that the implicit bias is defined by $R_F(h) = \arg \min_{F(w) = h} \norm{w}_2\,,$ i.e., which function $h$ is representable with the smallest possible norm in parameter space. These two problems have the same global minimum but this does not mean that a first-order stationary point of the first problem is a first-order stationary point of the second problem. Relating the two remains largely open.



For squared loss, as the magnitude of initialization goes to infinity, the entire trajectory converges to the kernel trajectory. In particular, then, the implicit bias is the implicit bias of the kernel (i.e., the limit points are the same). For the logistic loss, we can get a similar statement about the infinite scale leading to the kernel regime, but only for finite time. That is, if we fix the amount of time for which we optimize and then take the scale to infinity, then the kernel regime will still be observed. These results are presented formally in \cite{chizat2019lazy}. For the logistic loss, if we change the order of limits for scale of initialization and time of optimization, we reach a first order stationary point of $\arg \min \norm{w}_2$ such that the margin condition is satisfied (formal statement in \cite{lyu2019gradient} and \cite{pmlr-v97-nacson19a}).


Thus, for the logistic loss, the regime we are in is no longer just dependent on $\alpha$, the initialization scale,
but rather on the combination of scale and training loss, which evolves in time. For each optimization accuracy, we can ask at what scale we would enter the kernel regime. The transition occurs at $\epsilon \sim \exp{(-\alpha^2)}$. On the boundary, (under several strong assumptions), the behavior follows the $Q_\alpha$ function with parameter $\frac{\alpha}{\sqrt{\log (1/\epsilon)}}\,.$
Thus, the transition depends on the ordering of $\epsilon, \alpha$ limits. While it is true that we have an asymptotic result, it only kicks in when $\epsilon = 10^{-1700}\,,$ so we do not get $\ell_1$ regularization in practice. 


\begin{figure}[h]
\label{fig:logistic-qalpha-smalleps}
    \centering
    \includegraphics[scale=0.5]{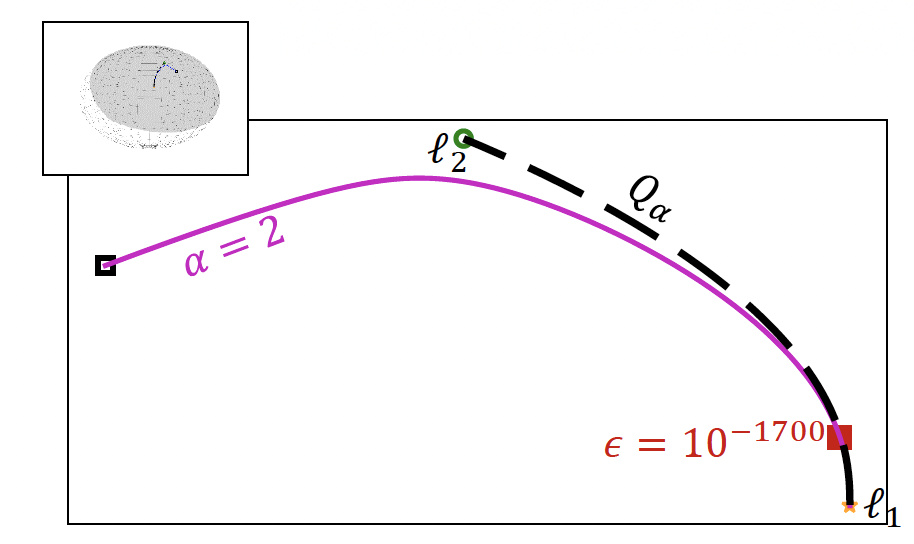}
    \caption{This figure depicts how the predictor behaves with finite but not large initialization scale ($\alpha = 2$). We know that in the long run, we will reach the minimum $\ell_1$ norm solution. When the optimization accuracy is finite, we will traverse the whole $Q_\alpha$ path. The dashed black line is the path of minimum $Q_\alpha$ margin solution: solution with margin 1 that minimizes $Q_\alpha$ for corresponding $\alpha$. As $\alpha$ increases, it seems we converge to this path. (Note that we don't know how to prove this but it seems to hold empirically.) Importantly, while it is true asymptotically in $\epsilon\,$ that minimizing the regularizer in function space corresponds to minimizing the norm of the weights, the value of $\epsilon$ at which it starts to hold is completely impractical. }
\end{figure}

When we consider deeper models, things only get worse: the asymptotic regime is even harder to reach. It is not well-understood yet what is happening here when $\alpha$ is small. Finally, the width of the network also makes a difference in effective initialization scale. 


\paragraph{Other Control Parameters} Other control parameters (aside from initialization scale) include how early we stop training, the shape of the initialization (i.e., relative scale of the parameters), the step size, and the stochasticity. The latter has been studied through the lens of batch size and label noise.

\subsection{Matrix Factorization Setting and Commutativity}
We are still going to look at a linear model in $\beta$, but now $\beta$ is going to be a matrix. This is a standard least squares objective in $\beta$. The main difference is that now, instead of factorizing $\beta$ element-wise, we are going to employ the factorization $\beta = UV^T$. This formulation captures many things: matrix completion, matrix reconstruction from linear measurements, and even multi-task learning. In what follows, 
we will denote $\hat{L}$ the empirical loss that is being minimized.
 

Let us study the implicit bias of gradient flow on the factorization in function space, i.e., in matrix space:
\begin{align}
\dot{U}(t) &= - \nabla_U \hat{L}(UV^T) \\
\dot{V}(t) &= - \nabla_V \hat{L}(UV^T) \\
\Rightarrow \dot{\beta} &= - (\nabla F^T \nabla F) \nabla \hat{L}(\beta) = -(UU^T \nabla \hat{L}(\beta) + \nabla \hat{\beta}VV^T)\,.
\end{align}

Observe that this is a linear transformation of the gradient. We would be interested in writing the transformation in terms of just $\beta$, and this is not possible here, as it turns out. Instead, we introduce an augmented variable: 
$$
W = \begin{bmatrix}
U \\
V
\end{bmatrix} \quad \quad
\tilde{\beta} \coloneqq  WW^T = \begin{bmatrix}
UU^T & UV^T \\
VU^T & VV^T
 \end{bmatrix} = \begin{bmatrix}
UU^T & \beta \\
\beta^T & VV^T
 \end{bmatrix}
$$

This is still a matrix factorization problem in $W$, except it is now a symmetric matrix factorization problem. This corresponds to a minimization problem over positive semidefinite matrices. Namely:
$$
\min_{\tilde{\beta} \succeq 0} \hat{L}(\tilde{\beta}) = \norm{\tilde{\mathcal{X}}(\tilde{\beta}) - y}^2_2\,.
$$
for appropriately defined $\tilde{\mathcal{X}}$. The resulting dynamics are: $\dot{\tilde{\beta}} = - ( \tilde{\beta} \nabla \hat{L}(\tilde{\beta}) + \nabla \hat{L}(\tilde{\beta}) \tilde{\beta}^T)\,.$

What we've shown here is that whenever we have a {\em non-symmetric} matrix factorization problem, it is a special case of a higher-dimensional {\em symmetric} matrix factorization problem. In some sense, the real problem we should be looking at {\em is} this one, since the non-symmetric one hides information about the geometry.  

The local geometry of the space in which we search is given by left and right multiplication by $\tilde\beta\,.$ Now, we want to see if we can identify this as a Hessian map. Do the dynamics stay in a low-dimensional manifold?

As before, where we are depends on the integral of the residual so far. If matrices commute, we can solve the differential equation explicitly with a matrix exponential. The metric tensor is a Hessian map, which directly implies that the dynamcics occur in a low-dimension manifold. In this case, the result is very robust: it doesn't depend on the residuals, nor the loss under which the residuals are computed. It is not robust to step size: a large step along the tangent space could lead to exiting the manifold now that it is curved. If the problem is non-commutative, the order in which we multiply on the left and right matters. $\beta(t)\,$ is then a ``time-ordered exponential,'' and we cannot ignore the ordering of the residuals. Even with just two data points, we can navigate the entire space instead of a low-dimension manifold. An analogy is parallel parking, where with the forward/backward and left/right controls we can somehow move straight to the right.

This non-commutative case is the case we are in in general. This is a well-defined question but one for which the solution is not known.

\section{Perspectives}
We have seen three notions of implicit bias : one related to statistical guarantees of a model, benign overfitting and finally implicit bias related to optimization. The statistical aspect of implicit bias 
is well understood and falls within the framework of statistical learning theory. Leaving benign overfitting aside and focusing on optimization, we have demonstrated a general approach allowing to quantitatively study 
the implicit bias of a given descent algorithm for a chosen architecture. In particular, by identifying the potential that is implicitly minimized for a given 
problem, we are able to determine conditions to obtain an implicit bias akin to a kernel method or a model that is able to do feature selection or feature learning. While this approach is informative 
for the simple models we have considered, determining whether or not they generalize to more realistic neural network architectures remains unclear. Furthermore, it is possible that other forms of implicit bias 
exist, and that they are ultimately repsonsible for the empirical success of deep neural networks. In other words, we do not know if the implicit minimization of an effective potential is sufficient 
to explain most cases.

\quad \\
\quad \\
\quad \\
\textbf{Acknowledgements} \quad \\
\quad \\
These are notes from the lecture of Nathan Srebro given at the summer school \textquotedblleft Statistical Physics \& Machine Learning \textquotedblright ,
that took place in Les Houches School of Physics in France from 4th to 29th July 2022. The school was organized by Florent Krzakala and Lenka Zdeborov\'a from EPFL.
Recordings of the lectures are available on the school's website \url{https://leshouches2022.github.io/}.
\newpage

\bibliographystyle{plain}
\bibliography{references}

\begin{thebibliography}{10}

\bibitem{amari2012differential}
Shun-ichi Amari.
\newblock {\em Differential-geometrical methods in statistics}, volume~28.
\newblock Springer Science \& Business Media, 2012.

\bibitem{arora2009computational}
Sanjeev Arora and Boaz Barak.
\newblock {\em Computational complexity: a modern approach}.
\newblock Cambridge University Press, 2009.

\bibitem{barron1993universal}
Andrew~R Barron.
\newblock Universal approximation bounds for superpositions of a sigmoidal
  function.
\newblock {\em IEEE Transactions on Information theory}, 39(3):930--945, 1993.

\bibitem{bartlett1996valid}
Peter Bartlett.
\newblock For valid generalization the size of the weights is more important
  than the size of the network.
\newblock {\em Advances in neural information processing systems}, 9, 1996.

\bibitem{bartlett2019nearly}
Peter~L Bartlett, Nick Harvey, Christopher Liaw, and Abbas Mehrabian.
\newblock Nearly-tight vc-dimension and pseudodimension bounds for piecewise
  linear neural networks.
\newblock {\em Journal of Machine Learning Research}, 20(63):1--17, 2019.

\bibitem{bartlett2020benign}
Peter~L Bartlett, Philip~M Long, G{\'a}bor Lugosi, and Alexander Tsigler.
\newblock Benign overfitting in linear regression.
\newblock {\em Proceedings of the National Academy of Sciences},
  117(48):30063--30070, 2020.

\bibitem{bauschke2017-relativesmooth}
Heinz~H. Bauschke, J\'{e}r\^{o}me Bolte, and Marc Teboulle.
\newblock A descent lemma beyond lipschitz gradient continuity: First-order
  methods revisited and applications.
\newblock {\em Mathematics of Operations Research}, 42(2):330--348, 2017.

\bibitem{double_descent_belkin_2019}
Mikhail Belkin, Daniel Hsu, Siyuan Ma, and Soumik Mandal.
\newblock Reconciling modern machine-learning practice and the classical
  bias–variance trade-off.
\newblock In {\em Proceedings of the National Academy of Sciences}, 2019.

\bibitem{kernel_learning_belkin_2018}
Mikhail Belkin, Siyuan Ma, and Soumik Mandal.
\newblock To understand deep learning we need to understand kernel learning.
\newblock In {\em International Conference on Machine Learning}, 2018.

\bibitem{bengio2013representation}
Yoshua Bengio, Aaron Courville, and Pascal Vincent.
\newblock Representation learning: A review and new perspectives.
\newblock {\em IEEE transactions on pattern analysis and machine intelligence},
  35(8):1798--1828, 2013.

\bibitem{berg1984harmonic}
Christian Berg, Jens Peter~Reus Christensen, and Paul Ressel.
\newblock {\em Harmonic analysis on semigroups}, volume 100.
\newblock Springer-Verlag New York, 1984.

\bibitem{blum1988training}
Avrim Blum and Ronald Rivest.
\newblock Training a 3-node neural network is np-complete.
\newblock {\em Advances in neural information processing systems}, 1, 1988.

\bibitem{bousquet2003introduction}
Olivier Bousquet, St{\'e}phane Boucheron, and G{\'a}bor Lugosi.
\newblock Introduction to statistical learning theory.
\newblock In {\em Summer school on machine learning}, pages 169--207. Springer,
  2003.

\bibitem{bubeck2015convex}
S{\'e}bastien Bubeck et~al.
\newblock Convex optimization: Algorithms and complexity.
\newblock {\em Foundations and Trends{\textregistered} in Machine Learning},
  8(3-4):231--357, 2015.

\bibitem{implicit_bias_gd_wide_two_layer_2020}
Lenaic Chizat and Francis Bach.
\newblock Implicit bias of gradient descent for wide two-layer neural networks
  trained with the logistic loss.
\newblock In {\em Conference on Learning Theory}, 2020.

\bibitem{chizat2019lazy}
Lenaic Chizat, Edouard Oyallon, and Francis Bach.
\newblock On lazy training in differentiable programming.
\newblock {\em Advances in Neural Information Processing Systems}, 32, 2019.

\bibitem{cybenko1989approximation}
George Cybenko.
\newblock Approximation by superpositions of a sigmoidal function.
\newblock {\em Mathematics of control, signals and systems}, 2(4):303--314,
  1989.

\bibitem{daniely2017sgd}
Amit Daniely.
\newblock Sgd learns the conjugate kernel class of the network.
\newblock {\em Advances in Neural Information Processing Systems}, 30, 2017.

\bibitem{daniely2014average}
Amit Daniely, Nati Linial, and Shai Shalev-Shwartz.
\newblock From average case complexity to improper learning complexity.
\newblock In {\em Proceedings of the forty-sixth annual ACM symposium on Theory
  of computing}, pages 441--448, 2014.

\bibitem{furst1984parity}
Merrick Furst, James~B Saxe, and Michael Sipser.
\newblock Parity, circuits, and the polynomial-time hierarchy.
\newblock {\em Mathematical systems theory}, 17(1):13--27, 1984.

\bibitem{gaynier1995sinusoidal}
RJ~Gaynier and Tom Downs.
\newblock Sinusoidal and monotonic transfer functions: Implications for vc
  dimension.
\newblock {\em Neural networks}, 8(6):901--904, 1995.

\bibitem{implicit_bias_gd_conv_2018}
Suriya Gunasekar, Jason Lee, Daniel Soudry, and Nathan Srebro.
\newblock Implicit bias of gradient descent on linear convolutional networks.
\newblock In {\em Advances in Neural Information Processing Systems}, 2018.

\bibitem{gwbns_2017}
Suriya Gunasekar, Blake Woodworth, Srinadh Bhojanapalli, Behnam Neyshabur, and
  Nathan Srebro.
\newblock Implicit regularization in matrix factorization.
\newblock In {\em Advances in Neural Information Processing Systems}, 2017.

\bibitem{hastie2022surprises}
Trevor Hastie, Andrea Montanari, Saharon Rosset, and Ryan~J Tibshirani.
\newblock Surprises in high-dimensional ridgeless least squares interpolation.
\newblock {\em Annals of statistics}, 50(2):949, 2022.

\bibitem{hornik1989multilayer}
Kurt Hornik, Maxwell Stinchcombe, and Halbert White.
\newblock Multilayer feedforward networks are universal approximators.
\newblock {\em Neural networks}, 2(5):359--366, 1989.

\bibitem{jacot2018neural}
Arthur Jacot, Franck Gabriel, and Cl{\'e}ment Hongler.
\newblock Neural tangent kernel: Convergence and generalization in neural
  networks.
\newblock {\em Advances in neural information processing systems}, 31, 2018.

\bibitem{kearns1994cryptographic}
Michael Kearns and Leslie Valiant.
\newblock Cryptographic limitations on learning boolean formulae and finite
  automata.
\newblock {\em Journal of the ACM (JACM)}, 41(1):67--95, 1994.

\bibitem{lecun2015deep}
Yann LeCun, Yoshua Bengio, and Geoffrey Hinton.
\newblock Deep learning.
\newblock {\em nature}, 521(7553):436--444, 2015.

\bibitem{overp_matrix_sensing_li_ma_zhang_2018}
Yuanzhi Li, Tengyu Ma, and Hongyang Zhang.
\newblock Algorithmic regularization in over-parameterized matrix sensing and
  neural networks with quadratic activations.
\newblock In {\em Conference On Learning Theory}, 2018.

\bibitem{implicit_bias_gd_matrix_factorization_2021}
Zhiyuan Li, Yuping Luo, and Kaifeng Lyu.
\newblock Towards resolving the implicit bias of gradient descent for matrix
  factorization: Greedy low-rank learning.
\newblock In {\em International Conference on Learning Representations}, 2021.

\bibitem{lu2018-relativesmooth}
Haihao Lu, Robert~M. Freund, and Yurii Nesterov.
\newblock Relatively smooth convex optimization by first-order methods, and
  applications.
\newblock {\em SIAM Journal on Optimization}, 28(1):333--354, 2018.

\bibitem{lyu2019gradient}
Kaifeng Lyu and Jian Li.
\newblock Gradient descent maximizes the margin of homogeneous neural networks.
\newblock {\em arXiv preprint arXiv:1906.05890}, 2019.

\bibitem{mcculloch1943logical}
Warren~S McCulloch and Walter Pitts.
\newblock A logical calculus of the ideas immanent in nervous activity.
\newblock {\em The bulletin of mathematical biophysics}, 5:115--133, 1943.

\bibitem{mohri2018foundations}
Mehryar Mohri, Afshin Rostamizadeh, and Ameet Talwalkar.
\newblock {\em Foundations of machine learning}.
\newblock MIT press, 2018.

\bibitem{moroshko2020implicit}
Edward Moroshko, Blake~E Woodworth, Suriya Gunasekar, Jason~D Lee, Nati Srebro,
  and Daniel Soudry.
\newblock Implicit bias in deep linear classification: Initialization scale vs
  training accuracy.
\newblock {\em Advances in neural information processing systems},
  33:22182--22193, 2020.

\bibitem{pmlr-v97-nacson19a}
Mor~Shpigel Nacson, Suriya Gunasekar, Jason Lee, Nathan Srebro, and Daniel
  Soudry.
\newblock Lexicographic and depth-sensitive margins in homogeneous and
  non-homogeneous deep models.
\newblock In Kamalika Chaudhuri and Ruslan Salakhutdinov, editors, {\em
  Proceedings of the 36th International Conference on Machine Learning},
  volume~97 of {\em Proceedings of Machine Learning Research}, pages
  4683--4692. PMLR, 09--15 Jun 2019.

\bibitem{nemirovsky1979problem}
AS~Nemirovsky and DB~Yudin.
\newblock Problem complexity and optimization method efficiency.
\newblock {\em M.: Nauka}, 1979.

\bibitem{nesterov2018lectures}
Yurii Nesterov et~al.
\newblock {\em Lectures on convex optimization}, volume 137.
\newblock Springer, 2018.

\bibitem{path_sgd_2015}
Behnam Neyshabur, Ruslan Salakhutdinov, and Nathan Srebro.
\newblock Path-sgd: Path-normalized optimization in deep neural networks.
\newblock In {\em Advances in Neural Information Processing Systems}, 2015.

\bibitem{function_space_bounded_norm_multivariate_2020}
Greg Ongie, Rebecca Willett, Daniel Soudry, and Nathan Srebro.
\newblock A function space view of bounded norm infinite width relu nets: The
  multivariate case.
\newblock In {\em International Conference on Learning Representations}, 2020.

\bibitem{pinkus1999approximation}
Allan Pinkus.
\newblock Approximation theory of the mlp model in neural networks.
\newblock {\em Acta numerica}, 8:143--195, 1999.

\bibitem{rahimi2007random}
Ali Rahimi and Benjamin Recht.
\newblock Random features for large-scale kernel machines.
\newblock {\em Advances in neural information processing systems}, 20, 2007.

\bibitem{rudi2017generalization}
Alessandro Rudi and Lorenzo Rosasco.
\newblock Generalization properties of learning with random features.
\newblock {\em Advances in neural information processing systems}, 30, 2017.

\bibitem{inf_width_in_function_space_2019}
Pedro Savarese, Itay Evron, Daniel Soudry, and Nathan Srebro.
\newblock How do infinite width bounded norm networks look in function space?
\newblock In {\em Conference On Learning Theory}, 2019.

\bibitem{shalev2014understanding}
Shai Shalev-Shwartz and Shai Ben-David.
\newblock {\em Understanding machine learning: From theory to algorithms}.
\newblock Cambridge university press, 2014.

\bibitem{shalev2010learnability}
Shai Shalev-Shwartz, Ohad Shamir, Nathan Srebro, and Karthik Sridharan.
\newblock Learnability, stability and uniform convergence.
\newblock {\em The Journal of Machine Learning Research}, 11:2635--2670, 2010.

\bibitem{soudry2018implicit}
Daniel Soudry, Elad Hoffer, Mor~Shpigel Nacson, Suriya Gunasekar, and Nathan
  Srebro.
\newblock The implicit bias of gradient descent on separable data.
\newblock {\em Journal of Machine Learning Research}, 19(70):1--57, 2018.

\bibitem{turing1939systems}
Alan~Mathison Turing.
\newblock Systems of logic based on ordinals.
\newblock {\em Proceedings of the London Mathematical Society, Series 2},
  45:161--228, 1939.

\bibitem{vapnik1999nature}
Vladimir Vapnik.
\newblock {\em The nature of statistical learning theory}.
\newblock Springer science \& business media, 1999.

\bibitem{vapnik1974theory}
Vladimir Vapnik and Alexey Chervonenkis.
\newblock Theory of pattern recognition, 1974.

\bibitem{marginal_value_adaptive_grad_2017}
Ashia~C. Wilson, Rebecca Roelofs, Mitchell Stern, Nathan Srebro, and Benjamin
  Recht.
\newblock The marginal value of adaptive gradient methods in machine learning.
\newblock In {\em Advances in Neural Information Processing Systems}, 2017.

\bibitem{wolpert1997no}
David~H Wolpert and William~G Macready.
\newblock No free lunch theorems for optimization.
\newblock {\em IEEE transactions on evolutionary computation}, 1(1):67--82,
  1997.

\bibitem{woodworth2020kernel}
Blake Woodworth, Suriya Gunasekar, Jason~D Lee, Edward Moroshko, Pedro
  Savarese, Itay Golan, Daniel Soudry, and Nathan Srebro.
\newblock Kernel and rich regimes in overparametrized models.
\newblock In {\em Conference on Learning Theory}, pages 3635--3673. PMLR, 2020.

\bibitem{zhang2021understanding}
Chiyuan Zhang, Samy Bengio, Moritz Hardt, Benjamin Recht, and Oriol Vinyals.
\newblock Understanding deep learning (still) requires rethinking
  generalization.
\newblock {\em Communications of the ACM}, 64(3):107--115, 2021.

\end{thebibliography}
%
%
%
%
%

\end{document}